%% file: jud_iclr.tex
\documentclass{article} %
\usepackage{iclr2027_conference,times}

\input{math_commands.tex}

\usepackage{hyperref}
\usepackage{url}

\usepackage{amsmath}
\renewcommand{\eqref}[1]{\textup{(\ref{#1})}}
\usepackage{amsthm}

\usepackage[ruled,vlined]{algorithm2e}
\usepackage[capitalize,noabbrev]{cleveref}
\usepackage{array}
\usepackage{diagbox}
\usepackage{booktabs}
\usepackage{xcolor}
\usepackage{graphicx}
\usepackage{capt-of}
\definecolor{myhighlight}{HTML}{D55E00}

\graphicspath{{figures/}{Jump_up_down/ICLR2027/iclr2027/figures/}}

\usepackage{siunitx}
\usepackage{amssymb} 
\ExplSyntaxOn

\clist_new:N \g_idlm_lwa_preserve_clist
\clist_gset:Nn \g_idlm_lwa_preserve_clist
  {
    va,vb,vc,vd,ve,vf,vg,vh,vi,vj,vk,vl,vm,
    vn,vo,vp,vq,vr,vs,vt,vu,vw,vx,vy,vz,rv
  }

\clist_map_inline:Nn \g_idlm_lwa_preserve_clist
  {
    \cs_gset_eq:cc { g_idlm_lwa_saved_#1 } {#1}
    \cs_undefine:c {#1}
  }

\AddToHook{package/letterswitharrows/after}
  {
    \clist_map_inline:Nn \g_idlm_lwa_preserve_clist
      {
        \cs_gset_eq:cc {#1} { g_idlm_lwa_saved_#1 }
      }

    \cs_undefine:N \ru
    \cs_undefine:N \rd
  }

\ExplSyntaxOff

\usepackage[presets={abc,vec-cev}]{letterswitharrows}\crefname{equation}{Equation}{Equations}
\usepackage{pifont}

\usepackage{listings}

\definecolor{algteal}{HTML}{00897B}
\definecolor{algorange}{HTML}{F57C00}
\definecolor{algmagenta}{HTML}{D81B60}
\definecolor{algviolet}{HTML}{6D28D9}
\definecolor{algblue}{HTML}{2563EB}

\newcommand{\algfn}[1]{\textnormal{\texttt{\textcolor{algmagenta}{\detokenize{#1}}}}}
\newcommand{\algid}[1]{\textnormal{\texttt{\textcolor{algviolet}{\detokenize{#1}}}}}

\newcommand{\algcom}[1]{\textnormal{{\ttfamily\textcolor{algteal}{\# #1}}}}

\newcommand{\idlmalgcaption}[3]{%
  \refstepcounter{algocf}\label{#1}%
  \noindent\rule{\linewidth}{0.8pt}\par\vspace{0.5mm}%
  \noindent{\bfseries #2}\par\vspace{0.3mm}%
  {\scriptsize #3}\par\vspace{0.5mm}%
  \noindent\rule{\linewidth}{0.4pt}\par\vspace{-1mm}%
}

\newcommand{\idlmalgbottom}{\vspace{-1.5mm}\noindent\rule{\linewidth}{0.4pt}}
\lstdefinelanguage{IDLMPseudocode}{
  basicstyle=\ttfamily\small,
  keywordstyle=\color{black},
  commentstyle=\color{black},
  stringstyle=\color{black},
  morekeywords={},
  showstringspaces=false,
  columns=fullflexible,
  keepspaces=true,
  frame=none
}

\usepackage{amsmath,amssymb}
\usepackage[most]{tcolorbox}

\newtcolorbox{goalbox}[1][Goal]{%
  enhanced,
  breakable,
  colback=white,
  colframe=black!40,
  boxrule=0.8pt,
  arc=1.6mm,
  left=3mm,
  right=3mm,
  top=2.5mm,
  bottom=2.5mm,
  before skip=10pt,
  after skip=10pt,
  attach boxed title to top left={%
    xshift=5mm,
    yshift*=-\tcboxedtitleheight/2
  },
  boxed title style={%
    colback=white,
    colframe=white,
    boxrule=0pt,
    sharp corners,
    left=1mm,
    right=1mm,
    top=0pt,
    bottom=0pt
  },
  title={#1},
  fonttitle=\bfseries,
  coltitle=black
}
\usepackage{aliascnt}
\newcommand{\newsharedtheorem}[3]{%
  \newaliascnt{#1}{theorem}%
  \newtheorem{#1}[#1]{#2}%
  \aliascntresetthe{#1}%
  \crefname{#1}{#2}{#3}%
}
\theoremstyle{plain}
\newtheorem{theorem}{Theorem}[section]
\newsharedtheorem{proposition}{Proposition}{Propositions}
\newsharedtheorem{lemma}{Lemma}{Lemmas}
\newsharedtheorem{corollary}{Corollary}{Corollaries}
\newsharedtheorem{example}{Example}{Examples}
\newsharedtheorem{claim}{Claim}{Claims}
\theoremstyle{definition}
\newsharedtheorem{definition}{Definition}{Definitions}
\newsharedtheorem{assumption}{Assumption}{Assumptions}
\theoremstyle{remark}
\newsharedtheorem{remark}{Remark}{Remarks}
\newcommand{\Def}{\textbf}

\newcommand{\N}{\mathbb N}
\newcommand{\Z}{\mathbb Z}

\newcommand{\x}{\mathbf{x}}

\newcommand{\y}{\mathbf{y}}
\newcommand{\n}{\mathbf{n}}
\newcommand{\z}{\mathbf{z}}

\newcommand{\kk}{\mathbf{k}}

\newcommand{\Id}{\operatorname{Id}}
\newcommand*\diff{\mathop{}\!\mathrm{d}}

\newcommand{\w}{w}

\newcommand{\TT}{\tau}
\newcommand{\dd}{D}

\newcommand{\idx}{i}

\newcommand{\target}{p_{\textnormal{data}}}
\newcommand{\pos}{\operatorname{Poisson}}
\newcommand{\bin}{\operatorname{Binomial}}
\newcommand{\ske}{\operatorname{Skellam}}
\newcommand{\poss}{\Pi}
\newcommand{\binn}{\mathcal B}
\newcommand{\trans}{p}
\newcommand{\transp}{q}

\newcommand{\PP}{\mathbb P}
\newcommand{\EE}{\mathbb E}

\newcommand{\data}{\x_{\textnormal{data}}}
\newcommand{\possrv}{\mathrm N}
\newcommand{\binnrv}{\mathrm B}
\newcommand{\urv}{\mathrm U}
\newcommand{\drv}{\mathrm D}
\newcommand{\lrv}{\mathrm L}

\newcommand{\pp}{X}
\newcommand{\ppp}{Y}
\newcommand{\ppa}{\mathbf{X}}
\newcommand{\pppa}{\mathbf{Y}}

\newcommand{\rate}{Q}

\newcommand{\Ru}
{\mathrm u}

\newcommand{\Rd}
{l}
\newcommand{\Rdi}
{\mathrm d}

\newcommand{\ru}{\dot{\mathrm u}}

\newcommand{\rd}{\dot{l}}
\newcommand{\rdi}{\dot{\mathrm d}}

\newcommand{\den}{\mathrm m_{\textnormal{\texttt{latent}}}}
\newcommand{\dend}{\mathrm m_{\textnormal{\texttt{data}}}}
\newcommand{\denpos}{\mathrm m_{\textnormal{\texttt{latent-up}}}}
\newcommand{\denbin}{\mathrm m_{\textnormal{\texttt{latent-down}}}}

\newcommand{\cin}{c_{\textnormal{in}}}
\newcommand{\cskip}{c_{\textnormal{skip}}}
\newcommand{\cbias}{c_{\textnormal{bias}}}
\newcommand{\ssin}{s_{\textnormal{in}}}
\newcommand{\cout}{c_{\textnormal{out}}}
\newcommand{\net}{F}
\newcommand{\mdata}{\mu}
\newcommand{\vdata}{v}
\newcommand{\cdata}{\textnormal{cov}_{A,Y}}

\newcommand{\var}{\operatorname{Var}}
\newcommand{\cov}{\operatorname{Cov}}

\newcommand{\jud}{JUD}

\title{Jumping up and down: Denoiser diffusion models for discrete ordinal data}

\iclrfinalcopy
\author{Yair Shenfeld \\
Division of Applied Mathematics\\
Brown University\\
Providence, RI, USA 02912\\
\texttt{yair\_shenfeld@brown.edu} \\
\And
Ricardo Baptista \\
Department of Statistical Sciences \\
University of Toronto\\
Toronto, ON, Canada M5G 1X6\\
\texttt{r.baptista@utoronto.ca}\\
\AND
Stefano Peluchetti \\
Sakana AI\\
Tokyo, Japan \\
\texttt{stepelu@sakana.ai}
}

\begin{document}

\maketitle

\begin{abstract}
Diffusion models are highly developed in continuous spaces for image and video domains. Recently, major advances have been made for discrete diffusion models for categorical data, specifically in the language domain. In contrast, diffusion models for discrete integer-valued data are less developed, despite the prevalence of this modality, ranging from images and music to gene counts. We introduce  Jumping Up and Down (\jud)---a new family of denoiser-based diffusion models for discrete ordinal data. This is the first family of diffusion models for ordinal data which centers around training denoisers, which at the same time allows for bi-directional (up and down) perturbations of the data. The simplicity of the training objective, combined with the flexibility of bi-directional perturbations, leads us to obtain competitive results across different data modalities. 
\end{abstract}

\section{Introduction}

\paragraph{Diffusion models for discrete ordinal data.} The success of diffusion models in continuous domains such as image and video is driving an explosion of activity in applying diffusion models for discrete data. The vast majority of diffusion models for discrete data focus on language where the data is \emph{categorical}, that is, a data modality with no natural order. In contrast, in many applications the discrete data also contains a natural order, i.e., the data is  \emph{ordinal} as its distribution is over (subsets of) $\N^{\dd}$ or $\Z^{\dd}$. This modality naturally shows up in images (8-bit digital grayscale from 0 to 255), music (MIDI note numbers from 0 to 127), tabular data (amount in dollars of debt/credit), and in scientific applications ranging from biology (gene expression counts in cells) to chemistry (number of atoms in molecules). In addition to the aforementioned applications of discrete ordinal data, this modality  is also interesting from the theoretical perspective as it lies in some sense between categorical and continuous data: On the one hand, the data is discrete as in categorical data, while on the other hand the data has an intrinsic order as in continuous data in $\R^{\dd}$. Despite the importance of discrete ordinal data, the theory and practice of diffusion models for this modality are still developing. In this paper we contribute to both.

\paragraph{Denoisers.} The essence of diffusion models is learning clean data from progressively noisy versions of the same data point. In continuous space, the noise is often taken to be Gaussian with the level of noise corresponding to the variance of the Gaussian. The choice of Gaussian noise is very convenient since it provides a clean relation between \emph{denoising} and \emph{sampling}. First, one learns the \Def{data denoiser},
\begin{equation}
\label{eq:denoiser_intro_intro}
\mathrm{m}_{\textnormal{data}}(\x,t):=\EE[\data\,|\, \texttt{CorruptData}( \data,t)=\x],\qquad\qquad \data\sim\target,
\end{equation}
where $\texttt{CorruptData}( \data,t)$ can be, for example, adding to $\data$ Gaussian noise with variance $t$. Second, the data denoiser $\mathrm{m}_{\textnormal{data}}$ can be directly used for sampling via \Def{Tweedie's formula}, which connects the denoiser $\mathrm{m}_{\textnormal{data}}$ to the \emph{score} $\nabla\log \trans_t$, where $\trans_t$ is the distribution of $t$-level corrupted data. The validity of Tweedie's formula is due to the use of \emph{Gaussian} noise when corrupting the data. The advantages of formulating a training objective based on the denoiser were recently emphasized in \citet{li2026back, hu2026elf}. In contrast to the continuous setting, the absence of the Gaussian distribution in the discrete setting necessitates the development of a different concept of ``data corruption''. For categorical data, the two prevailing choices for noise are uniform (choosing a token uniformly at random) or masking (adding an extra graveyard token), and the level of noise corresponds to the rate at which a token is flipped to a random or masked token. The lack of ordinal structure in categorical data has led to deviation from training a denoiser as in \cref{eq:denoiser_intro_intro}, since the notion of average (expectation) does not make sense in such generality\footnote{The term ``denoiser'' is sometimes used in the literature on language diffusion models to refer to the prediction of the log probabilities over tokens. Here we use the term ``denoiser'' strictly for the prediction of the data point itself, as in \cref{eq:denoiser_intro_intro}. Below we explain the advantages of directly training the latter.}. Instead, the common training objectives in diffusion models for categorical data are either \emph{discrete scores} \citep{meng2022concrete, lou2024discrete}, which are the analogues of $\nabla\log \trans_t$ in the discrete setting,  or the \emph{rates} of the generative process \citep{DBLP:journals/corr/abs-2205-14987, holderrieth2025generator}. But for ordinal data, the definition of a denoiser analogous to \eqref{eq:denoiser_intro_intro} does make sense, and learning the denoiser has a number of advantages: 
\begin{itemize}
    \item The denoiser  can be trained with a large family of losses (Bregman divergences), including the  simple mean-square error loss. Alternatively, the denoiser can be learned by training cross-entropy objectives for the \emph{marginal} posteriors; see \cref{sec:up_down}.
    \item Techniques from continuous diffusion models, which are denoiser-based, can in principle be adapted to the discrete ordinal setting: EDM scaling \citep{karras2022elucidating}, Distillation
via Moment Matching \citep{salimans2024multistep}, dual score matching \citep{guth2026learning}, blind denoisers \citep{sun2025is,wang2025equilibrium,kadkhodaie2026blind}, etc. See \cref{sec:exp} for our use of the EDM scaling.
\end{itemize}
In Figure~\ref{fig:toy_paths} we compare various noising processes and methods for discrete diffusion models that are trained using the same set of $20,000$ data samples from a bi-modal Negative Binomial Mixture distribution; see \cref{app:synthetic} for more details. As we will demonstrate in this work, these improvements show up not only on toy examples but on real-world datasets as well.

\begin{figure}[!htbp]
\centering
\begin{tabular}{cc}
  \hspace{0.5cm} \textbf{Binomial--Poisson (ours)} & \hspace{0.7cm} \textbf{Count Bridge}{ \footnotesize \citep{fishman2026count}}\\
  \includegraphics[width=0.45\textwidth]{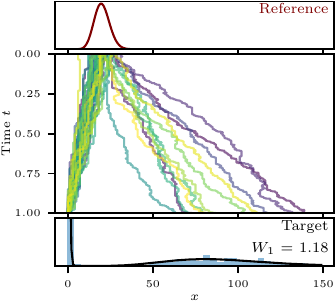} &
  \includegraphics[width=0.45\textwidth]{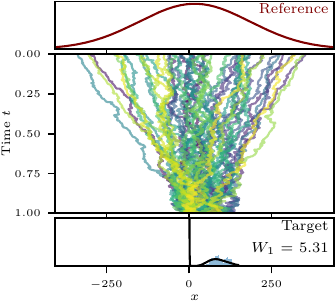} \\
  \hspace{1cm} \textbf{Binomial} { \footnotesize \citep{shenfeld2026binomial}} & \hspace{0.9cm}\textbf{Count Flow} { \footnotesize \citep{wei2026flow}} \\
  \includegraphics[width=0.45\textwidth]{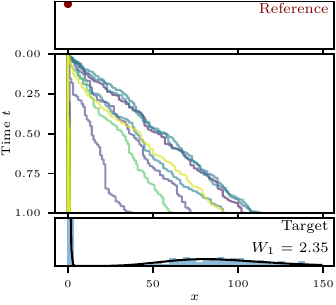} &   \includegraphics[width=0.45\textwidth]{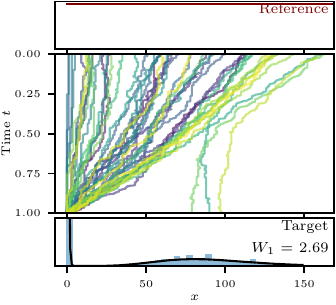} \\[2pt]
  \hspace{0.5cm}\textbf{Poisson} & \hspace{1cm}\textbf{SEDD} {\footnotesize{\citep{lou2024discrete}}} \\
  \includegraphics[width=0.45\textwidth]{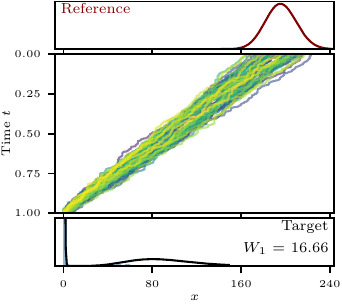} & 
  \includegraphics[width=0.45\textwidth]{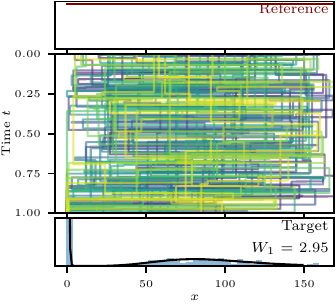}
\end{tabular}
\caption{Sample paths for several noising processes in discrete diffusion models and the resulting histogram of generated samples (blue) relative to the true data distribution (black) \emph{Negative Binomial Mixture}. We observe closest agreement in $W_1$ distance to the data with the \Def{Binomial-Poisson} process proposed in this work.  See \cref{app:synthetic} for details. ~\label{fig:toy_paths}}
\end{figure}

\paragraph{Jumping up and down} In continuous spaces, the Gaussian noise blurs the data in all directions. For ordinal data, a data point in $\N$ or $\Z$ can be perturbed either up, or down, or both. As we will see below, uni-directional perturbations (either just up or just down) give rise to denoiser-based training and sampling (cf. \cref{app:only_up_down}). That is, provided a suitable choice of up-perturbation, or down-perturbation, we can train an appropriate denoiser $\mathrm{m}_{\textnormal{data}}$ and then use it to sample from the data distribution. But perturbing the data in only one direction can lead to inaccurate samplers. This is because once the sampler over/undershoots, due to inevitable approximation errors, the process cannot ``undo'' prior mistakes and correct the trajectory evolution. This is evident in Figure~\ref{fig:toy_paths} when comparing Binomial-Poisson (one of our up-and-down methods), Binomial (only down) and Poisson (only up). Hence, our goal is to find perturbations and samplers which allow for up \emph{and} down jumps, while at the same time being \emph{denoiser-based}. This is a non-trivial task, as the up and down perturbations can interact in arbitrarily complex ways, which was not accomplished in previous works; for space reasons, related works are discussed in \cref{sec:related}. This is precisely the goal that we will achieve in this paper.

\paragraph{Our contributions}

\begin{itemize}
\item We develop  a novel framework of  \Def{denoiser-based} diffusion models (\jud) for discrete ordinal data on $\N^{\dd}$ and $\Z^{\dd}$. Our key innovation is to train \Def{latent} denoisers which predict \Def{partially noised data}. Our main result (\cref{thm:main}) shows that the latent denoisers are sufficient to sample from the data distribution via  a stochastic process which allows for \Def{Up-and-Down} moves.

\item We demonstrate the effectiveness of this framework on synthetic data and real-world data in domains such as images (CIFAR-10), and reasoning (Sudoku completion). In particular, we obtain state-of-the-art FID on CIFAR-10 among discrete diffusion models.  
\end{itemize}

\section{Jumping up and down (\jud)}
\label{sec:up_down}

Continuous diffusion models rely on \emph{Tweedie's formula}. For $\data\sim\target$ on $\R^{\dd}$, add independent Gaussian noise with a continuous schedule $\sigma_t>0$,
\begin{equation}
\label{eq:gauss_perturbation}
Z_t:=\data + G_t,\qquad G_t\sim\mathcal N(0,v_t\Id_{\dd}),\qquad v_t:=\int_0^t\sigma_u^2\,\mathrm du,
\end{equation}
and let $\trans_t$ be the law of $Z_t$, so $\trans_0=\target$. For $t>0$, Tweedie's formula relates the score of $\trans_t$ to the \emph{data denoiser} $\mathrm{m}_{\textnormal{data}}(\x,t):=\EE[\data|Z_t=\x]$:
\begin{equation}
\label{eq:Tweedi_cts}
v_t\nabla\log\trans_t(\x)=\mathrm{m}_{\textnormal{data}}(\x,t)-\x.
\end{equation}
The denoiser can be trained by regression (or any Bregman divergence), providing the score for sampling. For a final time $\TT$, the stochastic differential equation (SDE) in reverse time $s=\TT-t$,
\begin{equation}
\label{eq:sde}
\mathrm d\mathbf{Z}_s=\sigma_{\TT-s}^2\nabla\log \trans_{\TT-s}(\mathbf{Z}_s)\mathrm d s+\sigma_{\TT-s}\mathrm d \mathbf{W}_s,
\end{equation}
with $\mathbf{Z}_0\sim\trans_{\TT}$ and standard $\dd$-dimensional Brownian motion $(\mathbf{W}_s)$, has marginals $\mathbf{Z}_s\sim\trans_{\TT-s}$ and hence $\mathbf{Z}_{\TT}\sim\target$ \citep{song2021scorebased}.

Consider now the setting where $\target$ is a distribution over $\N^{\dd}$ or $\Z^{\dd}$. A Gaussian perturbation will lead to non-integer values so a different type of noise must be considered. In this paper we focus on the generative modeling framework of continuous-time Markov-chains (CTMCs) \citep{DBLP:journals/corr/abs-2205-14987} for discrete data.

In order to have an analogue of \cref{eq:Tweedi_cts} in our setting, we will consider either \emph{Poisson} noise or \emph{Binomial} noise. In detail, 
when $\target$ is supported over $\N^{\dd}$ we perturb a data point $\data\sim\target$ by
\begin{equation}
\label{eq:perturb_intro}  
\pp_t:=\data-\lrv_t+\possrv_t,\qquad\textnormal{with} \qquad \lrv_t\sim\bin(\data,\Rdi_t),\quad \possrv_t\sim \pos(\Ru_t),
\end{equation} 
where $\possrv_t$ is independent of $(\data,\lrv_t)$. Here, $\bin(\data,\Rdi_t)$ is a $\dd$-dimensional product Binomial distribution whose $i$th coordinate has $\data^i$ trials and probability of success $\Rdi^i_t\in [0,1]$, and $\pos(\Ru_t)$ is the $\dd$-dimensional product Poisson distribution with a vector of parameters  $\Ru_t\in \R^{\dd}_{\ge 0}$. Taking $\TT$ to be the final time, and choosing $\Rdi_0=0, \Rdi_{\TT}=1$, and $\Ru_0=0$, we get that the distribution $\trans_t$ of $\pp_t$ interpolates between $\trans_0=\target$ and $\trans_{\TT}=\pos(\Ru_{\TT})$. Note that the choice of noise in \cref{eq:perturb_intro} guarantees that $\pp_t\in \N^{\dd}$ for all $t\in [0,\TT]$. 
  
When $\target$ is supported over $\Z^{\dd}$ we perturb a data point $\data\sim\target$ by
\begin{equation}
\label{eq:perturb_pp_intro}  
\pp_t:=\data-\lrv_t+\possrv_t,\qquad\textnormal{with} \qquad \lrv_t\sim \pos(\Rdi_t),\quad \possrv_t\sim \pos(\Ru_t),
\end{equation} 
where $(\data,\lrv_t,\possrv_t)$ are independent. Taking $\TT$ to be the final time, and choosing $\Rdi_0=\Ru_0=0$, and $\Rdi_{\TT}, \Ru_{\TT}$ large enough, we get that the distribution $\trans_t$ of $\pp_t$ interpolates between $\trans_0=\target$ and $\trans_{\TT}\overset{\textnormal{approx.}}{\sim} \ske(\Rdi_{\TT},\Ru_{\TT})$ (distribution of the difference of independent Poissons; \citealp{skellam1946frequency}). Note that the choice of noise in \cref{eq:perturb_pp_intro} guarantees that $\pp_t\in \Z^{\dd}$ for all $t\in [0,\TT]$.

The perturbations \eqref{eq:perturb_intro}  and \eqref{eq:perturb_pp_intro}  are very simple, but finding CTMCs whose marginals match these perturbations, which at the same time can take advantage of denoiser-based training, is not obvious and has been missing so far. In this work we construct CTMCs whose marginals match \cref{eq:perturb_intro} or \cref{eq:perturb_pp_intro}, which can be trained using only \textbf{latent} denoisers.
Our key innovation is to train, in each of the perturbations \eqref{eq:perturb_intro} and \eqref{eq:perturb_pp_intro}, \Def{latent denoisers} which denoise not the data itself, but \Def{partially noised data}. We prove that training such denoisers suffices to sample from the data distribution. Further, we will show that the data denoiser can be expressed as a \Def{non-linear} and \Def{non-local} function of the latent denoisers. The subsequent sections show how to achieve this goal for target distributions supported on $\N^{\dd}$ or $\Z^{\dd}$. 

\subsection{The Binomial-Poisson process and the Poisson-Poisson process}
In this section we provide our main result which shows how to construct a denoiser-based CTMC whose marginals match the perturbations in \cref{eq:perturb_intro} and \cref{eq:perturb_pp_intro}. We call the CTMC whose marginals match \cref{eq:perturb_intro} the \Def{Binomial-Poisson process}, and the CTMC whose marginals match \cref{eq:perturb_pp_intro} the \Def{Poisson-Poisson process}. 

In the following we let $\{e_i\}_{i\in [\dd]}$ be the one-hot vectors, and given continuously differentiable, coordinatewise strictly increasing noise schedules $\Rdi_t$ (which are in $[0,1]^{\dd}$ for \eqref{eq:perturb_intro} and in $\R^{\dd}_{\ge 0}$ for \eqref{eq:perturb_pp_intro}), and $\Ru_t\in \R^{\dd}_{\ge 0}$, we write  $\rdi_t:=\frac{\diff}{\diff t}\Rdi_t$ and $\ru_t:= \frac{\diff}{\diff t}\Ru_t$.

\begin{theorem}
\label{thm:main}
Let $(\trans_t)_{t\in [0,\TT]}$ be the marginals of $(\pp_t)_{t\in [0,\TT]}$,
\begin{equation}
\label{eq:perturb_intro_thm_main}
\pp_t:=\data-\lrv_t+\possrv_t,
\end{equation}
where  
\begin{center}
\small
\begin{tabular}{c|cccccc}
\toprule
& $\lrv_t$ & $\possrv_t$ & \textnormal{Independence} & $\Rdi_0^i$ & $\Ru_0^i$ & $\Rdi_{\TT}^i$ \\
\midrule
\textnormal{\textbf{Binomial--Poisson}}
& $\bin(\data,\Rdi_t)$
& $\pos(\Ru_t)$
& $(\data,\lrv_t)\perp \possrv_t$
& $0$
& $0$
& $1$ \\
\textnormal{\textbf{Poisson--Poisson}}
& $\pos(\Rdi_t)$
& $\pos(\Ru_t)$
& $\data,\lrv_t,\possrv_t\ \textnormal{independent}$
& $0$
& $0$
& $-$ \\
\bottomrule
\end{tabular}
\end{center}
and define the latent denoisers 
\begin{equation}
\label{eq:denoisers_bin_poss_thm_main}
\denbin(\x,t):=\EE[\lrv_t|\pp_t=\x],\qquad \denpos(\x,t):=\EE[\possrv_t|\pp_t=\x].
\end{equation}
Write $s=\TT-t$ for reverse time. There exists a forward CTMC with the same marginals as $(\pp_t)_{t\in [0,\TT]}$ whose time reversal $(\ppa_s)_{s\in [0,\TT]}$ has initial distribution $\ppa_0\sim\trans_{\TT}$ and, for $0<s<\TT$, rate matrices
\begin{equation}
\label{eq:chain_intro_main}
\rate_{\x\y}(s)=\begin{cases}
\frac{\ru^i_{\TT-s}}{\Ru^i_{\TT-s}}\,\denpos^i(\x,\TT-s) \qquad &\textnormal{if}\qquad \y=\x-e_i,\qquad i\in [\dd]\\
\frac{\rdi_{\TT-s}^i}{\Rdi^i_{\TT-s}}\, \denbin^i(\x,\TT-s)\qquad &\textnormal{if}\qquad \y=\x+e_i,\qquad i\in [\dd]\\
0 \qquad & \textnormal{if} \qquad \y\notin\{\x, \x\pm e_1,\ldots, \x\pm e_{\dd}\}\\
 -\sum_{\y\in \{\x\pm e_1,\ldots,\x\pm e_{\dd}\}}\rate_{\x\y}(s)\qquad &\textnormal{if}\qquad \y=\x
\end{cases}.
\end{equation}
In particular,
\begin{equation}
\label{eq:markov_proj_intro_main}
\ppa_s\sim\trans_{\TT-s}\qquad\textnormal{for all} \qquad s\in [0,\TT].
\end{equation}
\end{theorem}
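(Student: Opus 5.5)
Since all coordinates are independent given $\data$ (the Binomial thinnings act coordinatewise and the noise is a product distribution), the plan is to build the forward process conditionally on $\data$ as a product of one-dimensional chains, and then obtain the reverse rates by Bayes' rule. Conditionally on $\data=\x_0$, realize $\lrv_t$ and $\possrv_t$ as independent counting processes.

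In the Binomial--Poisson case, each of the $\x_0^i$ ``units'' dies independently at hazard rate $\rdi^i_t/(1-\Rdi^i_t)$. This makes $\lrv^i_t\sim\bin(\x_0^i,\Rdi^i_t)$ a pure-birth process that jumps $+1$ at rate $(\x_0^i-\lrv^i_t)\rdi^i_t/(1-\Rdi^i_t)$. In the Poisson--Poisson case, $\lrv^i_t$ is simply a Poisson process with intensity $\rdi^i_t$. In both cases $\possrv^i_t$ is a Poisson process with intensity $\ru^i_t$. Then $\pp_t=\x_0-\lrv_t+\possrv_t$ makes only $\pm e_i$ jumps and has the prescribed marginals. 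The \emph{joint} process $(\lrv_t,\possrv_t,\data)$ is Markov, while $\pp_t$ alone need not be. I would therefore pass to the Markov projection (mimicking theorem): the forward CTMC on $\pp$ with rates
\[
\overrightarrow{\rate}_{\x,\x+e_i}(t)=\EE[\text{up-rate}\mid \pp_t=\x],\qquad \overrightarrow{\rate}_{\x,\x-e_i}(t)=\EE[\text{down-rate}\mid\pp_t=\x]
\]
has the same one-dimensional marginals $\trans_t$. This holds because the Kolmogorov forward equation for $\trans_t$ only involves these conditional expectations. Verifying this equation by differentiating $\PP(\pp_t=\x)$ is routine.

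For the time reversal I would use the standard formula $\rate_{\x\y}(s)=\overrightarrow{\rate}_{\y\x}(t)\,\trans_t(\y)/\trans_t(\x)$ with $t=\TT-s$. Rather than computing ratios of marginals, I would do this at the level of the latent pair. For $\y=\x-e_i$, the forward move $\y\to\x$ is an up-jump of $\possrv^i$, so
\[
\trans_t(\y)\overrightarrow{\rate}_{\y\x}=\ru^i_t\,\PP(\pp_t=\x-e_i).
\]
Writing $\PP(\pp_t=\x-e_i)=\sum_n \PP(\data-\lrv_t=\x-e_i-n)\,\pi_t(n)$ with $\pi_t$ the Poisson pmf, the Poisson identity $\ru^i_t\pi_{\Ru^i_t}(n^i-1)=\tfrac{\ru^i_t}{\Ru^i_t}n^i\pi_{\Ru^i_t}(n^i)$, after the shift $n\to n+e_i$, gives
\[
\trans_t(\y)\overrightarrow{\rate}_{\y\x}=\frac{\ru^i_t}{\Ru^i_t}\,\EE[\possrv^i_t\mathbf 1\{\pp_t=\x\}].
\]
Dividing by $\trans_t(\x)$ yields $\tfrac{\ru^i_t}{\Ru^i_t}\denpos^i(\x,t)$.

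For $\y=\x+e_i$, the forward move is a down-jump, i.e.\ $\lrv^i$ increases by one. In the Poisson--Poisson case the same Poisson identity gives $\tfrac{\rdi^i_t}{\Rdi^i_t}\denbin^i$. In the Binomial case I would use the Binomial identity
\[
(k_0-\ell+1)\frac{\rdi}{1-\Rdi}\binom{k_0}{\ell-1}\Rdi^{\ell-1}(1-\Rdi)^{k_0-\ell+1}=\frac{\rdi}{\Rdi}\,\ell\binom{k_0}{\ell}\Rdi^{\ell}(1-\Rdi)^{k_0-\ell},
\]
which again converts the expected rate into $\tfrac{\rdi^i_t}{\Rdi^i_t}\EE[\lrv^i_t\mathbf 1\{\pp_t=\x\}]$.

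The main obstacle, I expect, is rigor rather than algebra. First, the forward rates must be justified as a genuine CTMC whose reversal has marginals $\trans_{\TT-s}$ on the infinite state space $\N^{\dd}$ or $\Z^{\dd}$; this requires non-explosion and uniqueness for the Kolmogorov equations. Second, the hazard $\rdi/(1-\Rdi)$ blows up at $t=\TT$, and $\ru/\Ru$, $\rdi/\Rdi$ blow up at $t=0$, which is why the statement only claims $0<s<\TT$. I would handle this by proving that $\trans_{\TT-s}$ solves the reverse forward equation on compact subintervals, where the rates are bounded by linear functions of $|\x|$ (the denoisers are controlled by moments). I would then conclude by continuity of $\trans_t$ at the endpoints, assuming finite first moments of $\target$.
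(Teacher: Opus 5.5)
Your proof is correct, but it reaches the reverse rates by a different route than the paper.

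\textbf{Forward chain.} The paper posits the rates and checks the Kolmogorov equation by differentiating the Binomial and Poisson pmfs (\cref{prop:aux_markov}). In the Binomial--Poisson case these are up-rate $\ru^i_t$ and down-rate $\frac{\rdi^i_t}{1-\Rdi^i_t}\den^i(\x,t)$, with $\den=\EE[\data-\lrv_t\mid\pp_t=\cdot\,]$. In the Poisson--Poisson case they are the constants $\ru^i_t,\rdi^i_t$. Your Markov-projection construction yields exactly these rates and explains where they come from.

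\textbf{Reversal.} This is where the two routes really differ. The paper first proves the ratio identity $\trans_t(\x-e_i)/\trans_t(\x)=(\x^i-\den^i(\x,t))/\Ru^i_t$, which gives the reverse down-rate. For the reverse up-rate, the time-reversal formula only produces the nonlinear, nonlocal expression $\frac{\rdi^i\Ru^i}{1-\Rdi^i}\frac{\den^i(\x+e_i)}{\x^i+1-\den^i(\x+e_i)}$ (\cref{cor:reverse_time_poss_bin}). This becomes $\frac{\rdi^i}{\Rdi^i}\denbin^i(\x)$ only after a separate Binomial-shift computation relating $\dend$ to $\den$ (\cref{lem:den-dendd}), combined with $\denbin=\dend-\den$.

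You instead write the flux $\trans_t(\y)\overrightarrow{\rate}_{\y\x}(t)$ as an expectation over the joint law of $(\data,\lrv_t,\possrv_t)$. You then shift the particular latent counter whose increment caused the jump. This gives $\frac{\ru^i}{\Ru^i}\denpos^i$ and $\frac{\rdi^i}{\Rdi^i}\denbin^i$ in one step, treats both processes uniformly, and shows why the reverse rates are linear in the latent denoisers.

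What the paper's detour buys is its by-products: the Tweedie-type ratio identities and the nonlinear relations among $\denbin$, $\denpos$, $\dend$ (\cref{cor:latentupdown_binposs}). The paper uses these to argue that a single latent denoiser suffices. Your route does not produce them without extra work.

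\textbf{Rigor paragraph.} Your last paragraph goes beyond the paper, whose proof stops at the Kolmogorov equation and the reversal formula. If you pursue it, the linear bound is immediate for the reverse down-rates in the Binomial--Poisson case, since $\possrv_t\le\pp_t$ gives $\denpos^i(\x,t)\le\x^i$. It is not immediate for $\denbin$, which depends on the posterior of $\data$ given $\pp_t=\x$. For sparsely supported $\target$, even one with all moments finite, $\denbin^i(\x,t)/|\x|$ can be unbounded. So that step of your plan would need a different argument.
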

The proof of \cref{thm:main} relies on combining discrete Tweedie's formulas for \emph{both} the Binomial and Poisson distributions, and is contained in \cref{app:binomial_poisson_process}-\cref{app:poisson_poisson_process}. 

While \cref{eq:chain_intro_main} writes the rates in terms of two latent denoisers, it follows from \cref{cor:latentupdown_binposs}-\cref{cor:latentupdown_possposs} that, at interior times, the rates can in fact be expressed using a \Def{single} latent denoiser, since  $\denpos$ and $\denbin$ can be expressed in terms of each other. However, this relation is \Def{non-linear} and \Def{non-local}, creating both numerical stability issues and more expensive neural function evaluations, which is why we opt for the formulation in \cref{eq:chain_intro_main}. Lastly, \cref{cor:latentupdown_binposs}-\cref{cor:latentupdown_possposs} also show that the data denoiser $\dend$ can be written as a non-linear and non-local function of a single latent denoiser.

\paragraph{Comparing the processes} Both the Binomial-Poisson process (for distributions over $\N^{\dd}$) and the Poisson-Poisson process (for distributions over $\Z^{\dd}$) are defined in terms of \texttt{latent-up} and \texttt{latent-down} denoisers. However, in the Binomial-Poisson process  the latent variable $\lrv_t$ depends on the data $\data$, while in the Poisson-Poisson process $(\lrv_t,\data)$ are independent. This is due to the fact that the  Binomial-Poisson process needs to remain in $\N^{\dd}$. 

An additional difference is that the final time distribution of the Binomial-Poisson perturbation \eqref{eq:perturb_intro} is exactly $ \pos(\Ru_{\TT})$, while the  final time distribution of the Poisson-Poisson perturbation \eqref{eq:perturb_pp_intro} is only approximately $\ske(\Rdi_{\TT},\Ru_{\TT})$. 

Finally, we note that while from a theoretical perspective the Poisson-Poisson process is easier to work with, in our experiments we often find the Binomial-Poisson process to be more effective.

\subsection{Training and sampling}

For ease of exposition, we present the algorithms using $\TT=1$, with $\Rdi_t=t$ for Binomial-Poisson, $\Rdi_t=bt$ for Poisson-Poisson, and $\Ru_t=at$ for both, where $a,b>0$. We omit any conditioning context, which would otherwise be passed to the neural networks, as well as additional clipping to the data range.

\paragraph{Training} We can apply two different approaches to training $\denbin$ and $\denpos$.

\textbf{\cref{alg:bin_poss_train}.} Use the fact that the optimal denoiser is, by definition, a conditional expectation so it can be trained using any valid Bregman divergence, in particular the mean-square error (MSE) loss.

\textbf{\cref{alg:bin_poss_train_ce}.} For each coordinate $i\in [\dd]$ train a marginal posterior distribution $\trans_{\data^i|\pp_t}$ and then use either \cref{lem:cor_cond_exp_binpos_posterior} (Binomial-Poisson) or \cref{lem:cor_cond_exp_pospos_posterior} (Poisson-Poisson) to compute $\denbin$ and $\denpos$. Crucially, note that we only need to train the collection of $\dd$ \emph{marginal} posteriors $\trans_{\data^i|\pp_t}$, rather than the full posterior $\trans_{\data|\pp_t}$ which is much more complicated. This is possible thanks to \cref{thm:main} which shows that ultimately we only need the denoisers in order to sample from $\target$.   
\begin{figure*}[!t]
\centering
\begin{minipage}[t]{0.485\textwidth}
\vspace{0pt}
\idlmalgcaption{alg:bin_poss_train}
{\cref{alg:bin_poss_train}: Denoiser Training}{}
\begin{lstlisting}[
    language=IDLMPseudocode,
    escapechar=`,
    basicstyle=\ttfamily\scriptsize,
    breaklines=true,
    gobble=4
]
    `\algcom{Process: Binomial-Poisson or Poisson-Poisson}`
    `\algcom{Noise scales $a,b>0$}`
    `\algcom{\algfn{L_bregman}: any valid Bregman divergence}`
    `\algcom{\algfn{sample_time} has full support on $(0,1)$}`
    x_data = `\algfn{sample_data}`(p_data)
    t = `\algfn{sample_time}`(0, 1)
    L_t = `\algfn{Binomial}`(x_data, t)`\algcom{Binomial-Poisson}`
    `\algcom{or \texttt{L\_t = }\algfn{Poisson}\texttt{(b * t)} for Poisson-Poisson}`
    N_t = `\algfn{Poisson}`(a * t)
    X_t = x_data - L_t + N_t
    m_L, m_N = `\algid{net_latents}`(X_t, t)
    loss = `\algfn{L_bregman}`(L_t, m_L)
         + `\algfn{L_bregman}`(N_t, m_N)
    `\algfn{update}`(`\algid{net_latents}`, loss)
\end{lstlisting}
\idlmalgbottom
\end{minipage}
\hfill
\begin{minipage}[t]{0.485\textwidth}
\vspace{0pt}
\idlmalgcaption{alg:bin_poss_train_ce}
{\cref{alg:bin_poss_train_ce}:  Marginal Posterior Training}{}
\begin{lstlisting}[
    language=IDLMPseudocode,
    escapechar=`,
    basicstyle=\ttfamily\scriptsize,
    breaklines=true,
    gobble=4
]
    `\algcom{Process: Binomial-Poisson or Poisson-Poisson}`
    `\algcom{Noise scales $a,b>0$}`
    `\algcom{\algfn{sample_time} has full support on $(0,1)$}`
    x_data = `\algfn{sample_data}`(p_data)
    t = `\algfn{sample_time}`(0, 1)
    L_t = `\algfn{Binomial}`(x_data, t)`\algcom{Binomial-Poisson}`
    `\algcom{or \texttt{L\_t = }\algfn{Poisson}\texttt{(b * t)} for Poisson-Poisson}`
    N_t = `\algfn{Poisson}`(a * t)
    X_t = x_data - L_t + N_t
    posterior = `\algid{net_posterior}`(X_t, t)
    for i = 1, ..., D:
        loss_i = `\algfn{cross_entropy}`(
            posterior_i, x_data^i)
    loss = sum_i(loss_i)
    `\algfn{update}`(`\algid{net_posterior}`, loss)
\end{lstlisting}
\idlmalgbottom
\end{minipage}
\end{figure*}
\paragraph{Sampling} The crux of \cref{thm:main} is that once the latent denoisers $(\denbin,\denpos)$ are trained, we can approximately sample from $\target$ using any sampling algorithm for CTMCs, with the initial distribution $\pos(a)$ (Binomial-Poisson) or $\ske(b,a)$ (Poisson-Poisson). The Poisson initialization matches the terminal distribution exactly, whereas the Skellam initialization introduces an approximation. For example, \cref{alg:tau} and \cref{alg:euler} present the tau-leaping and Euler methods, respectively. 
Here, \texttt{latents\_denoiser} returns estimates of $(\denbin,\denpos)$: it calls \texttt{net\_latents}, trained by regression (e.g., MSE), or converts the marginal posteriors returned by \texttt{net\_posterior} using the formulas in \cref{lem:cor_cond_exp_binpos_posterior,lem:cor_cond_exp_pospos_posterior}.
\begin{figure*}[!t]
\centering

\begin{minipage}[t]{0.485\textwidth}
\vspace{0pt}
\idlmalgcaption{alg:tau}
{\cref{alg:tau}: Tau-leaping sampling}{}
\begin{lstlisting}[
    language=IDLMPseudocode,
    escapechar=`,
    basicstyle=\ttfamily\scriptsize,
    breaklines=true,
    gobble=4
]
    `\algcom{Process: Binomial-Poisson or Poisson-Poisson}`
    `\algcom{Latent-mean estimator \algfn{latents_denoiser}; $a,b>0$}`
    `\algcom{Number of steps T; stopping time $0<\varepsilon<1$}`
    h = (1 - epsilon) / T
    x = `\algfn{Poisson}`(a)
    if process == "Poisson-Poisson":
        x = x - `\algfn{Poisson}`(b)
    for j = 0, ..., T - 1:
        t = 1 - j * h
        m_L, m_N = `\algfn{latents_denoiser}`(x, t)
        `\algcom{Apply coordinatewise for every $i\in [\dd]$}`
        lambda_+^i = `\algfn{max}`(m_L^i, 0) / t
        lambda_-^i = `\algfn{max}`(m_N^i, 0) / t
        P_+^i = `\algfn{Poisson}`(h * lambda_+^i)
        P_-^i = `\algfn{Poisson}`(h * lambda_-^i)
        if process == "Binomial-Poisson":
            P_-^i = `\algfn{min}`(P_-^i, x^i)
        x^i = x^i + P_+^i - P_-^i
    return x
\end{lstlisting}
\idlmalgbottom
\end{minipage}
\hfill
\begin{minipage}[t]{0.485\textwidth}
\vspace{0pt}
\idlmalgcaption{alg:euler}
{\cref{alg:euler}: Euler sampling}{}
\begin{lstlisting}[
    language=IDLMPseudocode,
    escapechar=`,
    basicstyle=\ttfamily\scriptsize,
    breaklines=true,
    gobble=4
]
    `\algcom{Process: Binomial-Poisson or Poisson-Poisson}`
    `\algcom{Latent-mean estimator \algfn{latents_denoiser}; $a,b>0$}`
    `\algcom{Number of steps T; stopping time $0<\varepsilon<1$}`
    h = (1 - epsilon) / T
    x = `\algfn{Poisson}`(a)
    if process == "Poisson-Poisson":
        x = x - `\algfn{Poisson}`(b)
    for j = 0, ..., T - 1:
        t = 1 - j * h
        m_L, m_N = `\algfn{latents_denoiser}`(x, t)
        `\algcom{Apply coordinatewise for every $i\in [\dd]$}`
        lambda_+^i = `\algfn{max}`(m_L^i, 0) / t
        lambda_-^i = `\algfn{max}`(m_N^i, 0) / t
        p_i = `\algfn{min}`(h * lambda_+^i, 1)
        q_i = `\algfn{min}`(h * lambda_-^i, 1)
        if process == "Binomial-Poisson":
            if x^i == 0:
                q_i = 0
        s_i = `\algfn{max}`(1, p_i + q_i)
        Delta^i = `\algfn{Categorical}`(
            +1 : p_i / s_i,
            -1 : q_i / s_i,
             0 : 1 - (p_i + q_i) / s_i)
        x^i = x^i + Delta^i
    return x
\end{lstlisting}
\idlmalgbottom
\end{minipage}
\end{figure*}

\section{Experiments}
\label{sec:exp}
We validate our method on synthetic datasets (\cref{sec:main_synthetic}), CIFAR-10 (\cref{sec:cifar10}), and Sudoku completion (\cref{sec:sudoku}). The code to reproduce all of the experiments is available at the anonymous link~\url{https://www.github.com/AnonrsySubmitter/JUD}. 

\subsection{Synthetic Experiments} \label{sec:main_synthetic}

We evaluate the performance of the Binomial-Poisson and Poisson-Poisson models (with \cref{alg:bin_poss_train}) on a series of benchmark low-dimensional examples where we estimate the Wasserstein-1 distance between the true and generated distributions. For each example, we use $N = 20,000$ samples from the true distribution to learn the denoisers. %
At evaluation time, each trained model generates $1{,}000$ samples via a reverse-time Euler discretization with $2{,}000$ steps. We compute $W_1$ between these samples and an independent batch of $1{,}000$ samples from the target distribution. Additional details on the data, model and training can be found in Appendix~\ref{app:synthetic}. 

\paragraph{Results.} Table~\ref{tab:w1-results} presents the Wasserstein-1 distance between the generated samples and true target samples for all models. Across almost all distributions, in particular those that have multiple modes (e.g., Poisson Mixture and Negative Binomial Mixture), we observe closer agreement between each target distribution and the generated samples with our proposed Binomial-Poisson process over alternative discrete diffusion models. In Figure~\ref{fig:synthetic_histogram}, we include a histogram and show close agreement with the Binomial-Poisson process to the tested distributions.

\begin{table}[htbp]
\centering
\caption{Wasserstein-1 distance across distributions and sampling methods (mean $\pm$ std). Our process is denoted in bold and the best results are highlighted in orange.}
\label{tab:w1-results}
\sisetup{detect-weight, detect-family}
\resizebox{\textwidth}{!}{%
\begin{tabular}{l
 S[table-format=2.2] @{${}\pm{}$} S[table-format=1.2] 
 S[table-format=2.2] @{${}\pm{}$} S[table-format=1.2]
  S[table-format=2.2] @{${}\pm{}$} S[table-format=1.2]
  S[table-format=2.2] @{${}\pm{}$} S[table-format=1.2]
  S[table-format=2.2] @{${}\pm{}$} S[table-format=1.2]
  S[table-format=2.2] @{${}\pm{}$} S[table-format=1.2]
}
\toprule
& \multicolumn{2}{c}{Binomial-only} & \multicolumn{2}{c}{Binomial-Poisson} & \multicolumn{2}{c}{Poisson-Poisson} & \multicolumn{2}{c}{Count Bridges} & \multicolumn{2}{c}{count-FM} & \multicolumn{2}{c}{SEDD} \\
\cmidrule(lr){2-3}\cmidrule(lr){4-5}\cmidrule(lr){6-7}\cmidrule(lr){8-9}\cmidrule(lr){10-11}\cmidrule(lr){12-13}
Distribution & \multicolumn{2}{c}{\footnotesize \cite{shenfeld2026binomial}} & \multicolumn{2}{c}{\footnotesize \textbf{(Ours)}} & \multicolumn{2}{c}{\footnotesize \textbf{(Ours)}} & \multicolumn{2}{c}{\footnotesize \cite{fishman2026count}} & \multicolumn{2}{c}{\footnotesize \cite{wei2026flow}} & \multicolumn{2}{c}{\footnotesize \cite{lou2024discrete}} \\
\midrule
Poisson                     & 0.31  & 0.05 & {\color{myhighlight} \bfseries 0.13}  & {\color{myhighlight} \bfseries 0.02} & 0.35  & 0.02 & 0.22 & 0.02 & 0.28 & 0.06 & 0.46 & 0.08 \\
Poisson Mixture             & 1.52  & 0.21 & {\color{myhighlight} \bfseries 1.39}  & {\color{myhighlight} \bfseries 0.30} & 89.71 & 0.34 & 1.69 & 0.34 & 1.68 & 0.41 & 1.96 & 0.33 \\
Zero-Inflated Poisson       & 0.20  & 0.03 & {\color{myhighlight} \bfseries 0.12}  & {\color{myhighlight} \bfseries 0.01} & 0.32  & 0.02 & 0.16 & 0.03 & 0.33 & 0.06 & 0.53 & 0.08 \\
Negative Binomial Mixture   & 1.68  & 0.23 & {\color{myhighlight} \bfseries 1.62}  & {\color{myhighlight} \bfseries 0.56} & 17.73 & 0.41 & 1.80 & 0.74 & 2.45 & 0.46 & 2.35 & 0.33 \\
Beta Negative Binomial      & 0.87  & 0.07 & {\color{myhighlight} \bfseries 0.65}  & {\color{myhighlight} \bfseries 0.06} & 6.14  & 0.14 & 0.73 & 0.07 & 1.52 & 0.14 & 2.57 & 0.14 \\
Zipf & {\color{myhighlight} 0.36}  & {\color{myhighlight} 0.02} & {\bfseries 0.39} & {\bfseries 0.03} & 1.79  & 0.10 & 0.64 & 0.05 & 0.66 & 0.09 & 1.08 & 0.09 \\
Yule--Simon                 & 0.32  & 0.04 & {\color{myhighlight} \bfseries 0.12}  & {\color{myhighlight} \bfseries 0.01} & 0.23  & 0.02 & 0.45 & 0.03 & 0.38 & 0.04 & 0.63 & 0.05 \\
Correlated Beta-Binomial    & 0.81  & 0.11 & {\bfseries 0.62}  & {\bfseries 0.10} & 11.39 & 0.19 & {\color{myhighlight} 0.59} & {\color{myhighlight} 0.09} & 0.63 & 0.04 & 1.23 & 0.38 \\
\bottomrule
\end{tabular}%
}
\end{table}

\begin{figure}
    \centering
    {\scriptsize \hspace{1cm} \textbf{Poisson} \hspace{2.5cm} \textbf{Poisson Mixture} \hspace{1cm} \textbf{Zero Inflated Poisson} \hspace{1cm}         \textbf{Negative Binomial Mixture}} \\
    \includegraphics[width=0.24\linewidth]{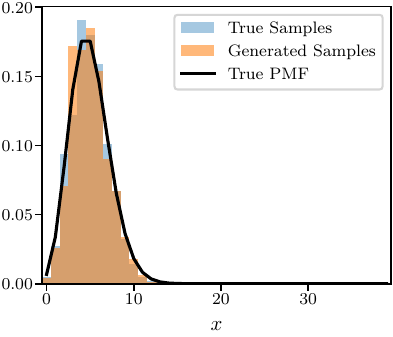}
    \includegraphics[width=0.24\linewidth]{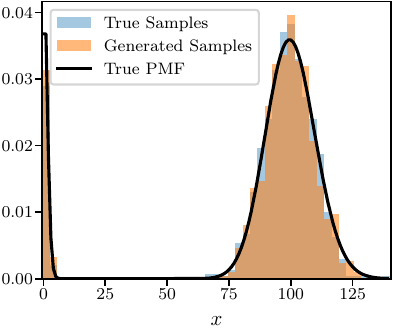}
    \includegraphics[width=0.24\linewidth]{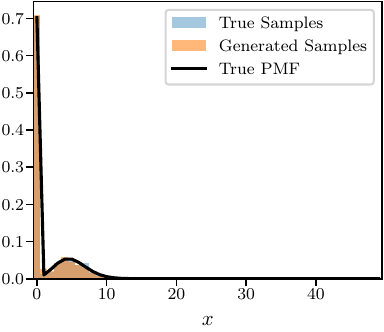}
    \includegraphics[width=0.24\linewidth]{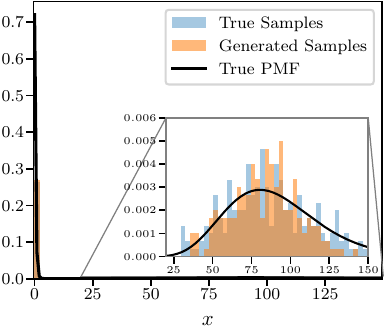} \\

    {\scriptsize \hspace{0.5cm} \textbf{Beta Negative Binomial} \hspace{1.8cm} \textbf{Zipf} \hspace{2.5cm} \textbf{Yule Simon} \hspace{1cm} \textbf{Correlated Beta Binomial}} \\
    \includegraphics[width=0.24\linewidth]{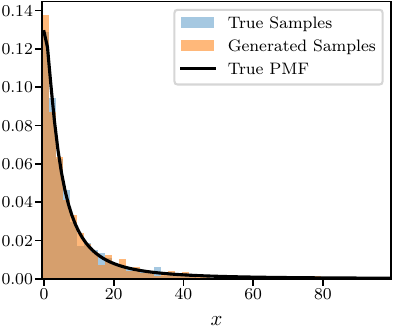}
    \includegraphics[width=0.24\linewidth]{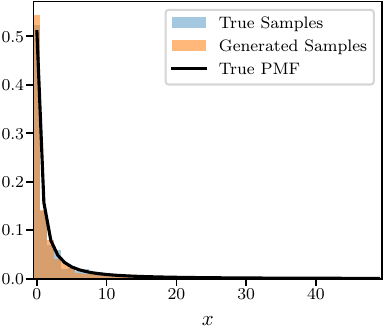}
    \includegraphics[width=0.24\linewidth]{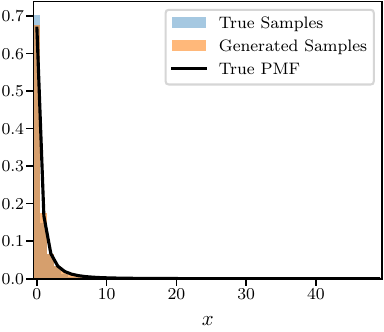}
    \includegraphics[width=0.24\linewidth]{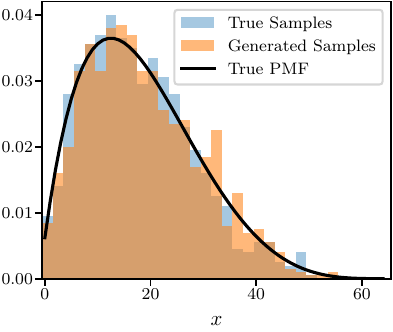}
    \vspace{-0.2cm}
    \caption{Target dataset and the generated samples using the Binomial-Poisson process for all eight synthetic distributions.  \label{fig:synthetic_histogram}}
\end{figure}

\subsection{CIFAR-10}
\label{sec:cifar10}
We evaluate the Binomial-Poisson model of \cref{thm:main} on unconditional generation of $32\times32$ CIFAR-10 images~\citep{cifar_krizhevsky}. We use the DDPM++ architecture~\citep{song2021scorebased} with an EDM-style preconditioning~\citep{karras2022elucidating}, regressing the targets $(\lrv_t, \possrv_t)$; see \cref{alg:bin_poss_train}. Both predictions are produced by a single shared network with two heads. Sampling uses tau-leaping with 2048 uniformly spaced steps; see \cref{alg:tau}. Training and sampling details are given in \cref{app:cifar10_training,app:cifar10_sampling}.

We achieve a FID of $2.07$ on 50,000 generated images. \Cref{tab:cifar10_fid} compares generation quality with continuous, hybrid, and discrete diffusion models. We obtain the lowest FID among the discrete models in this comparison. \Cref{fig:cifar10_samples} shows the first 64 randomly sampled images from the trained model.

\begin{table}[!ht]
\centering
\begin{minipage}[c]{0.485\textwidth}
\centering
\caption{Unconditional CIFAR-10 generation. FID evaluated on 50,000 generated samples.}
\label{tab:cifar10_fid}
\small
\renewcommand{\arraystretch}{1.08}
\begin{tabular}{lr}
\toprule
Method & FID $\downarrow$ \\
\midrule
\multicolumn{2}{l}{\textit{Continuous data models}} \\
EDM \citep{karras2022elucidating} & 1.97 \\
EDM (our reproduction) & 2.09 \\
ScoreSDE \citep{song2021scorebased} & 2.20 \\
IDDPM \citep{pmlr-v139-nichol21a} & 2.90 \\
DDPM \citep{ho2020denoising} & 3.17 \\
\midrule
\multicolumn{2}{l}{\textit{Hybrid data models}} \\
CADD\textsuperscript{$\dagger$} \citep{zheng2025cadd} & 2.88 \\
\midrule
\multicolumn{2}{l}{\textit{Discrete (ordered) data models}} \\
\textbf{\jud: Binomial-Poisson (ours)} & \textbf{2.07} \\
Binomial Flow \citep{shenfeld2026binomial} & 2.94 \\
Blackout \citep{santos2023blackout} & 4.58 \\
LTJ \citep{chen2023learning} & 4.80 \\
ItDPDM \citep{DBLP:journals/corr/abs-2505-05082} & 4.84 \\
\midrule
\multicolumn{2}{l}{\textit{Discrete (categorical) data models}} \\
MDM-Prime \citep{chao2025mdmprime} & 3.26 \\
Discrete FM \citep{gat2024discrete} & 3.63 \\
$\tau$-LDR-10 \citep{DBLP:journals/corr/abs-2205-14987} & 3.74 \\
MDM \citep{chao2025mdmprime} & 4.66 \\
MDM-Mixture \citep{chao2025mdmprime} & 4.80 \\
D3PM (Gauss.) \citep{austin2021structured} & 7.34 \\
CTDD-DG \citep{nisonoff2025unlocking} & 7.86 \\
\bottomrule
\end{tabular}

\smallskip
{\footnotesize $\dagger$ CADD augments discrete diffusion with a paired continuous latent diffusion.}
\end{minipage}\hfill
\begin{minipage}[c]{0.485\textwidth}
\centering
\includegraphics[width=\linewidth]{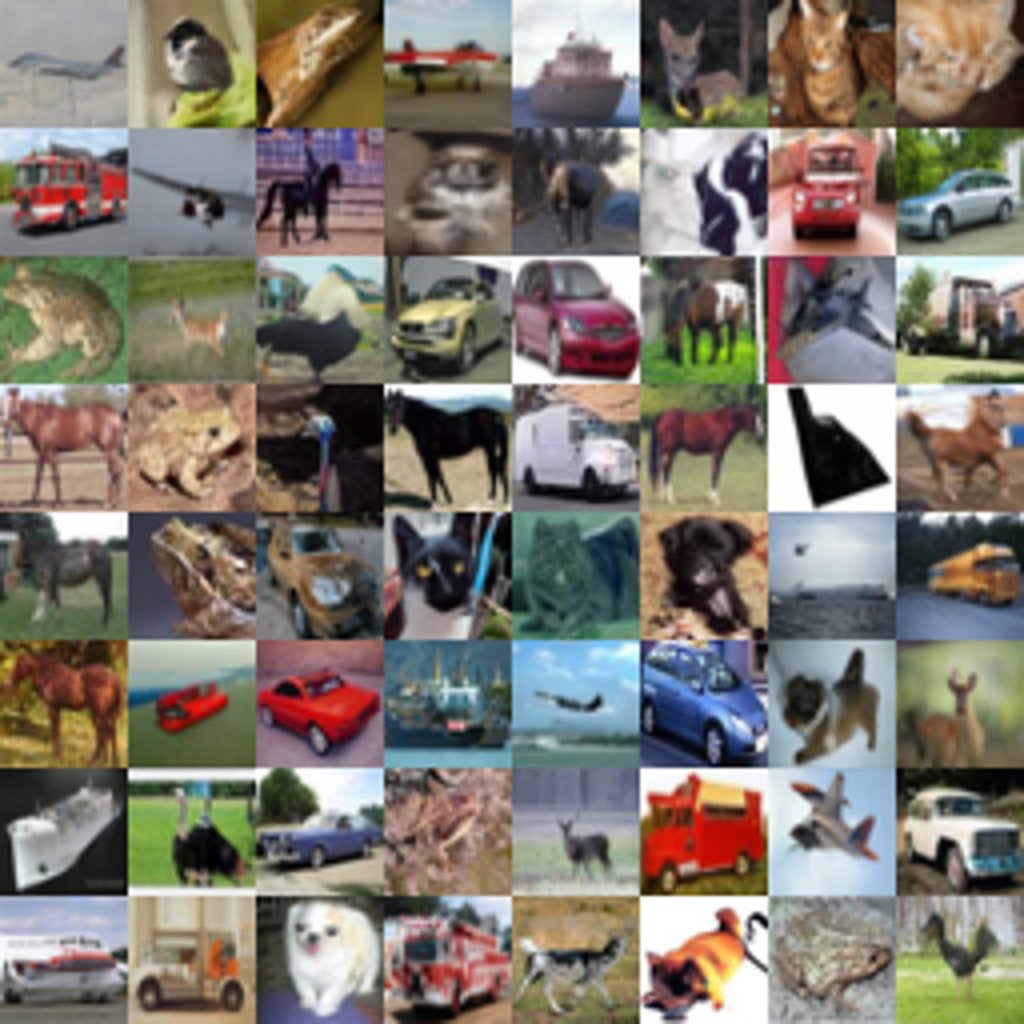}
\captionof{figure}{Unconditional CIFAR-10 samples from the Binomial-Poisson model.}
\label{fig:cifar10_samples}
\end{minipage}
\end{table}

\Cref{app:cifar10_sampling_ablations} reports FID as a function of the number of sampling steps for both samplers; at 512 steps the FID is 2.29.

\subsection{Sudoku}
\label{sec:sudoku}
We test the  Binomial-Poisson model of \cref{thm:main} on the hard Sudoku benchmark of \citet{deschenaux2026hyperspherical}. Each puzzle has 30 visible digits and a unique solution. The dataset comprises 48,000 training and 2,000 validation puzzles. The metric is exact-board accuracy: all 81 predicted digits must match the reference solution. Note that this dataset is not ordinal since Sudoku is invariant under permutations of the symbols. Our goal in this task is to explore the capabilities of our framework to general categorical data. 

\paragraph{Model and evaluation.}
We use the benchmark's DiT backbone configuration. We use the same linear-time schedules as in CIFAR-10, with mean digit value 5. We train with two types of loss: \textbf{(1)} mean-square error (MSE) and \textbf{(2)} cross entropy (CE). 
MSE uses the same $(\lrv_t, \possrv_t)$ targets and affine preconditioning as CIFAR-10 (\cref{tab:edm_generic,tab:edm_denoisers}); see \cref{alg:bin_poss_train}. CE instead learns a nine-class distribution for each digit, conditioned on the perturbed solution and context; see \cref{alg:bin_poss_train_ce}. The denoiser(s) follow analytically from this posterior (\cref{app:sudoku_posterior}). A single network evaluation returns two scalar outputs per cell for MSE or nine logits for CE. Training configurations are given in \cref{app:sudoku_training}. 

Sampling uses the tau-leaping scheme (\cref{alg:tau}) on a uniform grid from $t=1$ to $t=10^{-3}$. At the final $t$, if training with MSE, we round and clip $\pp_t+\denbin(\pp_t,t)-\denpos(\pp_t,t)$
 to $\{1,\ldots,9\}$, whereas if we train with CE we take the most probable digit under the posterior. We use 180 network evaluations, including the final evaluation, matching the diffusion and flow baselines.

\paragraph{Results.}
\Cref{tab:sudoku_hard} compares our results with the published baselines. A separate block reports digit-shuffled training, which applies a random permutation of $1,\ldots,9$ jointly to each puzzle. This preserves Sudoku constraints while varying the assignment of symbols to ordered counts. Posterior CE achieves higher mean accuracy than MSE in both settings.

\begin{table}[htbp]
\centering
\caption{Hard Sudoku exact-board accuracy. Baselines are from Table~1 of \citet{deschenaux2026hyperspherical}. Our scores are means (not best) over eight independent training runs.}
\label{tab:sudoku_hard}
\small
\renewcommand{\arraystretch}{1.08}
\begin{tabular}{lr}
\toprule
Method & Accuracy (\%) $\uparrow$ \\
\midrule
\multicolumn{2}{l}{\textit{Autoregressive models}} \\
AR (sampled) & 0.6 \\
AR (greedy) & 1.0 \\
\midrule
\multicolumn{2}{l}{\textit{Discrete diffusion models}} \\
MDLM & 30.2 \\
Duo & 58.4 \\
\midrule
\multicolumn{2}{l}{\textit{Continuous and hybrid models}} \\
CANDI & 16.7 \\
CANDI (pure Gaussian) & 12.5 \\
FLM & 44.5 \\
$\mathbb{S}$-FLM & 14.0 \\
$\mathbb{S}$-FLM + $\alpha^\star(0.1)$\textsuperscript{$\dagger$} & 43.2 \\
$\mathbb{S}$-FLM + $\alpha^\star(0.1)$ + adaptive\textsuperscript{$\dagger$} & 45.0 \\
\midrule
\multicolumn{2}{l}{\textbf{Binomial-Poisson (ours)}} \\
\textbf{MSE (noise/lost count)} & \textbf{17.49} \\
\textbf{Posterior CE} & \textbf{40.31} \\
\midrule
\multicolumn{2}{l}{\textbf{Binomial-Poisson (ours)}, \textit{with digit shuffling}} \\
\textbf{MSE (noise/lost count)} & \textbf{29.00} \\
\textbf{Posterior CE} & \textbf{66.99} \\
\bottomrule
\end{tabular}

\smallskip
{\footnotesize $\dagger$ Schedule variants: $\alpha^\star(0.1)$ denotes truncation, with an additional adaptive noise schedule where indicated.}
\end{table}

\section{Conclusion}
In this work we introduce \Def{Jumping Up and Down (\jud)}: a  \Def{denoiser-based} family of generative diffusion models for discrete ordinal data (either $\N^{\dd}$ or $\Z^{\dd}$) which allows for \Def{bi-directional perturbations}. Our main result (\cref{thm:main}) shows that it suffices to learn \Def{latent denoisers} in order to sample from the data distribution. Further, we show that theoretically a \Def{single} denoiser suffices to sample from the data distribution, as the two latent denoisers can be expressed in terms of each other. In addition, we also prove that the \Def{data denoiser} can be expressed in terms of the latent denoiser(s). We validate our methods on synthetic datasets, CIFAR-10, and the Sudoku completion task. In particular, on synthetic datasets we often obtain superior results over alternative discrete diffusion models, and we achieve a state-of-the-art result among discrete diffusion models for CIFAR-10.

\section{Limitations}
(1) For the purpose of generative modeling considered in this work, the source distribution should be easy to sample (which is the case for us, as it is simply a Poisson  or approximately Skellam). But in some applications  (e.g., \citet{fishman2026count}) we would like to have an arbitrary source measure, which our method does not allow for. (2) Currently, our method requires a high number of neural function evaluations (NFE) for sampling.  (3) Compared to the Binomial-Poisson method, the Poisson-Poisson method has an extra parameter which needs to be tuned, namely the final time $\TT$, in order to make the approximation $\transp_{\TT}\overset{\textnormal{approx.}}{\sim} \ske(\Rdi_{\TT},\Ru_{\TT})$ more accurate. 

\subsection*{Acknowledgements}

This material is based upon work supported by the National Science Foundation under Award Numbers DMS-2331920 and DMS-2508545. RB is grateful for the support from the Natural Sciences and Engineering Research Council of Canada Discovery (award number RGPIN-2026-07896).

\subsection*{AI use statement}

In this work, we used generative AI tools to assist in the writing of proofs, specifically to carry out or verify calculations in individual proof steps (such as the EDM scalings in \cref{sec:edm}), and to implement methods, specifically to help write the code for the experiments. We have not used generative AI tools to generate synthetic datasets, to develop the methodology or the theoretical framework, to formulate mathematical claims, to devise proof strategies, to propose or refine hypotheses, to design the experiments, or to interpret results; translation, dataset cleaning and reformatting, and qualitative or thematic data analysis are not applicable to this work. Additionally, we used generative AI tools to edit the paper for readability. We have reviewed all AI-assisted work: every AI-assisted calculation was checked by the authors, and all AI-assisted code was reviewed and tested by the authors. We take responsibility for the final content of this work, including text, claims or artifacts produced with the aid of generative AI.

\subsection*{Ethics statement}
Our work is methodological in nature and uses public datasets. We declare no ethical issues corresponding to human subjects, potential conflicts of interest and sponsorship, discrimination/bias/fairness concerns, privacy and security issues, legal
compliance, and research integrity issues. While generative AI tools can cause harm, our work in this context is at the research scale.

\subsection*{Reproducibility statement}
We provide extensive details on our experiments in the paper itself and the appendices. We also provide an anonymized link to the code producing the experiments: ~\url{https://www.github.com/AnonrsySubmitter/JUD}.

\bibliography{jump_up_down}
\bibliographystyle{iclr2027_conference}

\appendix

\section{Poisson and Binomial distributions}
\label{app:binomial_poisson}

In this section we derive some properties of the Poisson and Binomial distributions which we will repeatedly use. To ease the notation we set $[\dd]:=\{1,\ldots,\dd\}$  and recall that $\{e_i\}_{i=1}^{\dd}$ is the standard basis. We denote
\begin{equation}
\label{eq:notation_poss_bin}
\poss_{\alpha}:=\pos(\alpha)\qquad\textnormal{and}\qquad \binn_{\n,r}:=\bin(\n;r),
\end{equation}
for $\alpha\in \R_{\ge 0}^{\dd}$, $\n\in \N^{\dd}$, and $r\in [0,1]^{\dd}$, i.e.,
\begin{equation}
\label{eq:poss_alpha}
\poss_{\alpha}(\x)=\prod_{i=1}^{\dd}\poss_{\alpha^i}(\x^i)=\prod_{i=1}^{\dd}e^{-\alpha^i}\frac{(\alpha^i)^{\x^i}}{\x^i!},
\end{equation}
and
\begin{equation}
\label{eq:bin_nr}
\binn_{\n,r}(\x)=\prod_{i=1}^{\dd}\binn_{\n^i,r^i}(\x^i)=\prod_{i=1}^{\dd}1_{\x^i\le \n^i}{\n^i\choose \x^i}(r^i)^{\x^i}(1-r^i)^{\n^i-\x^i}.
\end{equation}
We will use the convention $\poss_{\alpha}(\x)=0$ for $\x\notin\N^{\dd}$ and $\binn_{\n,r}(\x)=0$ if $x^i\notin \{0,\ldots, n^i\} $ for some $i\in [\dd]$. 
\begin{lemma}[Properties of Poisson distribution]
Let $[0,\infty)\ni t\mapsto \alpha_t\in \R^{\dd}_{\ge 0}$ be differentiable. Then, at times when $\alpha_t\in\R_{>0}^{\dd}$,
\begin{equation}
\label{eq:poss_pde}
\partial_t\poss_{\alpha_t}(\x)=\poss_{\alpha_t}(\x)\sum_{i=1}^{\dd}\left(\x^i\frac{\dot{\alpha}^i_t}{\alpha^i_t}-\dot{\alpha}^i_t\right),\qquad \x\in \N^{\dd},
\end{equation}
and  
\begin{equation}
\label{eq:poss_idn}
\poss_{\alpha_t}(\x-e_i)=\frac{\x^i}{\alpha^i_t}\poss_{\alpha_t}(\x),\qquad \x\in \N^{\dd},\qquad i\in [\dd].
\end{equation}
Moreover, for all $t\ge 0$,
\begin{equation}
\label{eq:poss_pde_idn}
\partial_t\poss_{\alpha_t}(\x)=\sum_{i=1}^{\dd}\dot{\alpha}^i_t\,(\poss_{\alpha_t}(\x-e_i)-\poss_{\alpha_t}(\x)),\qquad \x\in \N^{\dd}.
\end{equation}
\end{lemma}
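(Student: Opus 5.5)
The lemma factorizes over coordinates, since $\poss_{\alpha_t}(\x)=\prod_{i}\poss_{\alpha^i_t}(\x^i)$ by \eqref{eq:poss_alpha}. We therefore reduce to one-dimensional identities and then reassemble them with the product rule. There are three steps: the identity \eqref{eq:poss_idn}, the logarithmic-derivative formula \eqref{eq:poss_pde}, and finally \eqref{eq:poss_pde_idn}, which follows from the first two when $\alpha_t\in\R^{\dd}_{>0}$ and requires a separate boundary check when some coordinate $\alpha^i_t$ vanishes.

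For \eqref{eq:poss_idn}, only the $i$th factor changes when $\x$ is replaced by $\x-e_i$. If $\x^i\ge 1$, then
\[
e^{-\alpha}\frac{\alpha^{\x^i-1}}{(\x^i-1)!}=\frac{\x^i}{\alpha}\,e^{-\alpha}\frac{\alpha^{\x^i}}{\x^i!}, \qquad \alpha=\alpha^i_t .
\]
If $\x^i=0$, both sides are zero: the left side by the convention $\poss_\alpha(\x)=0$ for $\x\notin\N^{\dd}$, the right side because of the factor $\x^i$.

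For \eqref{eq:poss_pde}, when all $\alpha^i_t>0$ we take logarithms:
\[
\log\poss_{\alpha_t}(\x)=\sum_i\bigl(-\alpha^i_t+\x^i\log\alpha^i_t-\log \x^i!\bigr).
\]
Differentiating in $t$ gives $\sum_i(\x^i\dot\alpha^i_t/\alpha^i_t-\dot\alpha^i_t)$, and multiplying by $\poss_{\alpha_t}(\x)$ yields \eqref{eq:poss_pde}. Substituting $\x^i\poss_{\alpha_t}(\x)/\alpha^i_t=\poss_{\alpha_t}(\x-e_i)$ from \eqref{eq:poss_idn} then gives \eqref{eq:poss_pde_idn} at all such times.

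The main obstacle is \eqref{eq:poss_pde_idn} at times where some $\alpha^i_t=0$, since dividing by $\alpha^i_t$ is then illegitimate. To avoid this, we differentiate each one-dimensional factor directly, without the logarithm:
\[
\frac{d}{dt}\Bigl(e^{-\alpha}\frac{\alpha^{k}}{k!}\Bigr)=\dot\alpha\Bigl(e^{-\alpha}\frac{\alpha^{k-1}}{(k-1)!}-e^{-\alpha}\frac{\alpha^k}{k!}\Bigr).
\]
This holds for $k\ge1$ by the chain rule, including at $\alpha=0$ where the map $\alpha\mapsto\alpha^k$ is smooth. For $k=0$ it reads $-\dot\alpha e^{-\alpha}$, consistent with the convention that the first term vanishes. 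In one dimension this is exactly $\partial_t\poss_{\alpha_t}(k)=\dot\alpha_t(\poss_{\alpha_t}(k-1)-\poss_{\alpha_t}(k))$, valid for every $t\ge0$.

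Finally, we apply the product rule to $\prod_i\poss_{\alpha^i_t}(\x^i)$. The $i$th term replaces the $i$th factor by $\dot\alpha^i_t\bigl(\poss_{\alpha^i_t}(\x^i-1)-\poss_{\alpha^i_t}(\x^i)\bigr)$ and leaves the other factors unchanged. Recombining these factors gives $\dot\alpha^i_t\bigl(\poss_{\alpha_t}(\x-e_i)-\poss_{\alpha_t}(\x)\bigr)$, and summing over $i$ yields \eqref{eq:poss_pde_idn} for all $t\ge0$. This argument needs only differentiability of $t\mapsto\alpha_t$.
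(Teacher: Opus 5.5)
Your proposal is correct and follows the paper's proof essentially step for step: the logarithmic derivative for \eqref{eq:poss_pde}, a direct factor computation for \eqref{eq:poss_idn}, their combination for \eqref{eq:poss_pde_idn} at positive parameters, and direct differentiation of the mass when some $\alpha^i_t=0$. Your explicit handling of the case $\x^i=0$ (via the convention $\poss_{\alpha}(\x)=0$ off $\N^{\dd}$) and of the vanishing-parameter case (via the one-dimensional chain rule plus the product rule) simply spells out details the paper leaves implicit.
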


\begin{proof}
\cref{eq:poss_pde} follows from
\begin{align*}
\partial_t\log\poss_{\alpha_t}(\x)=\partial_t\sum_{i=1}^{\dd}\log\left(e^{-\alpha^i_t}\frac{(\alpha^i_t)^{\x^i}}{\x^i!}\right)=\sum_{i=1}^{\dd}\left(\x^i\frac{\dot{\alpha}^i_t}{\alpha^i_t}-\dot{\alpha}^i_t\right),
\end{align*}
while \cref{eq:poss_idn} follows from
\begin{align*}
\poss_{\alpha_t}(\x-e_i)=e^{-\alpha^i_t}\frac{(\alpha^i_t)^{(\x^i-1)}}{(\x^i-1)!}\prod_{j\neq i}\poss_{\alpha^j_t}(\x^j)=\frac{\x^i}{\alpha^i_t}\poss_{\alpha^i_t}(\x^i)\prod_{j\neq i}\poss_{\alpha^j_t}(\x^j).
\end{align*}
\cref{eq:poss_pde_idn} follows from the combination of \eqref{eq:poss_pde} and \eqref{eq:poss_idn}. At zero parameter values, it follows by direct differentiation of the probability mass. 
\end{proof}

\begin{lemma}[Properties of Binomial distribution]
Fix $\n\in \N^{\dd}$ and let $[0,\infty)\ni t\mapsto r_t\in [0,1]^{\dd}$ be differentiable. Then, at times when $r_t\in(0,1)^{\dd}$,
\begin{equation}
\label{eq:bin_pde}
\partial_t \binn_{\n,r_t}(\x)= \binn_{\n,r_t}(\x)\sum_{i=1}^{\dd}\left(\x^i\frac{\dot{r}_t^i}{r^i_t}-(\n^i-\x^i)\frac{\dot{r}^i_t}{1-r^i_t}\right),\qquad \x\in \N^{\dd},
\end{equation}
and
\begin{equation}
\label{eq:bin_idn}
 \binn_{\n,r_t}(\x+e_i)=\frac{\n^i-\x^i}{\x^i+1}\frac{r^i_t}{1-r^i_t} \,\binn_{\n,r_t}(\x),\qquad \x\in \N^{\dd},\qquad i\in [\dd].
\end{equation}
Consequently, 
\begin{equation}
\label{eq:bin_pde_idn}
\partial_t \binn_{\n,r_t}(\x)= \sum_{i=1}^{\dd}\frac{\dot{r}_t^i}{r^i_t}\left(\x^i\binn_{\n,r_t}(\x)-(\x^i+1)\binn_{\n,r_t}(\x+e_i)\right),\qquad \x\in \N^{\dd}.
\end{equation}
\end{lemma}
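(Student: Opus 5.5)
The plan is to verify the three identities by direct computation on the product mass function \eqref{eq:bin_nr}, treating one coordinate at a time. The product structure makes the multivariate statements reduce to the one-dimensional ones. Since the other factors $\binn_{\n^j,r^j_t}(\x^j)$ for $j\neq i$ are unchanged when only $\x^i$ is shifted, it suffices to handle a single factor. For the time derivative, the log-derivative of a product is the sum of the log-derivatives of the factors.

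\textbf{Step 1: the derivative identity \eqref{eq:bin_pde}.} At times with $r_t\in(0,1)^{\dd}$, for $\x$ in the support ($0\le \x^i\le \n^i$), I will compute
\[
\partial_t\log\binn_{\n,r_t}(\x)=\sum_{i=1}^{\dd}\partial_t\bigl(\x^i\log r^i_t+(\n^i-\x^i)\log(1-r^i_t)\bigr),
\]
which gives \eqref{eq:bin_pde} immediately. Outside the support both sides vanish, so the identity holds trivially there.

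\textbf{Step 2: the ratio identity \eqref{eq:bin_idn}.} For $\x^i<\n^i$, I will use
\[
{\n^i\choose \x^i+1}={\n^i\choose \x^i}\frac{\n^i-\x^i}{\x^i+1},
\]
together with the extra factor $r^i_t/(1-r^i_t)$ coming from the powers of $r^i_t$ and $1-r^i_t$. For $\x^i=\n^i$, both sides are zero, because the left side lies outside the support and the prefactor $\n^i-\x^i$ vanishes. For $\x^i>\n^i$, both sides are zero by the convention on $\binn_{\n,r}$.

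\textbf{Step 3: combine into \eqref{eq:bin_pde_idn}.} I will rewrite the second term of \eqref{eq:bin_pde} using \eqref{eq:bin_idn}:
\[
(\n^i-\x^i)\frac{\dot r^i_t}{1-r^i_t}\binn_{\n,r_t}(\x)=\frac{\dot r^i_t}{r^i_t}(\x^i+1)\binn_{\n,r_t}(\x+e_i).
\]
Substituting this into \eqref{eq:bin_pde} yields \eqref{eq:bin_pde_idn}.

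The only delicate point is the word ``consequently'': whether \eqref{eq:bin_pde_idn} is claimed only on $(0,1)^{\dd}$, or also at the boundary times where some $r^i_t=0$ or $1$ (which is where $\Rdi_0=0$ and $\Rdi_{\TT}=1$ sit). Like the Poisson lemma, I read it as holding where it makes sense, namely where $r^i_t>0$, since $1/r^i_t$ appears in the formula. If boundary times with $r^i_t=1$ need to be included, I will check them separately by directly differentiating the polynomial mass function in $r^i_t$. Apart from this boundary bookkeeping, everything is a routine calculation.
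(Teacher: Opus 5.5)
Your proposal is correct and follows essentially the same route as the paper: log-differentiation of the product mass function gives \eqref{eq:bin_pde}, the binomial-coefficient ratio $\binom{\n^i}{\x^i+1}=\binom{\n^i}{\x^i}\frac{\n^i-\x^i}{\x^i+1}$ gives \eqref{eq:bin_idn}, and substituting the latter into the second term of the former gives \eqref{eq:bin_pde_idn}. Your explicit handling of the off-support cases matches what the paper does implicitly with the indicator $1_{\x^i+1\le \n^i}$, so it is just a slightly more careful version of the same argument.
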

\begin{proof}
\cref{eq:bin_pde} follows from 
\begin{align*}
&\partial_t\log \binn_{\n,r_t}(\x)=\partial_t\sum_{i=1}^{\dd}\log\left(1_{\x^i\le \n^i}{\n^i\choose \x^i}(r_t^i)^{\x^i}(1-r_t^i)^{\n^i-\x^i}\right)\\
&=\partial_t\sum_{i=1}^{\dd}\left[\x^i\log (r_t^i)+(\n^i-\x^i)\log (1-r_t^i)\right]=\sum_{i=1}^{\dd}\left(\x^i\frac{\dot{r}_t^i}{r_t^i}-(\n^i-\x^i)\frac{\dot{r}_t^i}{1-r_t^i}\right),
\end{align*}
while \cref{eq:bin_idn} follows from 
\begin{align*}
\binn_{\n,r_t}(\x+e_i)&=1_{\x^i+1\le \n^i}{\n^i\choose \x^i+1}(r^i_t)^{\x^i+1}(1-r^i_t)^{\n^i-\x^i-1}\prod_{j\neq i}\binn_{\n^j,r^j_t}(\x^j) \\
&=1_{\x^i+1\le \n^i}\frac{\n^i-\x^i}{\x^i+1}\frac{r^i_t}{1-r^i_t}\binn_{\n^i,r^i_t}(\x^i)\prod_{j\neq i}^{\dd}\binn_{\n^j,r^j_t}(\x^j).
\end{align*}
\cref{eq:bin_pde_idn} follows from the combination of \eqref{eq:bin_pde} and  \eqref{eq:bin_idn}. 
\end{proof}

\begin{lemma}[Convolution of Binomial and Poisson]
Fix $\n\in \N^{\dd}$ and let $[0,\infty)\ni t\mapsto r_t\in [0,1]^{\dd}$ and $[0,\infty)\ni t\mapsto \alpha_t\in \R^{\dd}_{\ge 0}$ be differentiable. Then, for $\x\in\N^{\dd}$ and times when $r_t\in(0,1)^{\dd}$ and $\alpha_t\in\R_{>0}^{\dd}$, 
\begin{equation}
\begin{split}
\label{eq:conv_bin_poss_partial_t}
&\partial_t(\binn_{\n,r_t}\ast \poss_{\alpha_t})(\x)\\
&=\sum_{i=1}^{\dd}\left(\frac{\dot{r}^{\idx}_t}{r^i_t(1-r^i_t)}\x^i-\dot{\alpha}^i_t-\frac{\dot{r}^{\idx}_t}{1-r^i_t}\n^{\idx}\right)(\binn_{\n,r_t}\ast \poss_{\alpha_t})(\x)\\
&-\sum_{i=1}^{\dd}\left(\frac{\dot{r}^{\idx}_t}{r^i_t(1-r^i_t)}-\frac{\dot{\alpha}^i_t}{\alpha^i_t}\right)\alpha^i_t\, (\binn_{\n,r_t}\ast \poss_{\alpha_t})(\x-e_i). 
\end{split}
\end{equation}
\end{lemma}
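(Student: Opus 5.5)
The plan is to differentiate the finite convolution sum term by term and then rewrite everything as shifts of $\binn_{\n,r_t}\ast\poss_{\alpha_t}$. Write
\[
(\binn_{\n,r_t}\ast\poss_{\alpha_t})(\x)=\sum_{\y}\binn_{\n,r_t}(\y)\,\poss_{\alpha_t}(\x-\y),
\]
where $\y$ ranges over the finite set $\{0,\ldots,\n^1\}\times\cdots\times\{0,\ldots,\n^{\dd}\}$. Differentiation therefore commutes with the sum, and the product rule gives two pieces:
\[
(\partial_t\binn_{\n,r_t})\ast\poss_{\alpha_t}\qquad\text{and}\qquad \binn_{\n,r_t}\ast(\partial_t\poss_{\alpha_t}).
\]

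For the Poisson piece I would use \eqref{eq:poss_pde_idn}. Because translation commutes with convolution, it becomes
\[
\sum_{i=1}^{\dd}\dot\alpha^i_t\Bigl((\binn_{\n,r_t}\ast\poss_{\alpha_t})(\x-e_i)-(\binn_{\n,r_t}\ast\poss_{\alpha_t})(\x)\Bigr).
\]
Here the convention $\poss_\alpha(\z)=0$ for $\z\notin\N^{\dd}$ keeps the boundary terms consistent.

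For the Binomial piece I would deliberately use the first form \eqref{eq:bin_pde} rather than \eqref{eq:bin_pde_idn}. The latter produces a shift $\y+e_i$, which would lead to $\x+e_i$, a point absent from the target formula. With \eqref{eq:bin_pde}, the Binomial piece equals
\[
\sum_{i=1}^{\dd}\left[\frac{\dot r^i_t}{r^i_t(1-r^i_t)}\,S_i(\x)-\frac{\dot r^i_t}{1-r^i_t}\,\n^i\,(\binn_{\n,r_t}\ast\poss_{\alpha_t})(\x)\right],
\qquad
S_i(\x):=\sum_{\y}\y^i\,\binn_{\n,r_t}(\y)\,\poss_{\alpha_t}(\x-\y).
\]
This uses $\frac{\y^i}{r}+\frac{\y^i}{1-r}-\frac{\n^i}{1-r}=\frac{\y^i}{r(1-r)}-\frac{\n^i}{1-r}$.

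The key step, and the only real obstacle, is to express $S_i(\x)$ through the convolution itself. I would use the Poisson identity \eqref{eq:poss_idn} applied at $\x-\y$, in the form $(\x^i-\y^i)\,\poss_{\alpha_t}(\x-\y)=\alpha^i_t\,\poss_{\alpha_t}(\x-\y-e_i)$. One has to check that this holds for every $\y$, including when $\x-\y\notin\N^{\dd}$ or $\x^i=\y^i$, where both sides vanish by the zero convention. Multiplying by $\binn_{\n,r_t}(\y)$ and summing gives
\[
\x^i(\binn_{\n,r_t}\ast\poss_{\alpha_t})(\x)-S_i(\x)=\alpha^i_t\,(\binn_{\n,r_t}\ast\poss_{\alpha_t})(\x-e_i),
\]
so $S_i(\x)=\x^i(\binn_{\n,r_t}\ast\poss_{\alpha_t})(\x)-\alpha^i_t(\binn_{\n,r_t}\ast\poss_{\alpha_t})(\x-e_i)$.

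Substituting this and collecting terms finishes the proof. The coefficient of $(\binn_{\n,r_t}\ast\poss_{\alpha_t})(\x)$ is
\[
\frac{\dot r^i_t}{r^i_t(1-r^i_t)}\x^i-\frac{\dot r^i_t}{1-r^i_t}\n^i-\dot\alpha^i_t .
\]
The coefficient of $(\binn_{\n,r_t}\ast\poss_{\alpha_t})(\x-e_i)$ is
\[
-\frac{\dot r^i_t}{r^i_t(1-r^i_t)}\alpha^i_t+\dot\alpha^i_t=-\left(\frac{\dot r^i_t}{r^i_t(1-r^i_t)}-\frac{\dot\alpha^i_t}{\alpha^i_t}\right)\alpha^i_t .
\]
These are exactly \eqref{eq:conv_bin_poss_partial_t}. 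The hypotheses $r_t\in(0,1)^{\dd}$ and $\alpha_t\in\R^{\dd}_{>0}$ are used only to justify \eqref{eq:bin_pde} and the divisions by $r^i_t$, $1-r^i_t$ and $\alpha^i_t$.
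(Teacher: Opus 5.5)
Your proof is correct and is essentially the paper's argument: product rule on the finite convolution sum, the log-derivative \eqref{eq:bin_pde} for the Binomial factor, and \eqref{eq:poss_idn} to turn $\sum_{\y}\y^i\binn_{\n,r_t}(\y)\poss_{\alpha_t}(\x-\y)$ into $\x^i(\binn_{\n,r_t}\ast\poss_{\alpha_t})(\x)-\alpha^i_t(\binn_{\n,r_t}\ast\poss_{\alpha_t})(\x-e_i)$. The differences are cosmetic. The paper first treats a single coordinate and then assembles the $\dd$-dimensional formula by the product rule over coordinates. It also differentiates the Poisson factor with the log-derivative form \eqref{eq:poss_pde}, so its contribution is folded into the same first-moment sum before \eqref{eq:poss_idn} is applied; you instead use the shift form \eqref{eq:poss_pde_idn} and apply \eqref{eq:poss_idn} only to the Binomial moment.
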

\begin{proof}
We start with a one-dimensional convolution. For fixed $i\in [\dd]$, 
\begin{align*}
&\partial_t(\binn_{\n^{\idx},r^{\idx}_t}\ast \poss_{\alpha^{\idx}_t})(\x^{\idx})=\partial_t\sum_{\kk^i=0}^{\n^{\idx}}\binn_{\n^{\idx},r^{\idx}_t}(\kk^i)\,\poss_{\alpha^{\idx}_t}(\x^{\idx}-\kk^i)\\
&=\sum_{\kk^i=0}^{\n^{\idx}}\left\{\poss_{\alpha^{\idx}_t}(\x^{\idx}-\kk^i)\,\binn_{\n^{\idx},r^{\idx}_t}(\kk^i)\left(\kk^i\frac{\dot{r}^{\idx}_t}{r^i_t}-(\n^i-\kk^i)\frac{\dot{r}^i_t}{1-r^i_t}\right)\right\}\\
&+\sum_{\kk^i=0}^{\n^{\idx}}\left\{\poss_{\alpha^{\idx}_t}(\x^{\idx}-\kk^i)\,\binn_{\n^{\idx},r^{\idx}_t}(\kk^i)\left((\x^i-\kk^i)\frac{\dot{\alpha}^i_t}{\alpha^i_t}-\dot{\alpha}^i_t\right)\right\}\\
&=\left(\frac{\dot{\alpha}^i_t}{\alpha^i_t}\x^{\idx}-\dot{\alpha}^i_t-\n^{\idx}\frac{\dot{r}^{\idx}_t}{1-r^i_t}\right)(\binn_{\n^{\idx},r^{\idx}_t}\ast \poss_{\alpha^{\idx}_t})(\x^{\idx})\\
&+\left(\frac{\dot{r}^{\idx}_t}{r^i_t(1-r^i_t)}-\frac{\dot{\alpha}^i_t}{\alpha^i_t}\right)\sum_{\kk^i=0}^{\n^{\idx}}\kk^i\,\binn_{\n^{\idx},r^{\idx}_t}(\kk^i)\,\poss_{\alpha^{\idx}_t}(\x^{\idx}-\kk^i).
\end{align*}
By \cref{eq:poss_idn},
\begin{align*}
&\sum_{\kk^i=0}^{\n^{\idx}}\kk^i\,\binn_{\n^{\idx},r^{\idx}_t}(\kk^i)\,\poss_{\alpha^{\idx}_t}(\x^{\idx}-\kk^i)\\
&=\x^i\sum_{\kk^i=0}^{\n^{\idx}}\binn_{\n^{\idx},r^{\idx}_t}(\kk^i)\,\poss_{\alpha^{\idx}_t}(\x^{\idx}-\kk^i)-\sum_{\kk^i=0}^{\n^{\idx}}(\x^i-\kk^i)\,\poss_{\alpha^{\idx}_t}(\x^{\idx}-\kk^i)\binn_{\n^{\idx},r^{\idx}_t}(\kk^i)\\
&=\x^i(\binn_{\n^{\idx},r^{\idx}_t}\ast \poss_{\alpha^{\idx}_t})(\x^{\idx})-\alpha^i_t\sum_{\kk^i=0}^{\n^{\idx}}\poss_{\alpha^{\idx}_t}(\x^{\idx}-1-\kk^i)\,\binn_{\n^{\idx},r^{\idx}_t}(\kk^i)\\
&=\x^i(\binn_{\n^{\idx},r^{\idx}_t}\ast \poss_{\alpha^{\idx}_t})(\x^{\idx})-\alpha^i_t\,(\binn_{\n^{\idx},r^{\idx}_t}\ast \poss_{\alpha^{\idx}_t})(\x^{\idx}-1),
\end{align*}
so we conclude 
\begin{align*}
&\partial_t(\binn_{\n^{\idx},r^{\idx}_t}\ast \poss_{\alpha^{\idx}_t})(\x^{\idx})=\\
&\left(\frac{\dot{r}^{\idx}_t}{r^i_t(1-r^i_t)}\x^i-\dot{\alpha}^i_t-\frac{\dot{r}^{\idx}_t}{1-r^i_t}\n^{\idx}\right)(\binn_{\n^{\idx},r^{\idx}_t}\ast \poss_{\alpha^{\idx}_t})(\x^{\idx})-\left(\frac{\dot{r}^{\idx}_t}{r^i_t(1-r^i_t)}-\frac{\dot{\alpha}^i_t}{\alpha^i_t}\right)\alpha^i_t\,(\binn_{\n^{\idx},r^{\idx}_t}\ast \poss_{\alpha^{\idx}_t})(\x^{\idx}-1).
\end{align*}
It follows that 
\begin{align*}
&\partial_t(\binn_{\n,r_t}\ast \poss_{\alpha_t})(\x)=\sum_{\idx=1}^{\dd}\left(\prod_{j\neq \idx}^{\dd}(\binn_{\n^j,r^j_t}\ast \poss_{\alpha^j_t})(\x^j)\right)\partial_t(\binn_{\n^{\idx},r^{\idx}_t}\ast \poss_{\alpha^{\idx}_t})(\x^{\idx})\\
&=\sum_{i=1}^{\dd}\left(\frac{\dot{r}^{\idx}_t}{r^i_t(1-r^i_t)}\x^i-\dot{\alpha}^i_t-\frac{\dot{r}^{\idx}_t}{1-r^i_t}\n^{\idx}\right)(\binn_{\n,r_t}\ast \poss_{\alpha_t})(\x)\\
&-\sum_{i=1}^{\dd}\left(\frac{\dot{r}^{\idx}_t}{r^i_t(1-r^i_t)}-\frac{\dot{\alpha}^i_t}{\alpha^i_t}\right)\alpha^i_t\,(\binn_{\n,r_t}\ast \poss_{\alpha_t})(\x-e_i).
\end{align*}
\end{proof}
\section{The Binomial-Poisson process}
\label{app:binomial_poisson_process}
The main result of this section is the Binomial-Poisson part of \cref{thm:main}:
\begin{theorem}[The Binomial-Poisson process]
\label{thm:main_intro}
Under the schedule assumptions of \cref{thm:main}, set the noise schedules $\Rdi_t\in [0,1]^{\dd}, \Ru_t\in \R_{\ge 0}^{\dd}$ to satisfy
\begin{equation}
\label{eq:binpossrates}
\Rdi^i_0=0,\quad\Rdi^i_{\TT}=1, \quad\Ru^i_0=0,  \qquad \ru^i_t:=\frac{\diff}{\diff t}\Ru^i_t,\quad \rdi^i_t:=\frac{\diff}{\diff t}\Rdi^i_t,\qquad i\in [\dd].
\end{equation}
Let $(\trans_t)_{t\in [0,\TT]}$ be the marginals of $(\pp_t)_{t\in [0,\TT]}$,
\begin{equation}
\label{eq:perturb_intro_thm}  
\pp_t:=\data-\lrv_t+\possrv_t,\qquad\textnormal{with} \qquad \lrv_t\sim\bin(\data,\Rdi_t),\quad \possrv_t\sim \pos(\Ru_t),
\end{equation} 
where $\possrv_t$ is independent of $(\data,\lrv_t)$, and define the latent denoisers 
\begin{equation}
\label{eq:denoisers_bin_poss_thm}
\denbin(\x,t):=\EE[\lrv_t|\pp_t=\x],\qquad \denpos(\x,t):=\EE[\possrv_t|\pp_t=\x].
\end{equation}
Write $s=\TT-t$ for reverse time. The time reversal $(\ppa_s)_{s\in [0,\TT]}$ of the forward CTMC in \cref{prop:aux_markov} has initial distribution $\ppa_0\sim\trans_{\TT}=\pos(\Ru_{\TT})$ and, for $0<s<\TT$, rate matrices
\begin{equation}
\label{eq:chain_intro}
\rate_{\x\y}(s)=\begin{cases}
\frac{\ru^i_{\TT-s}}{\Ru^i_{\TT-s}}\,\denpos^i(\x,\TT-s) \qquad &\textnormal{if}\qquad \y=\x-e_i,\qquad i\in [\dd]\\
\frac{\rdi_{\TT-s}^i}{\Rdi^i_{\TT-s}}\, \denbin^i(\x,\TT-s)\qquad &\textnormal{if}\qquad \y=\x+e_i,\qquad i\in [\dd]\\
0 \qquad & \textnormal{if} \qquad \y\notin\{\x, \x\pm e_1,\ldots, \x\pm e_{\dd}\}\\
 -\sum_{\y\in \{\x\pm e_1,\ldots,\x\pm e_{\dd}\}}\rate_{\x\y}(s)\qquad &\textnormal{if}\qquad \y=\x
\end{cases}.
\end{equation}
In particular,
\begin{equation}
\label{eq:markov_proj_intro}
\ppa_s\sim\trans_{\TT-s}\qquad\textnormal{for all} \qquad s\in [0,\TT].
\end{equation}
\end{theorem}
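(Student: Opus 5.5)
The plan is to construct an explicit forward CTMC on $\N^{\dd}$ whose marginals are $(\trans_t)$ (this is the chain of \cref{prop:aux_markov}). Then I would compute its time reversal through the standard formula $\rate_{\x\y}(s)=\frac{\trans_t(\y)}{\trans_t(\x)}\,\overrightarrow{\rate}_{\y\x}(t)$ with $t=\TT-s$, and identify the resulting ratios with the latent denoisers using discrete Tweedie identities for the Poisson and Binomial laws.

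\textbf{Step 1: the forward chain.} Write $\pp_t=\mathrm S_t+\possrv_t$ with $\mathrm S_t:=\data-\lrv_t\sim\bin(\data,1-\Rdi_t)$, the surviving part of the data. The pair $(\mathrm S_t,\possrv_t)$ is realized by a Markov process in which:
\begin{itemize}
\item each surviving unit in coordinate $i$ dies independently at hazard $\rdi^i_t/(1-\Rdi^i_t)$, so survival to time $t$ has probability $1-\Rdi^i_t$;
\item births in coordinate $i$ arrive as an independent Poisson process of rate $\ru^i_t$.
\end{itemize}
Projecting onto $\pp_t$ (a Markov projection / mimicking argument) gives a CTMC with rates
\[
\overrightarrow{\rate}_{\x,\x+e_i}(t)=\ru^i_t,\qquad \overrightarrow{\rate}_{\x,\x-e_i}(t)=\frac{\rdi^i_t}{1-\Rdi^i_t}\,\EE[\mathrm S^i_t\,|\,\pp_t=\x].
\]
I would verify directly that $\trans_t$ solves the Kolmogorov forward equation for these rates. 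To do this, condition on $\data=\n$, so that the conditional law is $\binn_{\n,1-\Rdi_t}\ast\poss_{\Ru_t}$, and differentiate using \cref{eq:conv_bin_poss_partial_t}. The $\x$-dependent coefficient there is rewritten as $\x^i\binn\ast\poss(\x)-\Ru^i_t\,\binn\ast\poss(\x-e_i)=\sum_k k\,\binn(k)\poss(\x-k)$, which is the unnormalized $\EE[\mathrm S^i\mathbf 1\{\pp=\x\}]$. Finally, average over $\target$. Uniqueness for this (locally finite, nonexplosive) chain then gives the marginals.

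\textbf{Step 2: reversal and the Poisson Tweedie identity.} For $0<t<\TT$ we have $\Ru_t>0$, so $\trans_t(\x)>0$ for all $\x\in\N^{\dd}$ and the reversal formula is well defined. For $\y=\x-e_i$ the reversed rate is $\ru^i_t\,\trans_t(\x-e_i)/\trans_t(\x)$. Using \cref{eq:poss_idn} in the form $k\,\poss_{\Ru_t}(k)=\Ru^i_t\,\poss_{\Ru_t}(k-e_i)$ inside the convolution gives
\[
\EE[\possrv^i_t\mathbf 1\{\pp_t=\x\}]=\Ru^i_t\,\trans_t(\x-e_i).
\]
Hence the reversed rate equals $\frac{\ru^i_t}{\Ru^i_t}\denpos^i(\x,t)$.

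\textbf{Step 3: reversal and the Binomial Tweedie identity.} For $\y=\x+e_i$ the reversed rate is
\[
\frac{\rdi^i_t}{1-\Rdi^i_t}\,\frac{\EE[\mathrm S^i_t\mathbf 1\{\pp_t=\x+e_i\}]}{\trans_t(\x)}.
\]
Condition on $\data=\n$ and apply \cref{eq:bin_idn} with success probability $1-\Rdi_t$, i.e.\ $(j+1)\binn_{\n,1-r}(j+e_i)=(\n^i-j^i)\frac{1-r^i}{r^i}\binn_{\n,1-r}(j)$, shifting the survivor index by one. This yields
\[
\EE[\mathrm S^i_t\mathbf 1\{\pp_t=\x+e_i\}]=\frac{1-\Rdi^i_t}{\Rdi^i_t}\,\EE[\lrv^i_t\mathbf 1\{\pp_t=\x\}],
\]
since $\n^i-\mathrm S^i=\lrv^i$. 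The factors $1-\Rdi^i_t$ cancel, leaving $\frac{\rdi^i_t}{\Rdi^i_t}\denbin^i(\x,t)$. The diagonal entries follow from the row-sum convention, and $\ppa_0\sim\trans_{\TT}=\pos(\Ru_{\TT})$ because $\Rdi_{\TT}=1$ kills all data. Then \cref{eq:markov_proj_intro} follows because the time reversal of a chain with marginals $\trans_t$ has marginals $\trans_{\TT-s}$.

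\textbf{Main obstacle.} I expect the hardest part to be Step 1, namely justifying rigorously that the projected chain reproduces the marginals. The coefficients in \cref{eq:conv_bin_poss_partial_t} must be matched exactly to the flux form after averaging over $\target$, and the boundary times need care: the death hazard blows up as $\Rdi^i_t\to1$, and $\Ru_t=0$ at $t=0$. I would handle these by working on compact subintervals of $(0,\TT)$ and passing to the limit through continuity of $t\mapsto\trans_t$.
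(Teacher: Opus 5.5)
Your proposal is correct and is essentially the paper's proof. It uses the same ingredients: the forward chain of \cref{prop:aux_markov} (up rate $\ru^i_t$, down rate $\frac{\rdi^i_t}{1-\Rdi^i_t}\EE[\mathrm S^i_t\,|\,\pp_t=\x]$), the time-reversal formula, the Poisson Tweedie identity \cref{eq:tweedie} for the reversed down-moves, and the Binomial index shift \cref{eq:bin_idn} for the reversed up-moves. The differences are cosmetic: your identity $\EE[\mathrm S^i_t\mathbf 1\{\pp_t=\x+e_i\}]=\frac{1-\Rdi^i_t}{\Rdi^i_t}\EE[\lrv^i_t\mathbf 1\{\pp_t=\x\}]$ is \cref{lem:den-dendd} combined with \cref{eq:latents_data_latents}, just without the detour through $\dend$; and your Kolmogorov check via \cref{eq:conv_bin_poss_partial_t} follows \cref{lem:marg_bin_poss}, which needs that same shift identity to reach flux form, whereas the paper's proof of \cref{prop:aux_markov} gets the flux form directly from \cref{eq:bin_pde_idn} and \cref{eq:poss_pde_idn}.
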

We start by simplifying the notation and write
\begin{equation}
\label{eq:reparm_bin}
\binnrv_t:=\data-\lrv_t\qquad \text{and}\qquad \Rd_t:=1-\Rdi_t,
\end{equation}
so in this section we focus on the perturbations of $\data\sim\target$, for a given $t\in [0,\TT]$, by
\begin{equation}
\label{eq:perturb_app}  
\pp_t:=\binnrv_t+\possrv_t,\qquad\textnormal{with} \qquad \binnrv_t\sim\bin(\data,\Rd_t),\quad \possrv_t\sim \pos(\Ru_t),
\end{equation} 
where $\possrv_t$ is independent of $(\data,\binnrv_t)$. We will use
\begin{equation}
\label{eq:denoiser_def_app_intro}
\den(\x,t):=\EE[\binnrv_t|\pp_t=\x]=\x-\EE[\possrv_t|\pp_t=\x],
\end{equation}
and note that we have the relations
\begin{align}
&\denbin(\x,t)=\dend(\x,t)-\den(\x,t)\label{eq:latents_data_latents},\\
&\denpos(\x,t)=\x-\den(\x,t).\label{eq:latents_data_latents2}
\end{align}

We begin by recording some of the conditional distributions of $(\data,\binnrv_t,\possrv_t)$. 
\begin{lemma}
\label{lem:conditional_dist}
Let $\trans_t$ be the distribution of $\pp_t$ given in \eqref{eq:perturb_app}. 
For $\x\in \N^{\dd}$,
\begin{equation}
\label{eq:marg_exp_bin_poss}
\trans_t(\x)=\sum_{\x_0\in \N^{\dd}}\target(\x_0)(\binn_{\x_0,\Rd_t}\ast \poss_{\Ru_t})(\x). 
\end{equation}
Fix $\x\in \N^{\dd}$, then, for $\y\in  \N^{\dd}$, 
\begin{equation}
\label{eq:Bt_given_Xt}
\trans_{\binnrv_t|\pp_t=\x}(\y)=\frac{\poss_{\Ru_t}(\x -\y)}{{\trans_t(\x)}}\sum_{\x_0\in\N^{\dd}}\target(\x_0)\binn_{\x_0,\Rd_t}(\y).
\end{equation}
Fix $\x,\z\in \N^{\dd}$, then, for $\y\in  \N^{\dd}$, 
\begin{align}
&\trans_{\binnrv_t|\pp_t=\x,\data=\z }(\y)=\frac{\binn_{\z,\Rd_t}(\y)\poss_{\Ru_t}(\x-\y)}{(\binn_{\z,\Rd_t}\ast \poss_{\Ru_t})(\x)
},\label{eq:Bt_given_XtNt}\\
&\trans_{\lrv_t|\pp_t=\x,\data=\z }(\y)=\frac{\binn_{\z,\Rdi_t}(\y)\poss_{\Ru_t}(\x+\y-\z)}{\sum_{\tilde{\y}\in \N^{\dd}} \binn_{\z,\Rdi_t}(\tilde{\y})\poss_{\Ru_t}(\x+\tilde{\y}-\z)},\label{eq:Lt_given_Xt}\\
&\trans_{\possrv_t|\pp_t=\x,\data=\z }(\y)=\frac{\binn_{\z,\Rdi_t}(\z+\y-\x)\poss_{\Ru_t}(\y)}{\sum_{\tilde{\y}\in \N^{\dd}} \binn_{\z,\Rdi_t}(\z+\tilde{\y}-\x)\poss_{\Ru_t}(\tilde{\y})}.\label{eq:Nt_given_Xt}
\end{align}

Fix $\y\in \N^{\dd}$, then, for $\x\in  \N^{\dd}$, 
\begin{equation}
\label{eq:data_given_Xt}
\trans_{\data|\pp_t=\y}(\x)=\frac{\target(\x)}{\trans_t(\y)}(\binn_{\x,\Rd_t}\ast \poss_{\Ru_t})(\y).
\end{equation}
\end{lemma}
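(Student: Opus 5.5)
The plan is to derive every formula in the lemma from one joint probability mass function. Conditioning on $\data=\x_0$ makes $\binnrv_t\sim\binn_{\x_0,\Rd_t}$, because $\data-\lrv_t$ with $\lrv_t\sim\bin(\x_0,\Rdi_t)$ is $\bin(\x_0,1-\Rdi_t)$. This holds coordinatewise by the symmetry $k\mapsto n-k$ of the Binomial. Also $\possrv_t\sim\poss_{\Ru_t}$ is independent of $(\data,\binnrv_t)$. Hence the joint law is
\begin{equation*}
\PP(\data=\z,\binnrv_t=\y,\possrv_t=\w)=\target(\z)\,\binn_{\z,\Rd_t}(\y)\,\poss_{\Ru_t}(\w),
\end{equation*}
for $\z,\y,\w\in\N^{\dd}$. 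All the statements then follow by marginalizing over the appropriate variables and dividing by the appropriate normalizer.

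The steps run as follows.

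\textbf{Marginal \eqref{eq:marg_exp_bin_poss}.} Since $\pp_t=\binnrv_t+\possrv_t$, set $\w=\x-\y$ and sum over $\y$ and $\z$. The inner sum over $\y$ is by definition $(\binn_{\z,\Rd_t}\ast\poss_{\Ru_t})(\x)$, using the conventions that the masses vanish off their supports.

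\textbf{Formula \eqref{eq:Bt_given_Xt}.} Compute $\PP(\binnrv_t=\y,\pp_t=\x)=\PP(\binnrv_t=\y,\possrv_t=\x-\y)$ and sum over $\z$. The Poisson factor pulls out of the sum; then divide by $\trans_t(\x)$.

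\textbf{Formula \eqref{eq:Bt_given_XtNt}.} Additionally fix $\data=\z$, which removes the sum over $\x_0$. The normalizer becomes the convolution, by the previous step restricted to fixed $\z$.

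\textbf{Formulas \eqref{eq:Lt_given_Xt} and \eqref{eq:Nt_given_Xt}.} These are changes of variables in the same computation. The event $\{\lrv_t=\y\}$ equals $\{\binnrv_t=\z-\y\}$ given $\data=\z$, and $\binn_{\z,\Rd_t}(\z-\y)=\binn_{\z,\Rdi_t}(\y)$. Then $\possrv_t=\x-\z+\y$. For $\possrv_t=\y$ one uses $\lrv_t=\z+\y-\x$. The denominators are written as the explicit sums over $\tilde\y$ rather than convolutions; each equals $\PP(\pp_t=\x\mid\data=\z)$.

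\textbf{Formula \eqref{eq:data_given_Xt}.} This is Bayes' rule: $\PP(\data=\x,\pp_t=\y)=\target(\x)(\binn_{\x,\Rd_t}\ast\poss_{\Ru_t})(\y)$, divided by $\trans_t(\y)$.

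The only point needing care is well-definedness of the conditional laws, that is, positivity of the denominators. At $t=0$ we have $\Ru_0=0$ and $\trans_0=\target$, so $\trans_t(\x)$ may vanish. I would state the formulas for $\x$ with $\trans_t(\x)>0$, respectively $\PP(\pp_t=\x,\data=\z)>0$, and use the stated conventions $\poss_\alpha(\x)=0$ off $\N^{\dd}$ and $\binn_{\n,r}(\x)=0$ off $\{0,\ldots,\n\}$. With these, the sums can run over all of $\N^{\dd}$ without tracking supports. The symmetry identity $\binn_{\z,\Rd_t}(\z-\y)=\binn_{\z,\Rdi_t}(\y)$ is the one small calculation. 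It follows immediately from ${n\choose k}={n\choose n-k}$ applied in each coordinate of \eqref{eq:bin_nr}.
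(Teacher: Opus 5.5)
Your proposal is correct and follows essentially the paper's route: write the joint law using the independence of $\possrv_t$ from $(\data,\binnrv_t)$, equivalently from $(\data,\lrv_t)$, and apply Bayes' rule. The only difference is cosmetic: the paper writes the joint law of $(\lrv_t,\possrv_t,\data)$ directly instead of passing through the symmetry $\binn_{\z,\Rd_t}(\z-\y)=\binn_{\z,\Rdi_t}(\y)$. Your remark on positivity of the denominators is a small rigor point the paper leaves implicit. They are automatically positive for $t\in(0,\TT]$, since $\Ru_t,\Rdi_t>0$ there, given also $\target(\z)>0$ when conditioning on $\data=\z$.
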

\begin{proof}
\cref{eq:marg_exp_bin_poss} holds by the definition \eqref{eq:perturb_app}. For \cref{eq:Bt_given_Xt}, since $\pp_t|\binnrv_t\sim \poss_{\Ru_t}(\cdot -\binnrv_t)$, Bayes' rule gives
\[
\trans_{\binnrv_t|\pp_t=\x}(\y)=\frac{\trans_{\binnrv_t}(\y)}{\trans_{\pp_t}(\x)}\trans_{\pp_t|\binnrv_t=\y}(\x)=\frac{\sum_{\x_0\in\N^{\dd}}\target(\x_0)\binn_{\x_0,\Rd_t}(\y)}{{\trans_t(\x)}}\poss_{\Ru_t}(\x -\y).
\]
For \cref{eq:Bt_given_XtNt} we use Bayes' rule and the independence of $(\data,\binnrv_t)$ and $\possrv_t$ to write 
\begin{align*}
&\trans_{\binnrv_t|\pp_t=\x,\data=\z }(\y)=\frac{\trans_{\binnrv_t,\pp_t,\data}(\y,\x,\z)}{\trans_{\pp_t,\data}(\x,\z)}=\frac{\trans_{\binnrv_t,\possrv_t,\data}(\y,\x-\y,\z)}{\trans_{\pp_t,\data}(\x,\z)}\\
&=\frac{\trans_{\binnrv_t,\data}(\y,\z)\trans_{\possrv_t}(\x-\y)}{\trans_{\pp_t,\data}(\x,\z)}=\frac{\target(\z)\trans_{\binnrv_t|\data=\z}(\y)\trans_{\possrv_t}(\x-\y)}{\target(\z)\trans_{\pp_t|\data=\z}(\x)}\\
&=\frac{\binn_{\z,\Rd_t}(\y)\poss_{\Ru_t}(\x-\y)}{(\binn_{\z,\Rd_t}\ast \poss_{\Ru_t})(\x)
}.
\end{align*}
Similarly, for \cref{eq:Lt_given_Xt} and \cref{eq:Nt_given_Xt} we have 
\begin{align*}
&\trans_{\lrv_t|\pp_t=\x,\data=\z }(\y)=\frac{\trans_{\lrv_t,\pp_t,\data}(\y,\x,\z)}{\trans_{\pp_t,\data}(\x,\z)}=\frac{\trans_{\lrv_t,\possrv_t,\data}(\y,\x+\y-\z,\z)}{\trans_{\pp_t,\data}(\x,\z)}\\
&=\frac{\trans_{\lrv_t,\data}(\y,\z)\trans_{\possrv_t}(\x+\y-\z)}{\trans_{\pp_t,\data}(\x,\z)}=\frac{\target(\z)\binn_{\z,\Rdi_t}(\y)\poss_{\Ru_t}(\x+\y-\z)}{\target(\z)\trans_{\pp_t|\data=\z}(\x)}\\
&=\frac{\binn_{\z,\Rdi_t}(\y)\poss_{\Ru_t}(\x+\y-\z)}{\trans_{\pp_t|\data=\z}(\x)}=\frac{\binn_{\z,\Rdi_t}(\y)\poss_{\Ru_t}(\x+\y-\z)}{\sum_{\tilde{\y}\in \N^{\dd}} \binn_{\z,\Rdi_t}(\tilde{\y})\poss_{\Ru_t}(\x+\tilde{\y}-\z)},
\end{align*}
and
\begin{align*}
&\trans_{\possrv_t|\pp_t=\x,\data=\z }(\y)=\frac{\trans_{\possrv_t,\pp_t,\data}(\y,\x,\z)}{\trans_{\pp_t,\data}(\x,\z)}=\frac{\trans_{\possrv_t,\lrv_t,\data}(\y,\z+\y-\x,\z)}{\trans_{\pp_t,\data}(\x,\z)}\\
&=\frac{\trans_{\lrv_t,\data}(\z+\y-\x,\z)\trans_{\possrv_t}(\y)}{\trans_{\pp_t,\data}(\x,\z)}=\frac{\target(\z)\binn_{\z,\Rdi_t}(\z+\y-\x)\poss_{\Ru_t}(\y)}{\target(\z)\trans_{\pp_t|\data=\z}(\x)}\\
&=\frac{\binn_{\z,\Rdi_t}(\z+\y-\x)\poss_{\Ru_t}(\y)}{\trans_{\pp_t|\data=\z}(\x)}=\frac{\binn_{\z,\Rdi_t}(\z+\y-\x)\poss_{\Ru_t}(\y)}{\sum_{\tilde{\y}\in \N^{\dd}} \binn_{\z,\Rdi_t}(\z+\tilde{\y}-\x)\poss_{\Ru_t}(\tilde{\y})}.
\end{align*}
For \cref{eq:data_given_Xt} we also use Bayes' rule,
\begin{align*}
\trans_{\data|\pp_t=\y}(\x)=\frac{\target(\x)}{\trans_{\pp_t}(\y)}\trans_{\pp_t|\data=\x}(\y)=\frac{\target(\x)}{\trans_t(\y)}(\binn_{\x,\Rd_t}\ast \poss_{\Ru_t})(\y).
\end{align*}
\end{proof}

\subsection{Discrete Tweedie's formula}
\label{app:tweedie}
In this section we derive a discrete Tweedie's formula for the Binomial-Poisson perturbation \eqref{eq:perturb_app}. This formula is crucial in proving \cref{thm:main_intro}. 
\begin{lemma}[Discrete Tweedie's formula]
\label{lem{tweedie_bin_poss}}
Fix $t\in (0,\TT)$, and let $\pp_t:=\binnrv_t+\possrv_t$ where 
\begin{equation}
\label{eq:up_down_pertb_lem}
\data\sim \target,\quad \binnrv_t|\data\sim \bin(\data,\Rd_t),\quad \possrv_t\sim \pos(\Ru_t),
\end{equation}
with $\possrv_t$ independent of $(\data,\binnrv_t)$. Denote by $\trans_t$ the distribution of $\pp_t$, i.e., $\pp_t\sim \trans_t$, and define 
\begin{equation}
\label{eq:denoiser_def_lem}
\den(\x,t):=\EE[\binnrv_t|\pp_t=\x]=\x-\EE[\possrv_t|\pp_t=\x].
\end{equation}
Then,
\begin{equation}
\label{eq:tweedie}
    \frac{\trans_t(\x-e_{\idx})}{\trans_t(\x)}=\frac{\x^{\idx}-\den^i(\x,t)}{\Ru^i_t},\qquad \x\in\N^{\dd},\quad t\in (0,\TT),\quad i\in [\dd].
\end{equation}
\end{lemma}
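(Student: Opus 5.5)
The plan is to prove the identity by conditioning on the Binomial component $\binnrv_t$ and using the Poisson shift identity \eqref{eq:poss_idn}. Since $\possrv_t$ is independent of $(\data,\binnrv_t)$, the law of $\pp_t$ decomposes as
\begin{equation*}
\trans_t(\x)=\sum_{\y\in\N^{\dd}}\trans_{\binnrv_t}(\y)\,\poss_{\Ru_t}(\x-\y),\qquad \trans_{\binnrv_t}(\y)=\sum_{\x_0\in\N^{\dd}}\target(\x_0)\binn_{\x_0,\Rd_t}(\y).
\end{equation*}
This is exactly the normalization implicit in \eqref{eq:Bt_given_Xt} of \cref{lem:conditional_dist}.

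First I evaluate $\trans_t$ at the shifted point:
\begin{equation*}
\trans_t(\x-e_i)=\sum_{\y}\trans_{\binnrv_t}(\y)\,\poss_{\Ru_t}(\x-\y-e_i).
\end{equation*}
I then apply \eqref{eq:poss_idn} at the point $\x-\y$. This is valid for $\x-\y\in\N^{\dd}$, and it gives $\poss_{\Ru_t}(\x-\y-e_i)=\frac{\x^i-\y^i}{\Ru^i_t}\poss_{\Ru_t}(\x-\y)$. Here $t\in(0,\TT)$ is used: strict monotonicity with $\Ru_0=0$ gives $\Ru^i_t>0$.

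I need to check the boundary terms where $\x-\y\notin\N^{\dd}$. There $\poss_{\Ru_t}(\x-\y)=0$, and also $\poss_{\Ru_t}(\x-\y-e_i)=0$, because some coordinate of $\x-\y-e_i$ is negative. When $\x^i=\y^i$, the left side vanishes by convention and the right side vanishes through the factor $\x^i-\y^i$. So the identity holds termwise for every $\y$.

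Substituting the identity gives
\begin{equation*}
\trans_t(\x-e_i)=\frac{1}{\Ru^i_t}\sum_{\y}(\x^i-\y^i)\,\trans_{\binnrv_t}(\y)\,\poss_{\Ru_t}(\x-\y).
\end{equation*}
I divide by $\trans_t(\x)$, assuming $\trans_t(\x)>0$; this holds for every $\x$ because the Poisson factor has full support. By \eqref{eq:Bt_given_Xt}, the ratio $\trans_{\binnrv_t}(\y)\poss_{\Ru_t}(\x-\y)/\trans_t(\x)$ is exactly $\trans_{\binnrv_t|\pp_t=\x}(\y)$. Therefore
\begin{equation*}
\frac{\trans_t(\x-e_i)}{\trans_t(\x)}=\frac{1}{\Ru^i_t}\,\EE\!\left[\x^i-\binnrv_t^i\,\middle|\,\pp_t=\x\right]=\frac{\x^i-\den^i(\x,t)}{\Ru^i_t}.
\end{equation*}
This is \eqref{eq:tweedie}, and the second expression in \eqref{eq:denoiser_def_lem} follows since $\binnrv_t=\pp_t-\possrv_t$.

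The only delicate point is the boundary bookkeeping: the convention $\poss_\alpha=0$ off $\N^{\dd}$ must make the shift identity hold for all $\y$, including the case $\x^i=0$, where both sides are zero. Finiteness of the sums is not an issue, since $\y\le\x$ on the support of every term.
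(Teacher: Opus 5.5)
Your proof is correct and is essentially the paper's computation run in reverse. The paper starts from $\den^i(\x,t)$ via \eqref{eq:Bt_given_Xt}, writes $\y^i=\x^i-(\x^i-\y^i)$, and uses \eqref{eq:poss_idn} to recognize $\trans_t(\x-e_i)$, whereas you start from $\trans_t(\x-e_i)$ and arrive at the posterior mean. Your direction has the small advantage of never dividing by $\trans_t(\x-e_i)$, which the paper does and which vanishes when $\x^i=0$.

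One minor fix: positivity of $\trans_t(\x)$ does not follow from the full support of the Poisson factor alone. It follows from $\trans_t(\x)\ge\trans_{\binnrv_t}(0)\,\poss_{\Ru_t}(\x)>0$, where $\trans_{\binnrv_t}(0)>0$ because $\Rd_t<1$ for $t>0$.
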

\begin{proof}
\begin{align*}
&\den^i(\x,t)=\sum_{\y\in\N^{\dd}}    \y^i\trans_{\binnrv_t|\pp_t=\x}(\y)\overset{\ref{eq:Bt_given_Xt}}{=}\frac{1}{{\trans_t(\x)}}\sum_{\x_0\in\N^{\dd}}\target(\x_0)\sum_{\y\in\N^{\dd}}\binn_{\x_0,\Rd_t}(\y)\,\poss_{\Ru_t}(\x -\y)\,\y^i\\
&=\x^i-\frac{1}{{\trans_t(\x)}}\sum_{\x_0\in\N^{\dd}}\target(\x_0)\sum_{\y\in\N^{\dd}}\binn_{\x_0,\Rd_t}(\y)\,\poss_{\Ru_t}(\x -\y)\,(\x^i-\y^i)\\
&\overset{\ref{eq:poss_idn}}{=}\x^i-\frac{\Ru^i_t}{{\trans_t(\x)}}\sum_{\x_0\in\N^{\dd}}\target(\x_0)\sum_{\y\in\N^{\dd}}\binn_{\x_0,\Rd_t}(\y)\,\poss_{\Ru_t}(\x -\y-e_i)\\
&=\x^i-\Ru^i_t\,\frac{\trans_t(\x-e_i)}{\trans_t(\x)}\frac{1}{\trans_t(\x-e_i)}\sum_{\x_0\in\N^{\dd}}\target(\x_0)\sum_{\y\in\N^{\dd}}\binn_{\x_0,\Rd_t}(\y)\,\poss_{\Ru_t}(\x-e_i-\y)\\
&\overset{\ref{eq:Bt_given_Xt}}{=}\x^i-\Ru^i_t\,\frac{\trans_t(\x-e_i)}{\trans_t(\x)}\sum_{\y\in \N^{\dd}}\trans_{\binnrv_t|\pp_t=\x-e_i}(\y)=\x^i-\Ru^i_t\,\frac{\trans_t(\x-e_i)}{\trans_t(\x)}.
\end{align*}
\end{proof}
The next result will help us show how to relate the \texttt{latent-up},  \texttt{latent-down}, and \texttt{data} denoisers. 
\begin{lemma}[Relation between \texttt{data} and \texttt{latent} denoisers]
\label{lem:den-dendd}
For $\x\in\N^{\dd}, t\in (0,\TT)$, and $i\in [\dd]$,
\begin{align}
\dend^i(\x,t)&=\den^i(\x,t)+\frac{1-\Rd^i_t}{\Rd^i_t}\frac{\trans_t(\x+e_i)}{\trans_t(\x)}\den^i(\x+e_i,t)\label{eq:den-dendd1}\\
&=\den^i(\x,t)+\frac{\Ru_t^i(1-\Rd_t^i)}{\Rd_t^i}\frac{\den^i(\x+e_i,t)}{\x^i+1-\den^i(\x+e_i,t)}\label{eq:den-dendd2}.
\end{align}
\end{lemma}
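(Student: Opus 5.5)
The plan is to establish a Binomial analogue of the Poisson-shift identity \eqref{eq:poss_idn}, apply it inside the posterior expectation defining $\dend$, and then convert the resulting ratio of marginals via the discrete Tweedie formula of \cref{lem{tweedie_bin_poss}}. Throughout, $\dend(\x,t)=\EE[\data\,|\,\pp_t=\x]$. We write $\data=\binnrv_t+\lrv_t$ and recall $\den(\x,t)=\EE[\binnrv_t|\pp_t=\x]$. It therefore suffices to show
\[
\EE[\lrv^i_t\,|\,\pp_t=\x]=\frac{1-\Rd^i_t}{\Rd^i_t}\frac{\trans_t(\x+e_i)}{\trans_t(\x)}\den^i(\x+e_i,t),
\]
which is exactly \eqref{eq:den-dendd1}.

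The key tool is an identity for the Binomial: for $\z\in\N^{\dd}$ and $\y\in\N^{\dd}$,
\[
(\z^i-\y^i)\,\binn_{\z,\Rd_t}(\y)=\frac{1-\Rd^i_t}{\Rd^i_t}(\y^i+1)\,\binn_{\z,\Rd_t}(\y+e_i).
\]
This is just \eqref{eq:bin_idn} rearranged, and it holds trivially when $\y^i=\z^i$, since both sides vanish.

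With this identity in hand, we compute the numerator. Using the joint law of $(\data,\binnrv_t,\possrv_t)$ as in the proof of \cref{eq:Bt_given_Xt},
\[
\trans_t(\x)\,\EE[\z^i-\y^i\,|\,\pp_t=\x]=\sum_{\z}\target(\z)\sum_{\y}(\z^i-\y^i)\binn_{\z,\Rd_t}(\y)\poss_{\Ru_t}(\x-\y).
\]
Substituting the identity and shifting $\y\mapsto\y-e_i$ turns the inner sum into
\[
\frac{1-\Rd^i_t}{\Rd^i_t}\sum_{\y}\y^i\,\binn_{\z,\Rd_t}(\y)\,\poss_{\Ru_t}(\x+e_i-\y).
\]
Summing against $\target$, \eqref{eq:Bt_given_Xt} at the point $\x+e_i$ identifies this as $\frac{1-\Rd^i_t}{\Rd^i_t}\trans_t(\x+e_i)\den^i(\x+e_i,t)$. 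Dividing by $\trans_t(\x)$ then gives \eqref{eq:den-dendd1}.

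For \eqref{eq:den-dendd2}, we apply \eqref{eq:tweedie} at the point $\x+e_i$:
\[
\frac{\trans_t(\x)}{\trans_t(\x+e_i)}=\frac{\x^i+1-\den^i(\x+e_i,t)}{\Ru^i_t}.
\]
Inverting this and substituting into \eqref{eq:den-dendd1} yields the second formula.

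The main obstacle is careful bookkeeping rather than any deep idea. Three points need attention. First, the shifted index range and boundary terms: at $\y^i=0$ after the shift the factor $\y^i$ kills the term, and at $\y^i>\z^i$ the Binomial vanishes. Second, the positivity needed for the division. Since $t\in(0,\TT)$, we have $\Ru_t>0$ and $\Rd_t\in(0,1)$, so $\poss_{\Ru_t}$ has full support and $\trans_t(\x)>0$ for all $\x$. Third, we must check that $\x^i+1-\den^i(\x+e_i,t)>0$. This holds because it equals $\Ru^i_t\,\trans_t(\x)/\trans_t(\x+e_i)$.
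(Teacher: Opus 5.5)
Your proof is correct and follows essentially the same route as the paper. The paper also rewrites $\x_0^i\,\binn_{\x_0,\Rd_t}(\n)=\frac{1-\Rd^i_t}{\Rd^i_t}(\n^i+1)\binn_{\x_0,\Rd_t}(\n+e_i)+\n^i\binn_{\x_0,\Rd_t}(\n)$ via \eqref{eq:bin_idn}, re-indexes to recognize $\trans_t(\x+e_i)\den^i(\x+e_i,t)$ through \eqref{eq:Bt_given_Xt}, and then applies \eqref{eq:tweedie} at $\x+e_i$. Splitting off $\den^i(\x,t)$ first via $\data=\binnrv_t+\lrv_t$ is only a cosmetic difference, and your positivity checks fill in details the paper leaves implicit.
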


\begin{proof}
\begin{align*}
&\dend^i(\x,t)=\EE[\data^i|\pp_t=\x]=\sum_{\x_0\in\N^{\dd}}\x_0^i\,\trans_{\data|\pp_t=\x}(\x_0)\\
&\overset{\ref{eq:data_given_Xt}}{=}\frac{1}{\trans_t(\x)}\sum_{\x_0\in\N^{\dd}}\x_0^i\,(\binn_{\x_0,\Rd_t}\ast \poss_{\Ru_t})(\x)\,\target(\x_0)\\
&=\frac{1}{\trans_t(\x)}\sum_{\x_0\in\N^{\dd}}\sum_{\n\in\N^{\dd}}\x_0^i\,\binn_{\x_0,\Rd_t}(\n)\,\poss_{\Ru_t}(\x-\n)\,\target(\x_0)\\
&\overset{\ref{eq:bin_idn}}{=}\frac{1}{\trans_t(\x)}\sum_{\x_0\in\N^{\dd}}\sum_{\n\in\N^{\dd}}\left(\frac{1-\Rd^i_t}{\Rd^i_t}(\n^i+1)\binn_{\x_0,\Rd_t}(\n+e_i)+\n^i\binn_{\x_0,\Rd_t}(\n)\right)\poss_{\Ru_t}(\x-\n)\,\target(\x_0)\\
&=\frac{1-\Rd^i_t}{\Rd^i_t}\frac{\trans_t(\x+e_i)}{\trans_t(\x)}\sum_{\n\in\N^{\dd}}(\n^i+1)\frac{\sum_{\x_0\in\N^{\dd}}\target(\x_0)\binn_{\x_0,\Rd_t}(\n+e_i)}{\trans_t(\x+e_i)}\poss_{\Ru_t}(\x+e_i-(\n+e_i))\\
&+\sum_{\n\in\N^{\dd}}\n^i\frac{\sum_{\x_0\in\N^{\dd}}\target(\x_0)\binn_{\x_0,\Rd_t}(\n)}{\trans_t(\x)}\poss_{\Ru_t}(\x-\n)\\
&\overset{\ref{eq:Bt_given_Xt}}{=}\frac{1-\Rd^i_t}{\Rd^i_t}\frac{\trans_t(\x+e_i)}{\trans_t(\x)}\sum_{\n\in\N^{\dd}}(\n^i+1)\,\trans_{\binnrv_t|\pp_t=\x+e_i}(\n+e_i)+\sum_{\n\in\N^{\dd}}\n^i\,\trans_{\binnrv_t|\pp_t=\x}(\n)\\
&\overset{\textnormal{re-index}}{=}\frac{1-\Rd^i_t}{\Rd^i_t}\frac{\trans_t(\x+e_i)}{\trans_t(\x)}\sum_{\n\in\N^{\dd}}\n^i\,\trans_{\binnrv_t|\pp_t=\x+e_i}(\n)+\sum_{\n\in\N^{\dd}}\n^i\,\trans_{\binnrv_t|\pp_t=\x}(\n)\\
&=\frac{1-\Rd^i_t}{\Rd^i_t}\frac{\trans_t(\x+e_i)}{\trans_t(\x)}\den^i(\x+e_i,t)+\den^i(\x,t)\\
&\overset{\ref{eq:tweedie}}{=}\frac{\Ru^i_t(1-\Rd^i_t)}{\Rd^i_t}\frac{\den^i(\x+e_i,t)}{\x^{\idx}+1-\den^i(\x+e_i,t)}+\den^i(\x,t).
\end{align*}
\end{proof}
We can now show that the \texttt{latent-up} and \texttt{latent-down} denoisers can be expressed in terms of each other, in a non-linear and non-local way. Consequently, the \texttt{data} denoiser can be expressed in terms of a single latent denoiser,  in a non-linear and non-local way.
\begin{corollary}[Relations between \texttt{latent-up}, \texttt{latent-down}, and \texttt{data} denoisers]
\label{cor:latentupdown_binposs}
Fix $\x\in\N^{\dd},t\in (0,\TT)$ and $i\in [\dd]$. Then,
\begin{equation}
\label{eq:tweedie_poss_bin_up}
\denbin^i(\x,t)=\frac{\Ru_t^i(1-\Rd_t^i)}{\Rd_t^i}\frac{\x^i+1-\denpos^i(\x+e_i,t)
}{\denpos^i(\x+e_i,t)},
\end{equation} 
and 
\begin{align}
\dend^i(\x,t)&=\x^i-\denpos^i(\x,t)+\frac{\Ru_t^i(1-\Rd_t^i)}{\Rd_t^i}\frac{\x^i+1-\denpos^i(\x+e_i,t)
}{\denpos^i(\x+e_i,t)}.\label{eq:den-dendd2_bin}
\end{align}
\end{corollary}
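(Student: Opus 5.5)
The plan is to combine the identities \eqref{eq:latents_data_latents}--\eqref{eq:latents_data_latents2} with \cref{lem:den-dendd}. By \eqref{eq:latents_data_latents}, $\denbin^i(\x,t)=\dend^i(\x,t)-\den^i(\x,t)$. By \eqref{eq:den-dendd2}, this difference is exactly the correction term
\[
\denbin^i(\x,t)=\frac{\Ru^i_t(1-\Rd^i_t)}{\Rd^i_t}\,\frac{\den^i(\x+e_i,t)}{\x^i+1-\den^i(\x+e_i,t)}.
\]
We then substitute $\den^i(\x+e_i,t)=\x^i+1-\denpos^i(\x+e_i,t)$, which is \eqref{eq:latents_data_latents2} evaluated at $\x+e_i$. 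The numerator becomes $\x^i+1-\denpos^i(\x+e_i,t)$ and the denominator becomes $\denpos^i(\x+e_i,t)$. This gives \eqref{eq:tweedie_poss_bin_up}.

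For \eqref{eq:den-dendd2_bin}, we write $\dend^i=\den^i+\denbin^i$. We replace $\den^i(\x,t)$ by $\x^i-\denpos^i(\x,t)$ using \eqref{eq:latents_data_latents2}, and replace $\denbin^i(\x,t)$ by the expression just obtained. Both identities are therefore algebraic rewritings, once \cref{lem:den-dendd} is available.

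The only point requiring care is that the denominators are nonzero, i.e.\ $\denpos^i(\x+e_i,t)>0$ for $t\in(0,\TT)$. By the Tweedie formula \eqref{eq:tweedie} at $\x+e_i$, $\x^i+1-\den^i(\x+e_i,t)=\Ru^i_t\,\trans_t(\x)/\trans_t(\x+e_i)$. This is strictly positive because $\Ru^i_t>0$ for $t>0$ (strictly increasing schedule with $\Ru_0=0$). Moreover $\trans_t$ has full support on $\N^{\dd}$ for $t>0$, since the Poisson component charges every point. This positivity is already implicitly used in \cref{lem:den-dendd}, so I expect no genuine obstacle. I would also note that $\Rd^i_t\in(0,1)$ for interior $t$, so the prefactor is well defined.
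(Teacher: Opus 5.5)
Your proposal is correct and is essentially the paper's argument. The paper also combines \eqref{eq:latents_data_latents}--\eqref{eq:latents_data_latents2} with \eqref{eq:den-dendd2} and the substitution $\den^i(\x+e_i,t)=\x^i+1-\denpos^i(\x+e_i,t)$, phrasing it as a chain from $\denpos^i(\x,t)$ that cancels to the same identity, and gets \eqref{eq:den-dendd2_bin} from $\dend=\x+\denbin-\denpos$. Your positivity check via \eqref{eq:tweedie} is a welcome addition the paper leaves implicit; note that full support of $\trans_t$ also needs $\Rd^i_t<1$, e.g.\ $\trans_t(\x)\ge\target(\x_0)\,\binn_{\x_0,\Rd_t}(0)\,\poss_{\Ru_t}(\x)>0$, not only the Poisson component.
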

\begin{proof}
We have
\begin{align*}
&\denpos^i(\x,t)\overset{\ref{eq:latents_data_latents2}}{=}\x^i-\den^i(\x,t)\overset{\ref{eq:latents_data_latents}}{=}  \x^i +\denbin^i(\x,t)-\dend^i(\x,t)\\
&\overset{\ref{eq:den-dendd2}}{=}\x^i +\denbin^i(\x,t)-\left(\den^i(\x,t)+\frac{\Ru_t^i(1-\Rd_t^i)}{\Rd_t^i}\frac{\den^i(\x+e_i,t)}{\x^i+1-\den^i(\x+e_i,t)}\right)\\
&\overset{\ref{eq:latents_data_latents2}}{=}\x^i +\denbin^i(\x,t)-\left(\x^i-\denpos^i(\x,t)+\frac{\Ru_t^i(1-\Rd_t^i)}{\Rd_t^i}\frac{\x^i+1-\denpos^i(\x+e_i,t)
}{\denpos^i(\x+e_i,t)}\right),
\end{align*}
so \cref{eq:tweedie_poss_bin_up} follows. For \cref{eq:den-dendd2_bin}
we use
\[
\dend(\x,t)=\x+\denbin(\x,t)-\denpos(\x,t),
\]
together with \cref{eq:tweedie_poss_bin_up}.
\end{proof}
We conclude by expressing the latent denoisers in terms of the marginal posteriors of the data given its noisy observations. 
\begin{lemma}
\label{lem:cor_cond_exp_binpos_posterior}
Fix $i\in [\dd]$ and $\x\in \N^{\dd}$. Then,
\begin{align}
&\den^i(\x,t)=\sum_{\z^i\in \N}\trans_{\data^i|\pp_t=\x}(\z^i)\sum_{\y^i\in \N}\y^i\, \frac{\binn_{\z^i,\Rd_t^i}(\y^i)\poss_{\Ru_t^i}(\x^i-\y^i)}{(\binn_{\z^i,\Rd_t^i}\ast \poss_{\Ru_t^i})(\x^i) },\label{eq:cor_cond_exp_posterior}\\
&\denbin^i(\x,t)=\sum_{\z^i\in \N}\trans_{\data^i|\pp_t=\x}(\z^i)\sum_{\y^i\in \N}\y^i\,\frac{\binn_{\z^i,\Rdi_t^i}(\y^i)\poss_{\Ru_t^i}(\x^i+\y^i-\z^i)}{\sum_{w\in \N} \binn_{\z^i,\Rdi_t^i}(w)\poss_{\Ru_t^i}(\x^i+w-\z^i)},\label{eq:cor_cond_exp_posterior_L}\\
&\denpos^i(\x,t)=\sum_{\z^i\in \N}\trans_{\data^i|\pp_t=\x}(\z^i)\sum_{\y^i\in \N}\y^i\,\frac{\binn_{\z^i,\Rdi_t^i}(\z^i+\y^i-\x^i)\poss_{\Ru_t^i}(\y^i)}{\sum_{w\in \N} \binn_{\z^i,\Rdi_t^i}(\z^i+w-\x^i)\poss_{\Ru_t^i}(w)}.\label{eq:cor_cond_exp_posterior_N}
\end{align}
\end{lemma}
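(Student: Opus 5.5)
The plan is to use the tower property, conditioning additionally on the clean data $\data$, and then exploit the product structure of the Binomial and Poisson kernels. This reduces each conditional expectation $\EE[\,\cdot\,|\pp_t=\x,\data=\z]$ of a single coordinate to a function of $(\z^i,\x^i)$ only. For $\den$ I would write
\[
\den^i(\x,t)=\EE\bigl[\EE[\binnrv_t^i\,|\,\pp_t,\data]\,\big|\,\pp_t=\x\bigr]=\sum_{\z\in\N^{\dd}}\trans_{\data|\pp_t=\x}(\z)\sum_{\y\in\N^{\dd}}\y^i\,\trans_{\binnrv_t|\pp_t=\x,\data=\z}(\y).
\]
The same decomposition applies to $\denbin^i$ with $\lrv_t$ and to $\denpos^i$ with $\possrv_t$.

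The key step is to show that the inner sum depends on $\z$ only through $\z^i$, and equals the one-dimensional kernel average displayed in \cref{lem:cor_cond_exp_binpos_posterior}.
\begin{itemize}
\item For $\binnrv_t$, I invoke \cref{eq:Bt_given_XtNt}. By \cref{eq:poss_alpha} and \cref{eq:bin_nr}, the numerator $\binn_{\z,\Rd_t}(\y)\poss_{\Ru_t}(\x-\y)$ is a product over coordinates $j$ of $\binn_{\z^j,\Rd^j_t}(\y^j)\poss_{\Ru^j_t}(\x^j-\y^j)$. The denominator $(\binn_{\z,\Rd_t}\ast\poss_{\Ru_t})(\x)$ is the product of the one-dimensional convolutions, as already used in the proof of \cref{eq:conv_bin_poss_partial_t}. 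Hence, conditionally on $\pp_t=\x,\data=\z$, the coordinates of $\binnrv_t$ are independent. The $i$th coordinate then has law $\y^i\mapsto \binn_{\z^i,\Rd^i_t}(\y^i)\poss_{\Ru^i_t}(\x^i-\y^i)/(\binn_{\z^i,\Rd^i_t}\ast\poss_{\Ru^i_t})(\x^i)$, obtained by summing out $\y^j$ for $j\neq i$.
\item For $\lrv_t$ and $\possrv_t$, I use \cref{eq:Lt_given_Xt} and \cref{eq:Nt_given_Xt} in the same way. Their numerators factor coordinatewise. Their denominators $\sum_{\tilde\y}\prod_j(\cdots)$ factor as $\prod_j\sum_{w\in\N}(\cdots)$, because the summand is a product of functions of the separate coordinates $\tilde\y^j$.
\end{itemize}
Writing $g(\z^i,\x^i)$ for the resulting one-dimensional conditional mean, the outer sum becomes $\sum_{\z}\trans_{\data|\pp_t=\x}(\z)\,g(\z^i,\x^i)$. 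Grouping the $\z$ by their $i$th coordinate turns this into $\sum_{\z^i}\trans_{\data^i|\pp_t=\x}(\z^i)\,g(\z^i,\x^i)$, which gives all three formulas.

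The main obstacle is not the computation but the bookkeeping of degenerate cases. These are the values of $\z$ for which $\trans_{\pp_t,\data}(\x,\z)=0$, so that the one-dimensional denominators vanish and the conditional laws in \cref{lem:conditional_dist} are undefined. I would handle them by noting that any such $\z$ has $\trans_{\data|\pp_t=\x}(\z)=0$ by \cref{eq:data_given_Xt}. Consequently any $\z^i$ whose one-dimensional denominator vanishes receives zero marginal posterior mass. With the convention $0\cdot(\text{undefined})=0$, these terms drop out of the sums. For $t\in(0,\TT)$ one has $\Ru_t^i>0$, and then the Binomial--Poisson denominators are in fact positive whenever $\x^i\ge 0$. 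The remaining issue is exchanging the order of summation, which is justified by nonnegativity (Tonelli), since all the sums have nonnegative terms and finite first moments under the posterior.
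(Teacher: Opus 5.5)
Your proof is correct and is essentially the paper's argument: both rest on the coordinatewise product structure of the conditional laws given $(\pp_t,\data)$ from \cref{lem:conditional_dist}. The only difference is the order of bookkeeping: you apply the tower property over the full vector $\data$ and then group by $\z^i$, whereas the paper first marginalizes over $\data^{-i}$ to identify $\trans_{\binnrv_t^i|\pp_t=\x,\data^i=\z^i}$ with the one-dimensional kernel and then conditions on $\data^i$. Your handling of vanishing denominators goes beyond the paper, which omits it; note that positivity of those denominators at interior times needs $\Rdi^i_t>0$ (i.e.\ $\Rd^i_t<1$) as well as $\Ru^i_t>0$.
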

\begin{proof}
We first note that \cref{eq:Bt_given_XtNt}, \cref{eq:Lt_given_Xt}, and \cref{eq:Nt_given_Xt} show that $\trans_{\binnrv_t|\pp_t=\x,\data=\z }(\y)$, $\trans_{\lrv_t|\pp_t=\x,\data=\z }(\y)$, and $\trans_{\possrv_t|\pp_t=\x,\data=\z }(\y)$ are product distributions with marginals  
\begin{align}
&\trans_{\binnrv_t^i|\pp_t=\x,\data=\z }(\y^i)=\frac{\binn_{\z^i,\Rd_t^i}(\y^i)\poss_{\Ru_t^i}(\x^i-\y^i)}{(\binn_{\z^i,\Rd_t^i}\ast \poss_{\Ru_t^i})(\x^i)
},\label{eq:Bt_given_XtNt_i}\\
&\trans_{\lrv_t^i|\pp_t=\x,\data=\z}(\y^i)=\frac{\binn_{\z^i,\Rdi_t^i}(\y^i)\poss_{\Ru_t^i}(\x^i+\y^i-\z^i)}{\sum_{w\in \N} \binn_{\z^i,\Rdi_t^i}(w)\poss_{\Ru_t^i}(\x^i+w-\z^i)},\label{eq:Lt_given_Xt_i}\\
&\trans_{\possrv_t^i|\pp_t=\x,\data=\z }(\y^i)=\frac{\binn_{\z^i,\Rdi_t^i}(\z^i+\y^i-\x^i)\poss_{\Ru_t^i}(\y^i)}{\sum_{w\in \N} \binn_{\z^i,\Rdi_t^i}(\z^i+w-\x^i)\poss_{\Ru_t^i}(w)}\label{eq:Nt_given_Xt_i}.
\end{align}
By marginalization it follows that 
\begin{align*}
&\trans_{\binnrv_t^i|\pp_t=\x,\data^i=\z^i }(\y^i)=\sum_{\z^{-i}\in\N^{\dd-1}}\trans_{\binnrv_t^i|\pp_t=\x,\data^i=\z^i, \data^{-i}=\z^{-i}}(\y^i)\trans_{\data^{-i}|\pp_t=\x,\data^i=\z^i}(\z^{-i})\\
&=\sum_{\z^{-i}\in\N^{\dd-1}}\trans_{\binnrv_t^i|\pp_t=\x, \data=\z}(\y^i)\trans_{\data^{-i}|\pp_t=\x,\data^i=\z^i}(\z^{-i})\\
&=\trans_{\binnrv_t^i|\pp_t=\x, \data=\z}(\y^i)\underbrace{\sum_{\z^{-i}\in\N^{\dd-1}}\trans_{\data^{-i}|\pp_t=\x,\data^i=\z^i}(\z^{-i})}_{=1}=\trans_{\binnrv_t^i|\pp_t=\x, \data=\z}(\y^i),
\end{align*}
and similarly,
\begin{align*}
&\trans_{\lrv_t^i|\pp_t=\x,\data^i=\z^i }(\y^i)=\trans_{\lrv_t^i|\pp_t=\x, \data=\z}(\y^i),\\
&\trans_{\possrv_t^i|\pp_t=\x,\data^i=\z^i }(\y^i)=\trans_{\possrv_t^i|\pp_t=\x, \data=\z}(\y^i).
\end{align*}
Combining with \cref{eq:Bt_given_XtNt_i}, \cref{eq:Lt_given_Xt_i}, and \cref{eq:Nt_given_Xt_i} we conclude 
\begin{align}
&\trans_{\binnrv_t^i|\pp_t=\x,\data^i=\z^i }(\y^i)=\frac{\binn_{\z^i,\Rd_t^i}(\y^i)\poss_{\Ru_t^i}(\x^i-\y^i)}{(\binn_{\z^i,\Rd_t^i}\ast \poss_{\Ru_t^i})(\x^i)
},\label{eq:Bt_given_XtNt_ii}\\
&\trans_{\lrv_t^i|\pp_t=\x,\data^i=\z^i}(\y^i)=\frac{\binn_{\z^i,\Rdi_t^i}(\y^i)\poss_{\Ru_t^i}(\x^i+\y^i-\z^i)}{\sum_{w\in \N} \binn_{\z^i,\Rdi_t^i}(w)\poss_{\Ru_t^i}(\x^i+w-\z^i)},\label{eq:Lt_given_Xt_ii}\\
&\trans_{\possrv_t^i|\pp_t=\x,\data^i=\z^i }(\y^i)=\frac{\binn_{\z^i,\Rdi_t^i}(\z^i+\y^i-\x^i)\poss_{\Ru_t^i}(\y^i)}{\sum_{w\in \N} \binn_{\z^i,\Rdi_t^i}(\z^i+w-\x^i)\poss_{\Ru_t^i}(w)}\label{eq:Nt_given_Xt_ii}.
\end{align}
Hence, by Bayes' rule,
\begin{align*}
&\den^i(\x,t)=\EE[\binnrv_t^i|\pp_t=\x]=\sum_{\y^i\in \N}\y^i\, \trans_{\binnrv_t^i|\pp_t=\x}(\y^i)\\
&=\sum_{\y^i\in \N}\y^i\, \frac{\trans_{\binnrv_t^i,\pp_t}(\y^i,\x)}{\trans_{\pp_t}(\x)}=\sum_{\y^i\in \N}\y^i\, \frac{1}{\trans_{\pp_t}(\x)}\sum_{\z^i\in \N}\trans_{\binnrv_t^i,\pp_t, \data^i}(\y^i,\x,\z^i)\\
&=\sum_{\y^i\in \N}\y^i\, \sum_{\z^i\in \N}\trans_{\binnrv_t^i|\pp_t=\x, \data^i=\z^i}(\y^i) \frac{\trans_{\pp_t, \data^i}(\x,\z^i) }{\trans_{\pp_t}(\x)}\\
&=\sum_{\y^i\in \N}\y^i\, \sum_{\z^i\in \N}\trans_{\binnrv_t^i|\pp_t=\x, \data^i=\z^i}(\y^i) \trans_{\data^i|\pp_t=\x}(\z^i) \\
&\overset{\ref{eq:Bt_given_XtNt_ii}}{=}\sum_{\z^i\in \N}\trans_{\data^i|\pp_t=\x}(\z^i)\sum_{\y^i\in \N}\y^i\, \frac{\binn_{\z^i,\Rd_t^i}(\y^i)\poss_{\Ru_t^i}(\x^i-\y^i)}{(\binn_{\z^i,\Rd_t^i}\ast \poss_{\Ru_t^i})(\x^i) },
\end{align*}
which establishes \cref{eq:cor_cond_exp_posterior}. 
Similarly, using \cref{eq:Lt_given_Xt_ii} and \cref{eq:Nt_given_Xt_ii},  we get \cref{eq:cor_cond_exp_posterior_L} and \cref{eq:cor_cond_exp_posterior_N}.
\end{proof}
\subsection{Forward and backward processes}
\label{app:for_back}
In this section we will show how to use the latent denoisers in order to define forward and  backward processes with the marginals of the Binomial-Poisson perturbation \eqref{eq:perturb_app}. Consequently, we prove  \cref{thm:main_intro}. 

For $i\in [\dd]$ and $t\in [0,\TT]$ set
\begin{equation}
\label{eq:lam_Lam} 
\Ru^i_0=0,\quad \Rd^i_0=1,\quad\Rd^i_{\TT}=0, \qquad \ru^i_t:=\frac{\diff}{\diff t}\Ru^i_t,\quad \rd^i_t:=\frac{\diff}{\diff t}\Rd^i_t.
\end{equation}
We begin by deriving the evolution equation of the marginals of the Binomial-Poisson  perturbation \eqref{eq:perturb_app}.
\begin{lemma}
\label{lem:marg_bin_poss}
Let $\pp_t\sim\trans_t$ where $\pp_t$ is as in \cref{eq:perturb_app}. Then, for $\x\in \N^{\dd}$ and $t\in (0,\TT)$,
\begin{align}
&\partial_t\trans_t(\x)\nonumber\\
&=\sum_{\idx=1}^{\dd}\left(\frac{\rd^i_t}{\Rd^i_t(1-\Rd^i_t)}\x^{\idx}-\ru^i_t-\frac{\rd^i_t}{1-\Rd^i_t}\dend^i(\x,t)\right)\trans_t(\x)-\sum_{i=1}^{\dd}\left(\frac{\rd^{\idx}_t\Ru^i_t}{\Rd^i_t(1-\Rd^i_t)}-\ru^i_t\right)\trans_t(\x-e_i)\label{eq:partial_pt1}\\
&=\sum_{\idx=1}^{\dd}\left(\frac{\rd^i_t}{\Rd^i_t(1-\Rd^i_t)}\x^{\idx}-\ru^i_t-\frac{\rd^i_t}{1-\Rd^i_t}\den^i(\x,t)\right)\trans_t(\x)-\sum_{i=1}^{\dd}\left(\frac{\rd^{\idx}_t\Ru^i_t}{\Rd^i_t(1-\Rd^i_t)}-\ru^i_t\right)\trans_t(\x-e_i)\label{eq:partial_pt2}\\
&-\sum_{\idx=1}^{\dd}\left(\frac{\rd^i_t}{\Rd^i_t}\den^i(\x+e_i,t)\right)\trans_t(\x+e_i).\nonumber
\end{align}
\end{lemma}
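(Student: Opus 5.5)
We begin from the mixture representation \cref{eq:marg_exp_bin_poss}, $\trans_t(\x)=\sum_{\x_0}\target(\x_0)(\binn_{\x_0,\Rd_t}\ast\poss_{\Ru_t})(\x)$, and differentiate under the sum. This is legitimate since for each $\x$ only finitely many terms in each convolution contribute and the sum over $\x_0$ can be dominated locally uniformly in $t\in(0,\TT)$. Applying the convolution identity \cref{eq:conv_bin_poss_partial_t} with $r_t=\Rd_t$, $\alpha_t=\Ru_t$, $\n=\x_0$ (valid because $\Rd_t\in(0,1)^{\dd}$ and $\Ru_t\in\R^{\dd}_{>0}$ for $t\in(0,\TT)$ by strict monotonicity and the boundary values) gives three kinds of terms.

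The $\x^i$ and $\ru^i_t$ terms produce $\bigl(\frac{\rd^i_t}{\Rd^i_t(1-\Rd^i_t)}\x^i-\ru^i_t\bigr)\trans_t(\x)$ directly. The $\x-e_i$ term gives $-\bigl(\frac{\rd^i_t\Ru^i_t}{\Rd^i_t(1-\Rd^i_t)}-\ru^i_t\bigr)\trans_t(\x-e_i)$ after summing against $\target$. The remaining term is $-\frac{\rd^i_t}{1-\Rd^i_t}\sum_{\x_0}\x_0^i\target(\x_0)(\binn_{\x_0,\Rd_t}\ast\poss_{\Ru_t})(\x)$. By the posterior formula \cref{eq:data_given_Xt}, this equals $-\frac{\rd^i_t}{1-\Rd^i_t}\dend^i(\x,t)\trans_t(\x)$, which is \cref{eq:partial_pt1}.

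For \cref{eq:partial_pt2} we substitute the first identity \cref{eq:den-dendd1} of \cref{lem:den-dendd}:
\[
\dend^i(\x,t)\trans_t(\x)=\den^i(\x,t)\trans_t(\x)+\frac{1-\Rd^i_t}{\Rd^i_t}\den^i(\x+e_i,t)\trans_t(\x+e_i).
\]
Multiplying by $-\frac{\rd^i_t}{1-\Rd^i_t}$ yields the $\den^i(\x,t)$ term together with $-\frac{\rd^i_t}{\Rd^i_t}\den^i(\x+e_i,t)\trans_t(\x+e_i)$, matching the extra line in \cref{eq:partial_pt2}. We use \cref{eq:den-dendd1} rather than \cref{eq:den-dendd2} to avoid dividing by possibly vanishing quantities.

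The main point needing care is the interchange of $\partial_t$ with the infinite sum over $\x_0$. We would handle it by noting that for fixed $\x$ the convolution is a finite sum over $\kk\le\min(\x_0,\x)$. The derivative bound from \cref{eq:conv_bin_poss_partial_t} grows at most linearly in $\x_0$, through the $\n^i$ factor, times a bounded probability. If $\target$ has a finite first moment this gives a dominating function on compact subintervals. Otherwise we argue pointwise using \cref{eq:data_given_Xt}: since $(\binn_{\x_0,\Rd_t}\ast\poss_{\Ru_t})(\x)$ decays like $(1-\Rd_t)^{\x_0-\x}$ times a polynomial in $\x_0$, the series $\sum_{\x_0}\x_0^i\target(\x_0)(\cdots)$ converges uniformly on compacts.
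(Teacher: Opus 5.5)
Your proposal is correct and follows the paper's proof essentially step for step: differentiate the mixture \cref{eq:marg_exp_bin_poss} termwise using \cref{eq:conv_bin_poss_partial_t}, recognize $\sum_{\x_0}\x_0^i\target(\x_0)(\binn_{\x_0,\Rd_t}\ast\poss_{\Ru_t})(\x)$ as $\dend^i(\x,t)\trans_t(\x)$ via the posterior, and then substitute \cref{eq:den-dendd1}, multiplied through by $\trans_t(\x)$, to obtain \cref{eq:partial_pt2}. Your justification for differentiating under the sum over $\x_0$, which the paper leaves implicit, is sound; your geometric-decay argument alone already works on compact subintervals of $(0,\TT)$, so the case split on whether $\target$ has a finite first moment is unnecessary.
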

\begin{proof}
\begin{align*}
&\partial_t\trans_t(\x)=\partial_t\sum_{\x_0\in \N^{\dd}}\target(\x_0)(\binn_{\x_0,\Rd_t}\ast \poss_{\Ru_t})(\x)\\
&\overset{\ref{eq:conv_bin_poss_partial_t}}{=}\sum_{\x_0\in \N^{\dd}}\target(\x_0)\left\{(\binn_{\x_0,\Rd_t}\ast \poss_{\Ru_t})(\x)\sum_{\idx=1}^{\dd}\left(\frac{\rd^i_t}{\Rd^i_t(1-\Rd^i_t)}\x^{\idx}-\ru^i_t-\frac{\rd^i_t}{1-\Rd^i_t}\x_0^{\idx}\right)\right\}\\
&-\sum_{\x_0\in \N^{\dd}}\target(\x_0)\left\{\sum_{i=1}^{\dd}\left(\frac{\rd^{\idx}_t\Ru^i_t}{\Rd^i_t(1-\Rd^i_t)}-\ru^i_t\right)(\binn_{\x_0,\Rd_t}\ast \poss_{\Ru_t})(\x-e_i)\right\}\\
&=\trans_t(\x)\sum_{\idx=1}^{\dd}\left(\frac{\rd^i_t}{\Rd^i_t(1-\Rd^i_t)}\x^{\idx}-\ru^i_t\right)-\sum_{\idx=1}^{\dd}\frac{\rd^i_t}{1-\Rd^i_t}\sum_{\x_0\in \N^{\dd}}\x_0^i\,\underbrace{\target(\x_0)(\binn_{\x_0,\Rd_t}\ast \poss_{\Ru_t})(\x)}_{=\trans_{0,t}(\x_0,\x)=\trans_{0|t}(\x_0|\x)\trans_t(\x)}\\
&-\sum_{i=1}^{\dd}\left(\frac{\rd^{\idx}_t\Ru^i_t}{\Rd^i_t(1-\Rd^i_t)}-\ru^i_t\right)\underbrace{\sum_{\x_0\in \N^{\dd}}\target(\x_0)(\binn_{\x_0,\Rd_t}\ast \poss_{\Ru_t})(\x-e_i)}_{\trans_t(\x-e_i)}\\
&=\trans_t(\x)\sum_{\idx=1}^{\dd}\left(\frac{\rd^i_t}{\Rd^i_t(1-\Rd^i_t)}\x^{\idx}-\ru^i_t-\frac{\rd^i_t}{1-\Rd^i_t}\sum_{\x_0\in \N^{\dd}}\x_0^i\,\trans_{0|t}(\x_0|\x)\right)-\sum_{i=1}^{\dd}\left(\frac{\rd^{\idx}_t\Ru^i_t}{\Rd^i_t(1-\Rd^i_t)}-\ru^i_t\right)\trans_t(\x-e_i)\\
&=\trans_t(\x)\sum_{\idx=1}^{\dd}\left(\frac{\rd^i_t}{\Rd^i_t(1-\Rd^i_t)}\x^{\idx}-\ru^i_t-\frac{\rd^i_t}{1-\Rd^i_t}\dend^i(\x,t)\right)-\sum_{i=1}^{\dd}\left(\frac{\rd^{\idx}_t\Ru^i_t}{\Rd^i_t(1-\Rd^i_t)}-\ru^i_t\right)\trans_t(\x-e_i),
\end{align*}
and we conclude as
\begin{align*}
&\trans_t(\x)\sum_{\idx=1}^{\dd}\left(\frac{\rd^i_t}{\Rd^i_t(1-\Rd^i_t)}\x^{\idx}-\ru^i_t-\frac{\rd^i_t}{1-\Rd^i_t}\dend^i(\x,t)\right)\\
&=\trans_t(\x)\sum_{\idx=1}^{\dd}\left(\frac{\rd^i_t}{\Rd^i_t(1-\Rd^i_t)}\x^{\idx}-\ru^i_t-\frac{\rd^i_t}{1-\Rd^i_t}\den^i(\x,t)\right)-\sum_{\idx=1}^{\dd}\left(\frac{\rd^i_t}{\Rd^i_t}\den^i(\x+e_i,t)\right)\trans_t(\x+e_i).
\end{align*}
\end{proof}
Next we will show how to use the denoiser \eqref{eq:denoiser_def_lem} to define a CTMC whose marginals are the same as the Binomial-Poisson perturbation \eqref{eq:perturb_app}.
\begin{proposition}
\label{prop:aux_markov}
Let $(\vec{\ppa_t})_{t\in [0,\TT]}$ be the CTMC with, for $0<t<\TT$, rate matrices
\begin{equation}
\label{eq:rate_axu_binposs}
\vec{\rate}_{\x\y}(t)=
\begin{cases}
\ru^i_t \qquad &\textnormal{if}\qquad \y=\x+e_i,\qquad i\in [\dd]\\
-\frac{\rd^i_t}{\Rd^i_t}\,\den^i(\x,t)\qquad &\textnormal{if}\qquad \y=\x-e_i,\qquad i\in [\dd]\\
0 \qquad & \textnormal{if} \qquad \y\notin\{\x, \x\pm e_1,\ldots, \x\pm e_{\dd}\}\\
 -\sum_{\y\in \{\x\pm e_1,\ldots,\x\pm e_{\dd}\}}\vec{\rate}_{\x\y}(t)\qquad &\textnormal{if}\qquad \y=\x
\end{cases},
\end{equation}
with $\vec{\ppa_0}\sim\target$. Then,
\[
\vec{\ppa_t}\sim \trans_t\qquad\textnormal{for all} \qquad t\in [0,\TT].
\]
\end{proposition}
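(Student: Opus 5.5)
The plan is to show that $\trans_t$ solves the Kolmogorov forward equation of the chain \eqref{eq:rate_axu_binposs}. We then conclude by uniqueness for that linear system. Since $\Rd_t$ is decreasing, $\rd^i_t\le 0$; and $\den^i(\x,t)\in[0,\x^i]$ because $\binnrv_t\le \pp_t$ coordinatewise. Hence the down-rates $-\frac{\rd^i_t}{\Rd^i_t}\den^i(\x,t)$ are nonnegative and vanish when $\x^i=0$, so the chain stays in $\N^{\dd}$ and the rates are legitimate.

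\textbf{Forward equation.} For the generator \eqref{eq:rate_axu_binposs}, the forward equation for a law $q_t$ on $\N^{\dd}$ reads
\begin{align*}
\partial_t q_t(\x)=\sum_{i=1}^{\dd}\Big[\ru^i_t\, q_t(\x-e_i)-\frac{\rd^i_t}{\Rd^i_t}\den^i(\x+e_i,t)\,q_t(\x+e_i)-\Big(\ru^i_t-\frac{\rd^i_t}{\Rd^i_t}\den^i(\x,t)\Big)q_t(\x)\Big].
\end{align*}

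\textbf{Matching with \cref{lem:marg_bin_poss}.} We compare this with \eqref{eq:partial_pt2} at $q_t=\trans_t$. The $\trans_t(\x+e_i)$ terms already agree. Subtracting, it remains to verify
\begin{align*}
\Big(\frac{\rd^i_t}{\Rd^i_t(1-\Rd^i_t)}\x^i-\frac{\rd^i_t}{1-\Rd^i_t}\den^i(\x,t)-\frac{\rd^i_t}{\Rd^i_t}\den^i(\x,t)\Big)\trans_t(\x)=\frac{\rd^i_t\Ru^i_t}{\Rd^i_t(1-\Rd^i_t)}\trans_t(\x-e_i)
\end{align*}
for each $i$. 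The coefficient of $\den^i$ on the left equals $-\frac{\rd^i_t}{\Rd^i_t(1-\Rd^i_t)}$, so the left side is $\frac{\rd^i_t}{\Rd^i_t(1-\Rd^i_t)}(\x^i-\den^i(\x,t))\trans_t(\x)$. By the discrete Tweedie formula \eqref{eq:tweedie} this equals the right side, which closes the algebra. If $\trans_t(\x)=0$ at some $\x$, the identity $(\x^i-\den^i)\trans_t(\x)=\Ru^i_t\trans_t(\x-e_i)$ still holds in the unnormalized form from the proof of \eqref{eq:tweedie}.

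\textbf{Uniqueness and endpoints (main obstacle).} The hardest part is rigor rather than algebra. The rates are unbounded in $\x$ (they grow like $\x^i$) and singular at $t\to0$ and $t\to\TT$, where $1-\Rd_t\to0$ and $\Rd_t\to0$. The plan has three parts.
\begin{itemize}
\item On $[\varepsilon,\TT-\varepsilon]$ the up-rates are bounded and the down-rates are at most $C\x^i$. A counting argument then shows the chain is non-explosive: the upward jumps are dominated by a Poisson process, so the count of jumps is controlled.
\item For non-explosive chains, the forward equation has a unique solution among probability-measure-valued flows (Kolmogorov minimal-solution theory). Hence the law of $\vec{\ppa_t}$ equals $\trans_t$ on $[\varepsilon,\TT-\varepsilon]$ once it does so at time $\varepsilon$.
\item Near $t=0$, note that $-\frac{\rd_t}{\Rd_t}\den(\x,t)\le|\rd_t|\,\x/\Rd_t$ stays integrable, since $\Rd_0=1$. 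So the chain is well defined from $t=0$. Moreover $\trans_t\to\target$ and the forward equation holds on $(0,\varepsilon]$ with integrable coefficients, so the identification extends down to $t=0$.
\end{itemize}
At $t=\TT$ we pass to the limit by continuity of both sides in total variation, and we expect $\trans_\TT=\pos(\Ru_\TT)$.
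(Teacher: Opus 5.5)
Your proposal is correct, but its algebraic core takes a different route from the paper. The paper uses neither \cref{lem:marg_bin_poss} nor the Tweedie formula. It differentiates the mixture representation $\trans_t(\x)=\sum_{\x_0}\sum_{\y}\target(\x_0)\binn_{\x_0,\Rd_t}(\y)\poss_{\Ru_t}(\x-\y)$ and applies the generator identities \eqref{eq:bin_pde_idn} and \eqref{eq:poss_pde_idn} inside the sum. Because $\den$ is a conditional expectation, the binomial part then collapses to $\sum_i\frac{\rd^i_t}{\Rd^i_t}[\trans_t(\x)\den^i(\x,t)-\trans_t(\x+e_i)\den^i(\x+e_i,t)]$. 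The Poisson part collapses to $\sum_i\ru^i_t[\trans_t(\x-e_i)-\trans_t(\x)]$, and together these are the forward equation verbatim.

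You instead start from \eqref{eq:partial_pt2}, where the data denoiser has already been traded for $\den$ via \eqref{eq:den-dendd1}. You then use \eqref{eq:tweedie} to remove the leftover $\trans_t(\x-e_i)$ term. Your bookkeeping is right: the $\den^i$ coefficient is $-\rd^i_t/(\Rd^i_t(1-\Rd^i_t))$, and Tweedie gives $(\x^i-\den^i)\trans_t(\x)=\Ru^i_t\trans_t(\x-e_i)$.

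What each route buys:
\begin{itemize}
\item Your route is shorter given the lemmas already on the page. However, it stacks three prior results (convolution derivative, data--latent relation, Tweedie) and is confined to times with $0<\Rd_t<1$ and $\Ru_t>0$.
\item The paper's computation is self-contained and has no $1/(1-\Rd_t)$ or $1/\Ru_t$ factors. It also shows why the rates are what they are: births occur at rate $\ru_t$, and each surviving unit of $\binnrv_t$ dies at rate $-\rd_t/\Rd_t$, projected onto $\pp_t$.
\end{itemize}

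Your well-posedness layer is rigor that the paper compresses into ``it suffices to show'', and it is sound. It covers nonnegative down-rates vanishing at $\x^i=0$, non-explosion by counting, uniqueness of probability-valued solutions of the forward equation via minimal solutions, and continuity at the endpoints. Two small corrections:
\begin{itemize}
\item The forward rates themselves are not singular as $t\to0$. Only the coefficients of \eqref{eq:partial_pt2} are, which is why your route needs the continuity step there.
\item Your counting argument already gives non-explosion on all of $[0,\TT]$, since the number of down-jumps is bounded by the initial total plus the number of up-jumps.
\end{itemize}
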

\begin{proof}
It suffices to show that $(\trans_t)_{t\in [0,\TT]}$ satisfies the Kolmogorov equation
\begin{equation}
\label{eq:kolmogorov_to_prove}
\begin{split}
&\partial_t\trans_t(\x)=\sum_{\y\in \N^{\dd}}\trans_t(\y)\vec{\rate}_{\y\x}(t)\\
&=\sum_{i=1}^{\dd}\left\{\ru^i_t\trans_t(\x-e_i)-\frac{\rd^i_t}{\Rd^i_t}\den^i(\x+e_i,t)\trans_t(\x+e_i)-\left(\ru^i_t-\frac{\rd^i_t}{\Rd^i_t}\den^i(\x,t)\right)\trans_t(\x)\right\}.
\end{split}
\end{equation}
To establish \eqref{eq:kolmogorov_to_prove} write
\begin{align*}
&\partial_t\trans_t(\x)=\partial_t\sum_{\x_0\in \N^{\dd}} \sum_{\y\in \N^{\dd}}\binn_{\x_0,\Rd_t}(\y)\,\target(\x_0)\,\poss_{\Ru_t}(\x-\y)\\
&=\sum_{\x_0\in \N^{\dd}} \sum_{\y\in \N^{\dd}}\target(\x_0)\left\{\poss_{\Ru_t}(\x-\y)\partial_t\binn_{\x_0,\Rd_t}(\y)+\binn_{\x_0,\Rd_t}(\y)\partial_t\poss_{\Ru_t}(\x-\y)\right\}\\
&\overset{\ref{eq:bin_pde_idn}}{=}\sum_{\x_0\in \N^{\dd}} \sum_{\y\in \N^{\dd}}\target(\x_0)\poss_{\Ru_t}(\x-\y)\sum_{i=1}^{\dd}\frac{\rd_t^i}{\Rd^i_t}\left(\y^i\binn_{\x_0,\Rd_t}(\y)-(\y^i+1)\binn_{\x_0,\Rd_t}(\y+e_i)\right)\\
&\overset{\ref{eq:poss_pde_idn}}{+}\sum_{\x_0\in \N^{\dd}} \sum_{\y\in \N^{\dd}}\target(\x_0)\binn_{\x_0,\Rd_t}(\y)\sum_{i=1}^{\dd}\ru^i_t\,(\poss_{\Ru_t}(\x-\y-e_i)-\poss_{\Ru_t}(\x-\y))\\
&=:B+ N.
\end{align*}
We now analyze the two terms $B$ and $N$. We have
\begin{align*}
B&=\sum_{i=1}^{\dd}\frac{\rd_t^i}{\Rd^i_t}\sum_{\x_0\in \N^{\dd}} \sum_{\y\in \N^{\dd}}\target(\x_0)\poss_{\Ru_t}(\x-\y)\y^i\binn_{\x_0,\Rd_t}(\y)\\
&-\sum_{i=1}^{\dd}\frac{\rd_t^i}{\Rd^i_t}\sum_{\x_0\in \N^{\dd}} \sum_{\y\in \N^{\dd}}\target(\x_0)\poss_{\Ru_t}(\x-\y)(\y^i+1)\binn_{\x_0,\Rd_t}(\y+e_i)\\
&=\trans_t(\x)\sum_{i=1}^{\dd}\frac{\rd_t^i}{\Rd^i_t}\sum_{\y\in \N^{\dd}}p_{\binnrv_t|\pp_t=\x}(\y)\y^i-\sum_{i=1}^{\dd}\frac{\rd_t^i}{\Rd^i_t}\trans_t(\x+e_i)\sum_{\y\in \N^{\dd}}p_{\binnrv_t|\pp_t=\x+e_i}(\y)\y^i\\
&=\sum_{i=1}^{\dd}\frac{\rd_t^i}{\Rd^i_t}\,[\trans_t(\x)\den^i(\x,t)-\trans_t(\x+e_i)\den^i(\x+e_i,t)],
\end{align*}
while
\begin{align*}
N&=\sum_{\x_0\in \N^{\dd}} \sum_{\y\in \N^{\dd}}\target(\x_0)\binn_{\x_0,\Rd_t}(\y)\sum_{i=1}^{\dd}\ru^i_t\,(\poss_{\Ru_t}(\x-\y-e_i)-\poss_{\Ru_t}(\x-\y))=\sum_{i=1}^{\dd}\ru^i_t\,(\trans_t(\x-e_i)-\trans_t(\x)).
\end{align*}
It follows that 
\begin{align*}
&\partial_t\trans_t(\x)=\sum_{i=1}^{\dd}\left\{\frac{\rd_t^i}{\Rd^i_t}[\trans_t(\x)\den^i(\x,t)-\trans_t(\x+e_i)\den^i(\x+e_i,t)]+\ru^i_t\,[\trans_t(\x-e_i)-\trans_t(\x)]\right\},
\end{align*}
which is exactly \cref{eq:kolmogorov_to_prove}. 
\end{proof}
We can now derive the rates of the time-reversal of $(\vec{\ppa_t})_{t\in [0,\TT]}$.
\begin{corollary}
\label{cor:reverse_time_poss_bin}
With $s=\TT-t$, the time reversal $(\cev{\ppa_s})_{s\in [0,\TT]}$ of $(\vec{\ppa_t})_{t\in [0,\TT]}$ has, for $0<s<\TT$, rate matrices
\begin{equation}
\label{eq:reverse_mat_direct_binposs}
\cev{\rate}_{\x\y}(s)=\begin{cases}
\frac{\ru^i_{\TT-s}}{\Ru^i_{\TT-s}}(\x^{\idx}-\den^i(\x,\TT-s)) \qquad &\textnormal{if}\qquad \y=\x-e_i,\qquad i\in [\dd]\\
-\frac{\rd_{\TT-s}^i\Ru^i_{\TT-s}}{\Rd^i_{\TT-s}}\frac{\den^i(\x+e_i,\TT-s)}{\x^{\idx}+1-\den^i(\x+e_i,\TT-s)}\qquad &\textnormal{if}\qquad \y=\x+e_i,\qquad i\in [\dd]\\
0 \qquad & \textnormal{if} \qquad \y\notin\{\x, \x\pm e_1,\ldots, \x\pm e_{\dd}\}\\
 -\sum_{\y\in \{\x\pm e_1,\ldots,\x\pm e_{\dd}\}}\cev{\rate}_{\x\y}(s)\qquad &\textnormal{if}\qquad \y=\x
\end{cases}.
\end{equation}
\end{corollary}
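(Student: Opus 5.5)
The plan is to use the standard time-reversal formula for CTMCs. If a forward chain has rates $\vec{\rate}_{\y\x}(t)$ and marginals $\trans_t$, then its time reversal at reverse time $s=\TT-t$ has off-diagonal rates
\[
\cev{\rate}_{\x\y}(s)=\frac{\trans_{\TT-s}(\y)}{\trans_{\TT-s}(\x)}\,\vec{\rate}_{\y\x}(\TT-s),\qquad \y\neq \x .
\]
The diagonal entries are then fixed by requiring rows to sum to zero. This identity follows from the Kolmogorov equation established in \cref{prop:aux_markov}, and it is well defined wherever $\trans_{t}(\x)>0$. For $t\in(0,\TT)$ we have $\Ru_t>0$, so the Poisson component gives $\trans_t$ full support on $\N^{\dd}$ and the ratios make sense.

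Since $\vec{\rate}$ in \eqref{eq:rate_axu_binposs} only has nearest-neighbour jumps, $\cev{\rate}_{\x\y}$ vanishes unless $\y=\x\pm e_i$. That gives the third case of \eqref{eq:reverse_mat_direct_binposs}. Two cases remain, and each uses the discrete Tweedie formula \eqref{eq:tweedie}.

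\textbf{Case $\y=\x-e_i$.} The forward chain jumps from $\x-e_i$ up to $\x$ at rate $\ru^i_t$. Hence
\[
\cev{\rate}_{\x,\x-e_i}(s)=\frac{\trans_t(\x-e_i)}{\trans_t(\x)}\,\ru^i_t .
\]
By \eqref{eq:tweedie} the ratio equals $(\x^i-\den^i(\x,t))/\Ru^i_t$, which yields the first line. When $\x^i=0$ both sides vanish, consistent with the conventions $\trans_t(\x-e_i)=0$ and $\den^i(\x,t)=0$.

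\textbf{Case $\y=\x+e_i$.} The forward chain jumps from $\x+e_i$ down to $\x$ at rate $-\frac{\rd^i_t}{\Rd^i_t}\den^i(\x+e_i,t)$. Hence
\[
\cev{\rate}_{\x,\x+e_i}(s)=-\frac{\trans_t(\x+e_i)}{\trans_t(\x)}\,\frac{\rd^i_t}{\Rd^i_t}\,\den^i(\x+e_i,t).
\]
Applying \eqref{eq:tweedie} at the point $\x+e_i$ gives
\[
\frac{\trans_t(\x)}{\trans_t(\x+e_i)}=\frac{\x^i+1-\den^i(\x+e_i,t)}{\Ru^i_t},
\]
and substituting yields the second line. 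All rates are nonnegative because $\rd^i_t\le 0$ and $0\le \den^i\le \x^i$.

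The main obstacle is justifying the reversal formula rigorously on the countable state space $\N^{\dd}$. Two points need care: the denominator $\x^i+1-\den^i(\x+e_i,t)=\Ru^i_t\,\trans_t(\x)/\trans_t(\x+e_i)$ must be strictly positive, which holds by full support; and the forward chain must be non-explosive, so that the Kolmogorov forward equation determines the marginals. I would handle non-explosiveness by noting that the forward up-rates are bounded by $\ru_t$ on compact subintervals of $(0,\TT)$, so the process is dominated by a Poisson process. I would then verify directly that the $\cev{\rate}$ above satisfy the reversed Kolmogorov equation $\partial_s\trans_{\TT-s}=\cev{\rate}^\top\trans_{\TT-s}$. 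This check follows by rewriting \eqref{eq:kolmogorov_to_prove} with the substitutions above, since each flux term $\trans_t(\y)\vec{\rate}_{\y\x}$ equals $\trans_t(\x)\cev{\rate}_{\x\y}$.
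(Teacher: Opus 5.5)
Your proposal is correct and follows the paper's proof. The paper likewise applies the time-reversal formula $\cev{\rate}_{\x\y}(s)=\frac{\trans_{\TT-s}(\y)}{\trans_{\TT-s}(\x)}\vec{\rate}_{\y\x}(\TT-s)$ to the nearest-neighbour rates of \cref{prop:aux_markov}, and evaluates the two probability ratios with the discrete Tweedie formula \eqref{eq:tweedie} at $\x$ and at $\x+e_i$. Your extra checks go beyond what the paper writes but leave the argument unchanged: full support of $\trans_t$, a positive denominator, nonnegative rates from $\rd^i_t\le 0$ and $0\le\den^i\le\x^i$, non-explosiveness, and the direct verification of the reversed Kolmogorov equation.
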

\begin{proof}
By the time-reversal formula
\[
\cev{\rate}_{\x\y}(s)=\frac{\trans_{\TT-s}(\y)}{\trans_{\TT-s}(\x)}\,\vec{\rate}_{\y\x}(\TT-s),
\]
and \cref{eq:tweedie}, we have
\begin{align*}
\cev{\rate}_{\x,\x-e_i}(s)=\frac{\trans_{\TT-s}(\x-e_i)} {\trans_{\TT-s}(\x)} \vec{\rate}_{\x-e_i,\x}(\TT-s)=\frac{\x^{\idx}-\den^i(\x,\TT-s)}{\Ru^i_{\TT-s}}\ru^i_{\TT-s},
\end{align*}
while
\begin{align*}
\cev{\rate}_{\x,\x+e_i}(s)&=\frac{\trans_{\TT-s}(\x+e_i)} {\trans_{\TT-s}(\x)} \vec{\rate}_{\x+e_i,\x}(\TT-s)\\
&=-\frac{\Ru^i_{\TT-s}}{\x^{\idx}+1-\den^i(\x+e_i,\TT-s)} \frac{\rd^i_{\TT-s}}{\Rd^i_{\TT-s}}\den^i(\x+e_i,\TT-s)\\
&=-\rd^i_{\TT-s}\frac{\Ru^i_{\TT-s}}{\Rd^i_{\TT-s}}\frac{\den^i(\x+e_i,\TT-s)}{\x^{\idx}+1-\den^i(\x+e_i,\TT-s)}.
\end{align*}
\end{proof}

\subsection{Proof of \cref{thm:main_intro}}
\label{app:proof_main}
The time-reversed chain in \cref{cor:reverse_time_poss_bin}, indexed by sampling time $s$, starts from $\trans_{\TT}=\pos(\Ru_{\TT})$ and has marginals $\trans_{\TT-s}$. We use \cref{eq:latents_data_latents}-\cref{eq:latents_data_latents2}, together with \cref{eq:den-dendd2} and $\Rdi_t=1-\Rd_t$, to express its rates in terms of the \texttt{latent-up} and \texttt{latent-down} denoisers, yielding \cref{eq:chain_intro}.

\section{The Poisson-Poisson process}
\label{app:poisson_poisson_process}
The main result of this section is the Poisson-Poisson part of \cref{thm:main}:
\begin{theorem}[The Poisson-Poisson process]
\label{thm:main_p_intro}
Under the schedule assumptions of \cref{thm:main}, set the noise schedules $\Rdi_t, \Ru_t\in \R_{\ge 0}^{\dd}$ to satisfy 
\begin{equation}
\label{eq:posspossrates}
\Rdi^i_0=0, \quad\Ru^i_0=0,  \qquad \ru^i_t:=\frac{\diff}{\diff t}\Ru^i_t,\quad \rdi^i_t:=\frac{\diff}{\diff t}\Rdi^i_t, \qquad i\in [\dd].
\end{equation}
Let $(\transp_t)_{t\in [0,\TT]}$ be the marginals of $(\ppp_t)_{t\in [0,\TT]}$,
\begin{equation}
\label{eq:perturb_pp_intro_thm}  
\ppp_t:=\data-\lrv_t+\possrv_t,\qquad\textnormal{with} \qquad \lrv_t\sim \pos(\Rdi_t),\quad \possrv_t\sim \pos(\Ru_t),
\end{equation} 
where $(\data,\lrv_t,\possrv_t)$ are independent, and define the latent denoisers 
\begin{equation}
\label{eq:denoisers_pos_pos_thm}
\denbin(\x,t):=\EE[\lrv_t|\ppp_t=\x],\qquad \denpos(\x,t):=\EE[\possrv_t|\ppp_t=\x].
\end{equation}
Write $s=\TT-t$ for reverse time. The time reversal $(\pppa_s)_{s\in [0,\TT]}$ of the forward CTMC in \cref{prop:aux_markov_p} has initial distribution $\pppa_0\sim\transp_{\TT}$ and, for $0<s<\TT$, rate matrices
\begin{equation}
\label{eq:reverse_mat_direct_thm}
\rate_{\x\y}(s)=\begin{cases}
\frac{\ru_{\TT-s}^i}{\Ru_{\TT-s}^i}\,\denpos^i(\x,\TT-s) \qquad &\textnormal{if}\qquad \y=\x-e_i,\qquad i\in [\dd]\\
\frac{\rdi_{\TT-s}^i}{\Rdi^i_{\TT-s}}\,\denbin^i(\x,\TT-s)\qquad &\textnormal{if}\qquad \y=\x+e_i,\qquad i\in [\dd]\\
0 \qquad & \textnormal{if} \qquad \y\notin\{\x, \x\pm e_1,\ldots, \x\pm e_{\dd}\}\\
 -\sum_{\y\in \{\x\pm e_1,\ldots,\x\pm e_{\dd}\}}\rate_{\x\y}(s)\qquad &\textnormal{if}\qquad \y=\x
\end{cases}.
\end{equation}
In particular,
\begin{equation}
\label{eq:markov_proj_p_intro}
\pppa_s\sim\transp_{\TT-s}\qquad\textnormal{for all} \qquad s\in [0,\TT].
\end{equation}
\end{theorem}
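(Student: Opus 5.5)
The plan is to mirror the Binomial--Poisson argument of \cref{app:binomial_poisson_process}, using the structure specific to the Poisson--Poisson case. Here $\ppp_t=\data-\lrv_t+\possrv_t$ with $(\data,\lrv_t,\possrv_t)$ independent, so $\transp_t(\x)=\sum_{\x_0\in\Z^{\dd}}\target(\x_0)\sum_{\y,\z\in\N^{\dd}}\poss_{\Rdi_t}(\y)\poss_{\Ru_t}(\z)1_{\x=\x_0-\y+\z}$. Note that $\lrv_t$ does not depend on the data. The proof has three steps: two discrete Tweedie formulas, a forward CTMC, and a time reversal.

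\textbf{Step 1: two Tweedie formulas.} Arguing exactly as in the proof of \cref{lem{tweedie_bin_poss}}, I first apply \cref{eq:poss_idn} to the $\possrv_t$ factor. Writing $\z^i\poss_{\Ru_t}(\z)=\Ru^i_t\poss_{\Ru_t}(\z-e_i)$ and shifting the summation index gives
\[
\denpos^i(\x,t)=\Ru^i_t\,\frac{\transp_t(\x-e_i)}{\transp_t(\x)}.
\]
Applying the same identity to the $\lrv_t$ factor, $\y^i\poss_{\Rdi_t}(\y)=\Rdi^i_t\poss_{\Rdi_t}(\y-e_i)$, gives
\[
\denbin^i(\x,t)=\Rdi^i_t\,\frac{\transp_t(\x+e_i)}{\transp_t(\x)}.
\]
Both formulas hold for $t\in(0,\TT)$ and all $\x\in\Z^{\dd}$. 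I will need $\transp_t>0$ on $\Z^{\dd}$ for $t>0$, which holds because the Skellam part has full support once $\Rdi_t,\Ru_t>0$.

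\textbf{Step 2: a forward CTMC with the right marginals.} Because all three components are independent, I differentiate under the sum using \cref{eq:poss_pde_idn} for each Poisson factor. This yields a Kolmogorov forward equation of birth--death type:
\[
\partial_t\transp_t(\x)=\sum_{i=1}^{\dd}\Big\{\ru^i_t\big(\transp_t(\x-e_i)-\transp_t(\x)\big)+\rdi^i_t\big(\transp_t(\x+e_i)-\transp_t(\x)\big)\Big\}.
\]
This is already the forward equation of the CTMC with rates $\vec\rate_{\x,\x+e_i}(t)=\ru^i_t$ and $\vec\rate_{\x,\x-e_i}(t)=\rdi^i_t$. Unlike the Binomial--Poisson case in \cref{prop:aux_markov}, these rates are state independent, so no denoiser appears in them. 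I would state this as the analogue \cref{prop:aux_markov_p}: the chain started from $\target$ has marginals $\transp_t$.

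To make this rigorous on the infinite state space $\Z^{\dd}$, I would note that the forward chain is a superposition of independent Poisson processes. Its law is therefore exactly $\data-\lrv_t+\possrv_t$ in distribution, which gives the marginals directly and sidesteps any uniqueness question for the Kolmogorov equation.

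\textbf{Step 3: time reversal.} I use the same formula as in \cref{cor:reverse_time_poss_bin},
\[
\cev\rate_{\x\y}(s)=\frac{\transp_{\TT-s}(\y)}{\transp_{\TT-s}(\x)}\,\vec\rate_{\y\x}(\TT-s).
\]
For $\y=\x-e_i$ this gives $\frac{\transp(\x-e_i)}{\transp(\x)}\ru^i=\frac{\ru^i}{\Ru^i}\denpos^i(\x,\TT-s)$. For $\y=\x+e_i$ it gives $\frac{\transp(\x+e_i)}{\transp(\x)}\rdi^i=\frac{\rdi^i}{\Rdi^i}\denbin^i(\x,\TT-s)$. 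These are exactly the rates in \cref{eq:reverse_mat_direct_thm}. The reversed process then has marginals $\transp_{\TT-s}$ starting from $\transp_{\TT}$, which is \cref{eq:markov_proj_p_intro}.

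\textbf{Main obstacle.} The algebra here is simpler than in the Binomial--Poisson case. The delicate points are technical: justifying the time-reversal formula on the countable state space $\Z^{\dd}$, and interchanging $\partial_t$ with infinite sums. For the reversal, I would invoke the standard result for non-explosive chains whose marginals are positive everywhere. Non-explosion holds because the rates are bounded by $\sup_t\ru,\rdi$ in forward time; the reversed rates inherit integrability from the Tweedie formulas, which express them through finite conditional expectations. For the interchange, I would use dominated convergence, since the Poisson densities have uniformly summable $t$-derivatives on compact subintervals of $(0,\TT)$.
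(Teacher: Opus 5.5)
Your proposal is correct and follows the paper's proof essentially step for step. It uses the two Poisson Tweedie identities of \cref{lem:tweedie_poss_poss}, the state-independent birth--death forward chain whose Kolmogorov equation is \cref{lem:marg_pos_poss} and \cref{prop:aux_markov_p}, and the time-reversal formula of \cref{cor:reverse_time_poss_pos}. Your observation that the forward chain is a superposition of independent inhomogeneous Poisson processes gives the marginals $\transp_t$ directly, without a uniqueness argument for the forward equation on $\Z^{\dd}$, and this slightly tightens the paper's argument.
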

To avoid confusion with the Binomial-Poisson process, in this section we change the notation according to
\begin{equation}
\label{eq:change_nota_pospos}
\lrv_t\mapsto \drv_t,\qquad \possrv_t\mapsto \urv_t,\qquad
\Rdi_t\mapsto \Rd_t,\qquad \rdi_t\mapsto \rd_t,
\end{equation}
so we focus on the perturbations of $\data\sim\target$, for a given $t\in [0,\TT]$, by
\begin{equation}
\label{eq:perturb_pp_app}  
\ppp_t:=\data-\drv_t+\urv_t,\qquad\textnormal{with} \qquad \drv_t\sim \pos(\Rd_t),\quad \urv_t\sim \pos(\Ru_t),
\end{equation} 
where $(\data,\drv_t,\urv_t)$ are independent.
We begin by recording some of the conditional distributions of $(\data,\drv_t,\urv_t)$.
\begin{lemma}
Let $\transp_t$ be the distribution of $\ppp_t$ given in \eqref{eq:perturb_pp_app}. 
\label{lem:conditional_dist_p}
For $\x\in \Z^{\dd}$,
\begin{equation}
\label{eq:marg_exp_poss_poss}
\begin{split}
\transp_t(\x)&=\sum_{\x_0\in \Z^{\dd}}\sum_{\y\in \Z^{\dd}}\target(\x_0)\,\poss_{\Rd_t}(\y)\,\poss_{\Ru_t}(\x+\y-\x_0)\\
&=\sum_{\x_0\in \Z^{\dd}}\sum_{\y\in \Z^{\dd}}\target(\x_0)\,\poss_{\Ru_t}(\y)\,\poss_{\Rd_t}(\y+\x_0-\x).
\end{split}
\end{equation}
Fix $\x\in \Z^{\dd}$, then, for $\y\in  \Z^{\dd}$, 
\begin{align}
&\transp_{\drv_t|\ppp_t=\x}(\y)=\frac{\poss_{\Rd_t}(\y)}{\transp_t(\x)}\sum_{\x_0\in\Z^{\dd}}\target(\x_0)\,\poss_{\Ru_t}(\x+\y-\x_0),\label{eq:Dt_given_yt}\\
&\transp_{\urv_t|\ppp_t=\x}(\y)=\frac{\poss_{\Ru_t}(\y)}{\transp_t(\x)}\sum_{\x_0\in\Z^{\dd}}\target(\x_0)\,\poss_{\Rd_t}(\x_0-\x+\y). \label{eq:Ut_given_yt}
\end{align}

Fix $\x,\z\in \Z^{\dd}$, then, for $\y\in  \Z^{\dd}$, 
\begin{align}
&\transp_{\drv_t|\ppp_t=\x,\data =\z}(\y)=\frac{\poss_{\Rd_t}(\y)\poss_{\Ru_t}(\x+\y-\z)}{\sum_{\tilde{\y}\in \Z^{\dd}}\poss_{\Rd_t}(\tilde{\y})\poss_{\Ru_t}(\x+\tilde{\y}-\z)},\label{eq:Dt_given_ytdata}\\
&\transp_{\urv_t|\ppp_t=\x,\data =\z}(\y)=\frac{\poss_{\Ru_t}(\y)\poss_{\Rd_t}(\y+\z-\x)}{\sum_{\tilde{\y}\in \Z^{\dd}}\poss_{\Ru_t}(\tilde{\y})\poss_{\Rd_t}(\tilde{\y}+\z-\x)}.
\label{eq:Ut_given_ytdata}
\end{align}

Fix $\y\in \Z^{\dd}$, then, for $\x\in  \Z^{\dd}$, 
\begin{equation}
\label{eq:data_given_Yt}
\transp_{\data|\ppp_t=\y}(\x)=\frac{\target(\x)}{\transp_t(\y)}(\poss_{\Rd_t}(\x-\cdot)\ast \poss_{\Ru_t})(\y).
\end{equation}
\end{lemma}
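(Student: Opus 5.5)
The plan is to follow the proof of \cref{lem:conditional_dist} essentially verbatim, replacing the Binomial kernel by the Poisson kernel $\poss_{\Rd_t}$. The key simplification is that $(\data,\drv_t,\urv_t)$ are mutually independent, so every joint mass factorizes as
\[
\target(\x_0)\,\poss_{\Rd_t}(\y)\,\poss_{\Ru_t}(\w),
\]
with the convention that $\poss_\alpha$ vanishes off $\N^{\dd}$. All identities then reduce to Bayes' rule together with a change of summation variable. No Tweedie-type identities are needed here.

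For \cref{eq:marg_exp_poss_poss}, I would write
\[
\transp_t(\x)=\PP(\data-\drv_t+\urv_t=\x)=\sum_{\x_0,\y}\target(\x_0)\,\poss_{\Rd_t}(\y)\,\PP(\urv_t=\x+\y-\x_0),
\]
which gives the first line. For the second line, I would condition on $\urv_t=\y$ instead, so that the requirement is $\drv_t=\x_0+\y-\x$. Equivalently, one reindexes via $\y\mapsto \y+\x_0-\x$.

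For \cref{eq:Dt_given_yt}, I would compute
\[
\PP(\drv_t=\y,\ppp_t=\x)=\poss_{\Rd_t}(\y)\,\PP(\data+\urv_t=\x+\y),
\]
using the independence of $\drv_t$ from $(\data,\urv_t)$. Expanding over $\x_0$ gives $\sum_{\x_0}\target(\x_0)\poss_{\Ru_t}(\x+\y-\x_0)$, and dividing by $\transp_t(\x)$ finishes it. The identity \cref{eq:Ut_given_yt} is symmetric: the event is $\drv_t=\data+\y-\x$, so one sums $\target(\x_0)\poss_{\Rd_t}(\x_0-\x+\y)$.

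For \cref{eq:Dt_given_ytdata} and \cref{eq:Ut_given_ytdata}, I would proceed exactly as in the proof of \cref{eq:Lt_given_Xt}. The joint mass is $\target(\z)\poss_{\Rd_t}(\y)\poss_{\Ru_t}(\x+\y-\z)$, respectively $\target(\z)\poss_{\Ru_t}(\y)\poss_{\Rd_t}(\y+\z-\x)$. The factor $\target(\z)$ cancels against $\transp_{\ppp_t,\data}(\x,\z)=\target(\z)\transp_{\ppp_t|\data=\z}(\x)$, and the latter conditional is written as the normalizing sum over $\tilde\y$.

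Finally, \cref{eq:data_given_Yt} is Bayes' rule:
\[
\transp_{\data|\ppp_t=\y}(\x)=\target(\x)\,\transp_{\ppp_t|\data=\x}(\y)/\transp_t(\y).
\]
It remains to note that, given $\data=\x$, the variable $\ppp_t=\x-\drv_t+\urv_t$ has mass $\sum_{\w}\poss_{\Rd_t}(\x-\w)\poss_{\Ru_t}(\y-\w)$. Here $\x-\drv_t$ has law $\poss_{\Rd_t}(\x-\cdot)$, and this sum is precisely the convolution $(\poss_{\Rd_t}(\x-\cdot)\ast\poss_{\Ru_t})(\y)$.

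There is no genuine obstacle. The only care needed is bookkeeping: the sums range over $\Z^{\dd}$ while the Poisson masses are supported on $\N^{\dd}$, which makes all sums effectively finite or absolutely convergent. One must also keep the sign conventions consistent when interpreting the convolution with the reflected kernel $\poss_{\Rd_t}(\x-\cdot)$. Positivity of $\transp_t(\x)$ for $t>0$, which follows from $\Rd_t,\Ru_t>0$ and the full support of the Skellam kernel, justifies the divisions.
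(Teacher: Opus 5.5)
Your proposal is correct and follows the paper's proof essentially step for step. The steps match as follows: the marginal comes from direct expansion, and the single-conditioned laws from Bayes' rule using independence of $\drv_t$ from $(\data,\urv_t)$ (resp.\ of $\urv_t$ from $(\data,\drv_t)$). The doubly-conditioned laws come from cancelling $\target(\z)$ against $\transp_{\ppp_t,\data}(\x,\z)$, and the data posterior from Bayes' rule with $\transp_{\ppp_t|\data=\x}$ identified as the stated convolution. Your remark that $\transp_t(\x)>0$ for $t>0$, by full support of the Skellam kernel, is a small justification the paper leaves implicit.
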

\begin{proof}
\cref{eq:marg_exp_poss_poss} holds by the definition \eqref{eq:perturb_pp_app}. For \cref{eq:Dt_given_yt} and \cref{eq:Ut_given_yt} we use  Bayes' rule to get
\begin{align*}
\transp_{\drv_t|\ppp_t=\x}(\y)&=\frac{\transp_{\drv_t,\ppp_t}(\y,\x)}{\transp_{\ppp_t}(\x)}=\frac{\transp_{\drv_t,\data+\urv_t}(\y,\x+\y)}{\transp_t(\x)}=\frac{\transp_{\drv_t}(\y)\transp_{\data+\urv_t}(\x+\y)}{\transp_t(\x)}\\
&=\frac{\poss_{\Rd_t}(\y)}{\transp_t(\x)}\sum_{\x_0\in\Z^{\dd}}\target(\x_0)\,\poss_{\Ru_t}(\x+\y-\x_0),
\end{align*}
and
\begin{align*}
\transp_{\urv_t|\ppp_t=\x}(\y)&=\frac{\transp_{\urv_t,\ppp_t}(\y,\x)}{\transp_{\ppp_t}(\x)}=\frac{\transp_{\urv_t,\data-\drv_t}(\y,\x-\y)}{\transp_t(\x)}=\frac{\transp_{\urv_t}(\y)\transp_{\data-\drv_t}(\x-\y)}{\transp_t(\x)}\\
&=\frac{\poss_{\Ru_t}(\y)}{\transp_t(\x)}\sum_{\x_0\in\Z^{\dd}}\target(\x_0)\,\poss_{\Rd_t}(\x_0-\x+\y).
\end{align*}

For \cref{eq:Dt_given_ytdata} and \cref{eq:Ut_given_ytdata} we use the independence of $(\data,\drv_t,\urv_t)$ and Bayes' rule to get
\begin{align*}
&\transp_{\drv_t|\ppp_t=\x,\data =\z}(\y)=\frac{\transp_{\drv_t,\ppp_t,\data }(\y,\x,\z)}{\transp_{\ppp_t,\data }(\x,\z)}=\frac{\transp_{\drv_t,\urv_t,\data }(\y,\x+\y-\z,\z)}{\transp_{\ppp_t,\data }(\x,\z)}\\
&=\frac{\transp_{\drv_t}(\y)\transp_{\urv_t}(\x+\y-\z)\target(\z)}{\transp_{\ppp_t|\data=\z}(\x)\target(\z)}=\frac{\poss_{\Rd_t}(\y)\poss_{\Ru_t}(\x+\y-\z)}{\transp_{\ppp_t|\data=\z}(\x)}=\frac{\poss_{\Rd_t}(\y)\poss_{\Ru_t}(\x+\y-\z)}{\sum_{\tilde{\y}\in \Z^{\dd}}\poss_{\Rd_t}(\tilde{\y})\poss_{\Ru_t}(\x+\tilde{\y}-\z)},
\end{align*}
and similarly, 
\begin{align*}
&\transp_{\urv_t|\ppp_t=\x,\data =\z}(\y)=\frac{\transp_{\urv_t,\ppp_t,\data }(\y,\x,\z)}{\transp_{\ppp_t,\data }(\x,\z)}=\frac{\transp_{\urv_t,\drv_t,\data }(\y,\y+\z-\x,\z)}{\transp_{\ppp_t,\data }(\x,\z)}\\
&=\frac{\transp_{\urv_t}(\y)\transp_{\drv_t}(\y+\z-\x)\target(\z)}{\transp_{\ppp_t|\data=\z}(\x)\target(\z)}=\frac{\poss_{\Ru_t}(\y)\poss_{\Rd_t}(\y+\z-\x)}{\transp_{\ppp_t|\data=\z}(\x)}=\frac{\poss_{\Ru_t}(\y)\poss_{\Rd_t}(\y+\z-\x)}{\sum_{\tilde{\y}\in \Z^{\dd}}\poss_{\Ru_t}(\tilde{\y})\poss_{\Rd_t}(\tilde{\y}+\z-\x)}.
\end{align*}
\cref{eq:data_given_Yt} also follows from  Bayes' rule.
 \end{proof}
\subsection{Discrete Tweedie's formula}
\label{app:tweedie_p}
In this section we derive a discrete Tweedie's formula for the Poisson-Poisson perturbation \eqref{eq:perturb_pp_app} . This formula is crucial in proving \cref{thm:main_p_intro}. 
\begin{lemma}[Discrete Tweedie's formula]
\label{lem:tweedie_poss_poss}
Fix $t\in (0,\TT)$, and let
\begin{equation}
\label{eq:up_down_p_pertb_lem}
\ppp_t:=\data-\drv_t+\urv_t,\qquad\textnormal{with} \qquad \drv_t\sim \pos(\Rd_t),\quad \urv_t\sim \pos(\Ru_t),
\end{equation}
where $(\data,\drv_t,\urv_t)$ are independent.
 Denote by $\transp_t$ the distribution of $\ppp_t$, i.e., $\ppp_t\sim \transp_t$, and define 
\begin{equation}
\label{eq:denoiser_D_app}
\denbin(\x,t):=\EE[\drv_t|\ppp_t=\x],\qquad \denpos(\x,t):=\EE[\urv_t|\ppp_t=\x].
\end{equation}
Then,
\begin{equation}
\label{eq:tweedie_poss_poss_down}
\frac{\transp_t(\x+e_i)}{\transp_t(\x)}=\frac{1}{\Rd^i_t} \,\denbin^i(\x,t),\qquad \x\in\Z^{\dd},\quad t\in (0,\TT),\quad i\in [\dd],
\end{equation}
and 
\begin{equation}
\label{eq:tweedie_poss_poss_up}
\frac{\transp_t(\x-e_i)}{\transp_t(\x)}=\frac{\denpos^i(\x,t)}{\Ru_t^i}
,\qquad \x\in\Z^{\dd},\quad t\in (0,\TT),\quad i\in [\dd].
\end{equation}
\end{lemma}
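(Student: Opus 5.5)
We mirror the proof of the Binomial--Poisson Tweedie formula \cref{eq:tweedie}: expand each denoiser as a sum against the conditional laws of \cref{lem:conditional_dist_p}, then use the Poisson shift identity \cref{eq:poss_idn} to absorb the factor $\y^i$. This re-indexes the sum so that it becomes the marginal $\transp_t$ at a shifted point. The two identities are symmetric, with $\drv_t$ giving the ratio at $\x+e_i$ and $\urv_t$ the ratio at $\x-e_i$.

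For \cref{eq:tweedie_poss_poss_down}, we start from \cref{eq:Dt_given_yt}:
\[
\denbin^i(\x,t)=\frac{1}{\transp_t(\x)}\sum_{\y\in\Z^{\dd}}\y^i\,\poss_{\Rd_t}(\y)\sum_{\x_0\in\Z^{\dd}}\target(\x_0)\,\poss_{\Ru_t}(\x+\y-\x_0).
\]
By \cref{eq:poss_idn}, $\y^i\poss_{\Rd_t}(\y)=\Rd^i_t\,\poss_{\Rd_t}(\y-e_i)$ for $\y\in\N^{\dd}$. For $\y\notin\N^{\dd}$ both sides vanish by the convention $\poss_\alpha=0$ off $\N^{\dd}$; at $\y^i=0$ the left side is $0$ and $\poss_{\Rd_t}(\y-e_i)=0$. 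Substituting $\y'=\y-e_i$ gives
\[
\denbin^i(\x,t)=\frac{\Rd^i_t}{\transp_t(\x)}\sum_{\y'}\poss_{\Rd_t}(\y')\sum_{\x_0}\target(\x_0)\,\poss_{\Ru_t}\big((\x+e_i)+\y'-\x_0\big).
\]
By the first line of \cref{eq:marg_exp_poss_poss}, the double sum is exactly $\transp_t(\x+e_i)$. Dividing by $\Rd^i_t>0$, which holds for $t\in(0,\TT)$ because the schedule is strictly increasing with $\Rd^i_0=0$, yields \cref{eq:tweedie_poss_poss_down}.

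For \cref{eq:tweedie_poss_poss_up}, we start from \cref{eq:Ut_given_yt}:
\[
\denpos^i(\x,t)=\frac{1}{\transp_t(\x)}\sum_{\y}\y^i\,\poss_{\Ru_t}(\y)\sum_{\x_0}\target(\x_0)\,\poss_{\Rd_t}(\x_0-\x+\y).
\]
Applying $\y^i\poss_{\Ru_t}(\y)=\Ru^i_t\poss_{\Ru_t}(\y-e_i)$ and shifting $\y'=\y-e_i$, the inner argument becomes $\x_0-(\x-e_i)+\y'$. This double sum equals $\transp_t(\x-e_i)$ by the second line of \cref{eq:marg_exp_poss_poss}, with the roles of $\y$ and the summation variable matched. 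Dividing by $\Ru^i_t>0$ gives the claim.

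The only delicate points are bookkeeping. First, all sums run over $\Z^{\dd}$, so we must check that the index shift is a bijection of $\Z^{\dd}$ and that boundary terms vanish, which follows from the convention on $\poss_\alpha$. Second, we need $\transp_t(\x)>0$ for every $\x\in\Z^{\dd}$ at interior times; this holds because $\drv_t$ and $\urv_t$ have full support on $\N^{\dd}$, so their difference charges all of $\Z^{\dd}$. Finally, interchanging the sums is justified by nonnegativity (Tonelli). No step is a genuine obstacle; the argument is purely algebraic.
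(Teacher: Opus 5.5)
Your proof is correct and follows the paper's argument essentially step for step. Like the paper, you expand each latent denoiser via \cref{eq:Dt_given_yt} or \cref{eq:Ut_given_yt}, absorb the factor $\y^i$ with the Poisson shift identity \cref{eq:poss_idn}, re-index $\y\mapsto\y-e_i$, and recognize the marginal \cref{eq:marg_exp_poss_poss} at $\x\pm e_i$; your extra remarks on boundary terms, positivity of $\transp_t$, and interchanging sums are sound bookkeeping that the paper leaves implicit.
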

\begin{proof}
\begin{align*}
&\denbin^i(\x,t)=\EE[\drv_t^i|\ppp_t=\x]=\sum_{\y\in\Z^{\dd}}\y^i\transp_{\drv_t|\ppp_t=\x}(\y)\\
&\overset{\ref{eq:Dt_given_yt}}{=}\frac{1}{\transp_t(\x)}\sum_{\y\in\Z^{\dd}}\y^i 
\poss_{\Rd_t}(\y)\sum_{\x_0\in\Z^{\dd}}\target(\x_0)\,\poss_{\Ru_t}(\x+\y-\x_0)\\
&\overset{\ref{eq:poss_idn}}{=}\frac{1}{\transp_t(\x)}\sum_{\y\in\Z^{\dd}}\Rd^i_t\,\poss_{\Rd_t}(\y-e_i)\sum_{\x_0\in\Z^{\dd}}\target(\x_0)\,\poss_{\Ru_t}(\x+\y-\x_0)\\
&=\frac{\Rd^i_t}{\transp_t(\x)}\sum_{\y\in\Z^{\dd}}\sum_{\x_0\in\Z^{\dd}}\target(\x_0)\,\poss_{\Rd_t}(\y-e_i)\,\poss_{\Ru_t}(\x+(\y-e_i)-(\x_0-e_i))\\
&\overset{\textnormal{re-index}}{=}\frac{\Rd^i_t}{\transp_t(\x)}\sum_{\y\in\Z^{\dd}}\sum_{\x_0\in\Z^{\dd}}\target(\x_0)\,\poss_{\Rd_t}(\y)\,\poss_{\Ru_t}((\x+e_i)+\y-\x_0)\\
&\overset{\ref{eq:marg_exp_poss_poss}}{=}\Rd^i_t\,\frac{\transp_t(\x+e_i)}{\transp_t(\x)},
\end{align*}
and similarly, 
\begin{align*}
&\denpos^i(\x,t)=\EE[\urv_t^i|\ppp_t=\x]=\sum_{\y\in\Z^{\dd}}\y^i\transp_{\urv_t|\ppp_t=\x}(\y)\\
&\overset{\ref{eq:Ut_given_yt}}{=}\frac{1}{\transp_t(\x)}\sum_{\y\in\Z^{\dd}}\y^i 
\poss_{\Ru_t}(\y)\sum_{\x_0\in\Z^{\dd}}\target(\x_0)\,\poss_{\Rd_t}(\x_0-\x+\y)\\
&\overset{\ref{eq:poss_idn}}{=}\frac{1}{\transp_t(\x)}\sum_{\y\in\Z^{\dd}}\Ru_t^i\, 
\poss_{\Ru_t}(\y-e_i)\sum_{\x_0\in\Z^{\dd}}\target(\x_0)\,\poss_{\Rd_t}(\x_0-\x+\y)\\
&=\frac{\Ru_t^i}{\transp_t(\x)}\sum_{\y\in\Z^{\dd}}
\sum_{\x_0\in\Z^{\dd}}\target(\x_0)\,\poss_{\Ru_t}(\y-e_i)\,\poss_{\Rd_t}(\x_0-(\x-e_i)+(\y-e_i))\\
&\overset{\textnormal{re-index}}{=}\frac{\Ru_t^i}{\transp_t(\x)}\sum_{\y\in\Z^{\dd}}
\sum_{\x_0\in\Z^{\dd}}\target(\x_0)\,\poss_{\Ru_t}(\y)\,\poss_{\Rd_t}(\x_0-(\x-e_i)+\y)\\
&\overset{\ref{eq:marg_exp_poss_poss}}{=}\Ru_t^i\,\frac{\transp_t(\x-e_i)}{\transp_t(\x)}.
\end{align*}
\end{proof}
We can now show that the \texttt{latent-up} and \texttt{latent-down} denoisers can be expressed in terms of each other, in a non-linear and non-local way. Consequently, the \texttt{data} denoiser can be expressed in terms of a single latent denoiser,  in a non-linear and non-local way.
\begin{corollary}[Relations between \texttt{latent-up}, \texttt{latent-down}, and \texttt{data} denoisers]
\label{cor:latentupdown_possposs}
Fix $\x\in\Z^{\dd},t\in (0,\TT)$ and $i\in [\dd]$. Then,
\begin{equation}
\label{eq:tweedie_poss_poss_bin_up}
\denpos^i(\x,t)=\frac{\Ru_t^i\Rd^i_t}{\denbin^i(\x-e_i,t)},
\end{equation} 
and 
\begin{align}
\dend^i(\x,t)&=\x^i-\denpos^i(\x,t)+\frac{\Rd_t^i\Ru_t^i}{\denpos^i(\x+e_i,t)}\label{eq:den-dendd2_p}\\
&=\x^i+\denbin^i(\x,t)-\frac{\Ru_t^i\Rd^i_t}{\denbin^i(\x-e_i,t)}.\label{eq:den-dendd3_p}
\end{align}
\end{corollary}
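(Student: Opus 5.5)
The plan is to combine the two discrete Tweedie formulas of \cref{lem:tweedie_poss_poss}, evaluated at neighbouring lattice points, and then use the linear identity $\dend=\x+\denbin-\denpos$. No further computation with the Poisson mass functions should be needed.

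\textbf{Step 1: the relation \eqref{eq:tweedie_poss_poss_bin_up}.} Apply \cref{eq:tweedie_poss_poss_down} at the point $\x-e_i$, which is allowed since the state space is all of $\Z^{\dd}$. This gives
\[
\frac{\transp_t(\x)}{\transp_t(\x-e_i)}=\frac{\denbin^i(\x-e_i,t)}{\Rd^i_t}.
\]
By \cref{eq:tweedie_poss_poss_up}, the reciprocal of this ratio equals $\denpos^i(\x,t)/\Ru^i_t$. Multiplying the two ratios gives $1$, so
\[
\denpos^i(\x,t)\,\denbin^i(\x-e_i,t)=\Ru^i_t\Rd^i_t,
\]
which is \eqref{eq:tweedie_poss_poss_bin_up}.

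\textbf{Step 2: positivity.} This is the only point needing care, since we divide by $\denbin^i(\x-e_i,t)$ and later by $\denpos^i(\x+e_i,t)$. For $t\in(0,\TT)$ the schedules are strictly increasing from $0$, so $\Rd^i_t,\Ru^i_t>0$. The convolution formula \eqref{eq:marg_exp_poss_poss} has full-support Poisson factors, so $\transp_t(\x)>0$ for every $\x\in\Z^{\dd}$: choose any $\x_0$ in the support of $\target$, then a suitable $\y\in\N^{\dd}$ makes the summand positive. Hence both Tweedie ratios are strictly positive and finite, and both denoisers are strictly positive everywhere.

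\textbf{Step 3: the data-denoiser formulas.} Taking conditional expectations of $\ppp_t=\data-\drv_t+\urv_t$ given $\ppp_t=\x$ gives
\[
\dend(\x,t)=\x+\denbin(\x,t)-\denpos(\x,t).
\]
Substituting $\denpos^i(\x,t)=\Ru^i_t\Rd^i_t/\denbin^i(\x-e_i,t)$ yields \eqref{eq:den-dendd3_p}. Applying Step 1 at $\x+e_i$ gives $\denbin^i(\x,t)=\Ru^i_t\Rd^i_t/\denpos^i(\x+e_i,t)$. Substituting this instead yields \eqref{eq:den-dendd2_p}.

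I expect no real obstacle beyond the positivity justification in Step 2.
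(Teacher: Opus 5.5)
Your proof is correct and takes the same route as the paper's: the paper also obtains \eqref{eq:tweedie_poss_poss_bin_up} by chaining \eqref{eq:tweedie_poss_poss_up} at $\x$ with \eqref{eq:tweedie_poss_poss_down} at $\x-e_i$, and then substitutes into $\dend=\x+\denbin-\denpos$ to get both data-denoiser formulas. Your Step 2 (strict positivity of $\transp_t$ on all of $\Z^{\dd}$ for $t\in(0,\TT)$, which makes the divisions by $\denbin^i(\x-e_i,t)$ and $\denpos^i(\x+e_i,t)$ legitimate) is a correct detail that the paper leaves implicit.
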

\begin{proof}
For \cref{eq:tweedie_poss_poss_bin_up} we use
\begin{align*}
\denpos^i(\x,t)\overset{\ref{eq:tweedie_poss_poss_up}}{=}\Ru_t^i\frac{\transp_t(\x-e_i)}{\transp_t(\x)}\overset{\ref{eq:tweedie_poss_poss_down}}{=}\frac{\Ru_t^i\Rd^i_t}{\denbin^i(\x-e_i,t)}.
\end{align*}
\cref{eq:den-dendd2_p} and \cref{eq:den-dendd3_p} then follow from \cref{eq:tweedie_poss_poss_bin_up} and the relation
\[
\dend(\x,t)=\x+\denbin(\x,t)-\denpos(\x,t).
\]
\end{proof}

We conclude by expressing the latent denoisers in terms of the marginal posteriors of the data given its noisy observations. 
\begin{lemma}
\label{lem:cor_cond_exp_pospos_posterior}
Fix $i\in [\dd]$ and $\x\in \Z^{\dd}$. Then,
\begin{equation}
\label{eq:cor_cond_exp_pospos_posterior_up}
\denpos^i(\x,t)=\sum_{\z^i\in\Z}\transp_{\data^i|\ppp_t=\x}(\z^i)\sum_{\y^i\in \Z}\y^i\,\frac{\poss_{\Ru_t^i}(\y^i)\poss_{\Rd_t^i}(\y^i+\z^i-\x^i)}{\sum_{w\in \Z}\poss_{\Ru_t^i}(w)\poss_{\Rd_t^i}(w+\z^i-\x^i)},
\end{equation}
and
\begin{equation}
\label{eq:cor_cond_exp_pospos_posterior_down}
\denbin^i(\x,t)=\sum_{\z^i\in\Z}\transp_{\data^i|\ppp_t=\x}(\z^i)\sum_{\y^i\in \Z}\y^i\,\frac{\poss_{\Rd_t^i}(\y^i)\poss_{\Ru_t^i}(\x^i+\y^i-\z^i)}{\sum_{w\in \Z}\poss_{\Rd_t^i}(w)\poss_{\Ru_t^i}(\x^i+w-\z^i)}.
\end{equation}
\end{lemma}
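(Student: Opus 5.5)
The argument mirrors the proof of \cref{lem:cor_cond_exp_binpos_posterior}, with the Binomial-Poisson conditionals replaced by their Poisson-Poisson analogues from \cref{lem:conditional_dist_p}.

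\textbf{Step 1: factorization given the full data vector.} Fix $\x,\z\in\Z^{\dd}$. Since $\poss_{\Rd_t}$ and $\poss_{\Ru_t}$ are product measures, \cref{eq:Dt_given_ytdata} and \cref{eq:Ut_given_ytdata} show that $\transp_{\drv_t|\ppp_t=\x,\data=\z}$ and $\transp_{\urv_t|\ppp_t=\x,\data=\z}$ are product distributions. Their denominators factor coordinatewise, so the $i$th marginals are
\[
\transp_{\urv_t^i|\ppp_t=\x,\data=\z}(\y^i)=\frac{\poss_{\Ru_t^i}(\y^i)\poss_{\Rd_t^i}(\y^i+\z^i-\x^i)}{\sum_{w\in\Z}\poss_{\Ru_t^i}(w)\poss_{\Rd_t^i}(w+\z^i-\x^i)},
\]
and analogously for $\drv_t^i$. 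The key observation is that these marginals depend on $\z$ only through $\z^i$ (and on $\x$ only through $\x^i$).

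\textbf{Step 2: reducing the conditioning to $\data^i$.} Write
\[
\transp_{\urv_t^i|\ppp_t=\x,\data^i=\z^i}(\y^i)=\sum_{\z^{-i}}\transp_{\urv_t^i|\ppp_t=\x,\data=\z}(\y^i)\,\transp_{\data^{-i}|\ppp_t=\x,\data^i=\z^i}(\z^{-i}).
\]
The first factor does not depend on $\z^{-i}$, so it pulls out of the sum, and the remaining conditional probabilities sum to one. Hence conditioning on $\data^i=\z^i$ alone gives the same formula as in Step~1.

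\textbf{Step 3: disintegration.} Next write
\[
\denpos^i(\x,t)=\sum_{\y^i}\y^i\,\transp_{\urv_t^i|\ppp_t=\x}(\y^i)=\sum_{\y^i}\y^i\sum_{\z^i}\transp_{\urv_t^i|\ppp_t=\x,\data^i=\z^i}(\y^i)\,\transp_{\data^i|\ppp_t=\x}(\z^i).
\]
Substituting the formula from Step~2 and swapping the order of summation yields \cref{eq:cor_cond_exp_pospos_posterior_up}. The same argument applied to $\drv_t$ yields \cref{eq:cor_cond_exp_pospos_posterior_down}.

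\textbf{Main obstacle.} The steps are routine; the only point needing care is the interchange of sums over $\Z$. Since all terms are nonnegative, Tonelli's theorem applies, and $\EE[\urv_t^i]=\Ru_t^i<\infty$ ensures the conditional expectations are finite. One should also note that the inner denominators are positive whenever $\transp_{\data^i|\ppp_t=\x}(\z^i)>0$, so the formula is well defined on the support of the posterior.
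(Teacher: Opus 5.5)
Your proposal is correct and follows the paper's proof essentially step for step. The paper also reads off the product structure of the conditionals given $(\ppp_t,\data)$ from \cref{eq:Ut_given_ytdata} and \cref{eq:Dt_given_ytdata}, marginalizes out $\data^{-i}$ because the $i$th marginal depends only on $\z^i$, and then disintegrates over $\transp_{\data^i|\ppp_t=\x}$. Your extra remarks on Tonelli and well-definedness are fine; in fact the inner denominators are strictly positive for every $\z^i$ once both Poisson parameters are positive, i.e.\ for $t\in(0,\TT)$.
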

\begin{proof}
We first note that \cref{eq:Ut_given_ytdata} shows that $\transp_{\urv_t|\ppp_t=\x,\data=\z}$ is a product distribution with marginals
\begin{equation}
\label{Ut_given_Ytxdata_i}
\transp_{\urv_t^i|\ppp_t=\x,\data=\z}(\y^i)=\frac{\poss_{\Ru_t^i}(\y^i)\poss_{\Rd_t^i}(\y^i+\z^i-\x^i)}{\sum_{w\in \Z}\poss_{\Ru_t^i}(w)\poss_{\Rd_t^i}(w+\z^i-\x^i)}.
\end{equation}
By marginalization it follows that 
\begin{align*}
&\transp_{\urv_t^i|\ppp_t=\x,\data^i=\z^i}(\y^i)=\sum_{\z^{-i}\in\Z^{\dd-1}}\transp_{\urv_t^i|\ppp_t=\x,\data^i=\z^i, \data^{-i}=\z^{-i}}(\y^i)\transp_{\data^{-i}|\ppp_t=\x,\data^i=\z^i}(\z^{-i})\\
&=\sum_{\z^{-i}\in\Z^{\dd-1}}\transp_{\urv_t^i|\ppp_t=\x,\data=\z}(\y^i)\transp_{\data^{-i}|\ppp_t=\x,\data^i=\z^i}(\z^{-i})\\
&=\transp_{\urv_t^i|\ppp_t=\x,\data=\z}(\y^i)\underbrace{\sum_{\z^{-i}\in\Z^{\dd-1}}\transp_{\data^{-i}|\ppp_t=\x,\data^i=\z^i}(\z^{-i})}_{=1}=\transp_{\urv_t^i|\ppp_t=\x,\data=\z}(\y^i).
\end{align*}
Combining with \cref{Ut_given_Ytxdata_i} we conclude
\begin{equation}
\label{Ut_given_Ytxdata_i_reduce_marg}    
\transp_{\urv_t^i|\ppp_t=\x,\data^i=\z^i}(\y^i)=\frac{\poss_{\Ru_t^i}(\y^i)\poss_{\Rd_t^i}(\y^i+\z^i-\x^i)}{\sum_{w\in \Z}\poss_{\Ru_t^i}(w)\poss_{\Rd_t^i}(w+\z^i-\x^i)}.
\end{equation}
Hence, by Bayes' rule,
\begin{align*}
&\denpos^i(\x,t)=\EE[\urv_t^i|\ppp_t=\x]=\sum_{\y^i\in \Z}\y^i\,\transp_{\urv_t^i|\ppp_t=\x}(\y^i)\\
&=\sum_{\y^i\in \Z}\y^i\,\frac{\transp_{\urv_t^i,\ppp_t}(\y^i, \x)}{\transp_{\ppp_t}(\x)}=\sum_{\y^i\in \Z}\y^i\frac{1}{\transp_{\ppp_t}(\x)}\sum_{\z^i\in\Z}\transp_{\urv_t^i,\ppp_t,\data^i}(\y^i, \x,\z^i)\\
&=\sum_{\y^i\in \Z}\y^i\sum_{\z^i\in\Z}\transp_{\urv_t^i|\ppp_t=\x,\data^i=\z^i}(\y^i)\frac{\transp_{\ppp_t,\data^i}(\x,\z^i)}{\transp_{\ppp_t}(\x)}\\
&=\sum_{\y^i\in \Z}\y^i\sum_{\z^i\in\Z}\transp_{\urv_t^i|\ppp_t=\x,\data^i=\z^i}(\y^i)\transp_{\data^i|\ppp_t=\x}(\z^i)\\
&\overset{\ref{Ut_given_Ytxdata_i_reduce_marg}}{=}\sum_{\z^i\in\Z}\transp_{\data^i|\ppp_t=\x}(\z^i)\sum_{\y^i\in \Z}\y^i\,\frac{\poss_{\Ru_t^i}(\y^i)\poss_{\Rd_t^i}(\y^i+\z^i-\x^i)}{\sum_{w\in \Z}\poss_{\Ru_t^i}(w)\poss_{\Rd_t^i}(w+\z^i-\x^i)},
\end{align*}
and similarly, using \cref{eq:Dt_given_ytdata},
\begin{align*}
\denbin^i(\x,t)=\sum_{\z^i\in\Z}\transp_{\data^i|\ppp_t=\x}(\z^i)\sum_{\y^i\in \Z}\y^i\,\frac{\poss_{\Rd_t^i}(\y^i)\poss_{\Ru_t^i}(\x^i+\y^i-\z^i)}{\sum_{w\in \Z}\poss_{\Rd_t^i}(w)\poss_{\Ru_t^i}(\x^i+w-\z^i)}.
\end{align*}
\end{proof}

\subsection{Forward and backward processes}
\label{app:for_back_p}
In this section we will show how to use the latent denoisers in order to define forward and  backward processes with the marginals of the Poisson-Poisson perturbation \eqref{eq:perturb_pp_app}. Consequently, we prove \cref{thm:main_p_intro}. 
For $i\in [\dd]$ and $t\in [0,\TT]$ set 
\begin{equation}
\label{eq:lam_Lam_bin} 
\Ru_0^i=0,\quad \Rd_0^i=0,\qquad \ru^i_t:=\frac{\diff }{\diff t}\Ru^i_t,\qquad \rd^i_t:=\frac{\diff }{\diff t}\Rd^i_t. 
\end{equation}
We begin by deriving the evolution equation of the marginals of the Poisson-Poisson perturbation \eqref{eq:perturb_pp_app}.
\begin{lemma}
\label{lem:marg_pos_poss}
Let $\ppp_t\sim\transp_t$ where $\ppp_t$ is as in \cref{eq:perturb_pp_app}. Then, for $\x\in \Z^{\dd}$ and $t\in [0,\TT]$,
\begin{equation}
\label{eq:marg_pos_poss}    
\partial_t\transp_t(\x)=\sum_{i=1}^{\dd}\rd_t^i\,\transp_t(\x+e_i)+\sum_{i=1}^{\dd}\ru_t^i\,\transp_t(\x-e_i)-\transp_t(\x)\sum_{i=1}^{\dd}(\rd_t^i+\ru_t^i).
\end{equation}
\end{lemma}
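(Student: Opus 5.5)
We differentiate the mixture representation \cref{eq:marg_exp_poss_poss} in time, using the Poisson evolution identity \cref{eq:poss_pde_idn}, which holds for all $t\ge 0$ (including when $\Rd_t^i$ or $\Ru_t^i$ vanish). Write
\[
\transp_t(\x)=\sum_{\x_0\in\Z^{\dd}}\target(\x_0)\sum_{\y,\w\in\N^{\dd}}\poss_{\Rd_t}(\y)\,\poss_{\Ru_t}(\w)\,1_{\x=\x_0-\y+\w}.
\]
The time dependence sits only in the two Poisson factors, so by the product rule $\partial_t\transp_t(\x)$ splits into a term $D$, where $\poss_{\Rd_t}$ is differentiated, and a term $U$, where $\poss_{\Ru_t}$ is differentiated.

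For $U$, apply \cref{eq:poss_pde_idn} to $\poss_{\Ru_t}(\w)$. This gives
\[
\sum_i\ru^i_t\bigl(\poss_{\Ru_t}(\w-e_i)-\poss_{\Ru_t}(\w)\bigr).
\]
Re-index $\w\mapsto\w+e_i$ in the first piece, using the convention that $\poss$ vanishes off $\N^{\dd}$, so no boundary terms appear. The constraint then becomes $\x-e_i=\x_0-\y+\w$. Hence
\[
U=\sum_i\ru^i_t\bigl(\transp_t(\x-e_i)-\transp_t(\x)\bigr).
\]

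For $D$, the same computation applied to $\poss_{\Rd_t}(\y)$ with the shift $\y\mapsto\y+e_i$ turns the constraint into $\x=\x_0-\y-e_i+\w$, that is, $\x+e_i=\x_0-\y+\w$. Hence
\[
D=\sum_i\rd^i_t\bigl(\transp_t(\x+e_i)-\transp_t(\x)\bigr).
\]
Adding $D$ and $U$ gives \cref{eq:marg_pos_poss}.

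The only real obstacle is justifying the interchange of $\partial_t$ with the (possibly infinite) sums over $\x_0,\y,\w$. I would handle it by dominated convergence. On a compact time interval, $|\partial_t\poss_{\alpha_t}(\w)|\le|\dot\alpha_t|\bigl(\poss_{\alpha_t}(\w-e_i)+\poss_{\alpha_t}(\w)\bigr)$ summed over coordinates. The $\alpha_t$ are bounded, and the $\dot\alpha_t$ are bounded because the schedules are $C^1$, so the $\w$- and $\y$-sums of these bounds are uniformly summable. Because $\target$ is a probability measure, the sum over $\x_0$ then converges uniformly as well. This legitimizes termwise differentiation for all $t\in[0,\TT]$.
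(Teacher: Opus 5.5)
Your proposal is correct and follows essentially the same route as the paper. Both differentiate the mixture representation \eqref{eq:marg_exp_poss_poss} termwise, apply \eqref{eq:poss_pde_idn} to each Poisson factor, and shift the summation variable to recognize $\transp_t(\x\pm e_i)$. Your one addition, justifying the interchange of $\partial_t$ with the infinite sums (which the paper leaves implicit), is sound, and the $t$-independent majorant $\sup_{t\in[0,\TT]}\poss_{\alpha_t}(w)\le\prod_i (A^i)^{w^i}/w^i!$ with $A^i:=\sup_{t\in[0,\TT]}\alpha^i_t$ makes your ``uniformly summable'' claim precise.
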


\begin{proof}
\begin{align*}
&\partial_t\transp_t(\x)\overset{\ref{eq:marg_exp_poss_poss}}{=}\partial_t\sum_{\x_0\in \Z^{\dd}}\sum_{\y\in \Z^{\dd}}\target(\x_0)\,\poss_{\Rd_t}(\y)\,\poss_{\Ru_t}(\x+\y-\x_0)\\
&=\sum_{\x_0\in \Z^{\dd}}\sum_{\y\in \Z^{\dd}}\target(\x_0)\,\poss_{\Ru_t}(\x+\y-\x_0)\partial_t\poss_{\Rd_t}(\y)+\sum_{\x_0\in \Z^{\dd}}\sum_{\y\in \Z^{\dd}}\target(\x_0)\,\poss_{\Rd_t}(\y)\partial_t\poss_{\Ru_t}(\x+\y-\x_0)\\
&\overset{\ref{eq:poss_pde_idn}}{=}\sum_{\x_0\in \Z^{\dd}}\sum_{\y\in \Z^{\dd}}\target(\x_0)\,\poss_{\Ru_t}(\x+\y-\x_0)\left[\sum_{i=1}^{\dd}\rd^i_t\,(\poss_{\Rd_t}(\y-e_i)-\poss_{\Rd_t}(\y))\right]\\
&+\sum_{\x_0\in \Z^{\dd}}\sum_{\y\in \Z^{\dd}}\target(\x_0)\,\poss_{\Rd_t}(\y)\left[\sum_{i=1}^{\dd}\ru^i_t\,(\poss_{\Ru_t}(\x+\y-\x_0-e_i)-\poss_{\Ru_t}(\x+\y-\x_0))\right]\\
&=\sum_{i=1}^{\dd}\rd_t^i\left(\sum_{\x_0\in \Z^{\dd}}\sum_{\y\in \Z^{\dd}}\target(\x_0)\,\poss_{\Ru_t}((\x+e_i)+(\y-e_i)-\x_0)\,\poss_{\Rd_t}(\y-e_i)\right)-\transp_t(\x)\sum_{i=1}^{\dd}\rd_t^i\\
&+\sum_{i=1}^{\dd}\ru_t^i\left(\sum_{\x_0\in \Z^{\dd}}\sum_{\y\in \Z^{\dd}}\target(\x_0)\,\poss_{\Rd_t}(\y)\,\poss_{\Ru_t}((\x-e_i)+\y-\x_0)\right)-\transp_t(\x)\sum_{i=1}^{\dd}\ru_t^i\\
&=\sum_{i=1}^{\dd}\rd_t^i\,\transp_t(\x+e_i)+\sum_{i=1}^{\dd}\ru_t^i\,\transp_t(\x-e_i)-\transp_t(\x)\sum_{i=1}^{\dd}(\rd_t^i+\ru_t^i).
\end{align*}
\end{proof}
Next we will show how to use the denoisers \eqref{eq:denoisers_pos_pos_thm} to define a CTMC whose marginals are the same as the Poisson-Poisson perturbation \eqref{eq:perturb_pp_app}.

\begin{proposition}
\label{prop:aux_markov_p}
Let $(\vec{\pppa_t})_{t\in [0,\TT]}$ be the CTMC with rate matrices
\begin{equation}
\label{eq:rate_axu_possposs}
\vec{\rate}_{\x\y}(t)=
\begin{cases}
\ru^i_t \qquad &\textnormal{if}\qquad \y=\x+e_i,\qquad i\in [\dd]\\
\rd^i_t\qquad &\textnormal{if}\qquad \y=\x-e_i,\qquad i\in [\dd]\\
0 \qquad & \textnormal{if} \qquad \y\notin\{\x, \x\pm e_1,\ldots, \x\pm e_{\dd}\}\\
 -\sum_{\y\in \{\x\pm e_1,\ldots,\x\pm e_{\dd}\}}\vec{\rate}_{\x\y}(t)\qquad &\textnormal{if}\qquad \y=\x
\end{cases},
\end{equation}
and $\vec{\pppa_0}\sim \target$. Then,
\[
\vec{\pppa_t}\sim \transp_t\qquad\textnormal{for all} \qquad t\in [0,\TT].
\]
\end{proposition}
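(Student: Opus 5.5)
The plan is to follow the strategy of \cref{prop:aux_markov}: verify that $(\transp_t)_{t\in[0,\TT]}$ solves the Kolmogorov forward equation associated with the rates \eqref{eq:rate_axu_possposs}. Since $\transp_0=\target$ (because $\Rd_0=\Ru_0=0$ forces $\drv_0=\urv_0=0$), the marginals of $\vec{\pppa_t}$ then coincide with $\transp_t$. Here the rates are state independent and bounded on $[0,\TT]$, as $\rd,\ru$ are continuous. The chain is therefore a well-defined non-explosive process: coordinatewise a difference of two independent inhomogeneous Poisson counting processes. Consequently the forward equation has a unique solution among probability-valued curves, which justifies the reduction.

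First, write the forward equation explicitly. Using $\vec{\rate}_{\x-e_i,\x}(t)=\ru^i_t$, $\vec{\rate}_{\x+e_i,\x}(t)=\rd^i_t$ and $\vec{\rate}_{\x\x}(t)=-\sum_i(\ru^i_t+\rd^i_t)$, it reads
\begin{equation*}
\partial_t\transp_t(\x)=\sum_{\y\in\Z^{\dd}}\transp_t(\y)\vec{\rate}_{\y\x}(t)=\sum_{i=1}^{\dd}\left\{\ru^i_t\,\transp_t(\x-e_i)+\rd^i_t\,\transp_t(\x+e_i)-(\ru^i_t+\rd^i_t)\transp_t(\x)\right\}.
\end{equation*}
Second, invoke \cref{lem:marg_pos_poss}. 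Its identity \eqref{eq:marg_pos_poss} is exactly this right-hand side, for every $\x\in\Z^{\dd}$ and $t\in[0,\TT]$, so the verification is immediate.

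The only step needing care is the uniqueness and justification of the time differentiation in \cref{lem:marg_pos_poss}, namely interchanging $\partial_t$ with the double sum over $\x_0,\y\in\Z^{\dd}$. I would handle this by dominated convergence: $|\partial_t\poss_{\alpha_t}(\y)|\le \sum_i|\dot\alpha^i_t|(\poss_{\alpha_t}(\y-e_i)+\poss_{\alpha_t}(\y))$ is summable uniformly on compact time intervals, and $\target$ is a probability measure.

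Alternatively, one can bypass the PDE entirely. Construct $\vec{\pppa_t}=\data+\mathbf{U}_t-\mathbf{D}_t$, where $\mathbf{U}^i,\mathbf{D}^i$ are independent inhomogeneous Poisson processes with intensities $\ru^i_t,\rd^i_t$, independent of $\data$. These have exactly the rates \eqref{eq:rate_axu_possposs}, and $\mathbf{U}_t\sim\pos(\Ru_t)$, $\mathbf{D}_t\sim\pos(\Rd_t)$ give the law \eqref{eq:perturb_pp_app} directly. I would mention this as a remark confirming the result.
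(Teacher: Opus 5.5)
Your proposal is correct and is the same argument as the paper's: the paper also reduces the claim to the Kolmogorov forward equation for the rates \eqref{eq:rate_axu_possposs} and notes that this equation is exactly the identity \eqref{eq:marg_pos_poss} of \cref{lem:marg_pos_poss}. Your extra points fill in details the paper leaves implicit: checking $\transp_0=\target$, uniqueness of the forward equation for bounded state-independent rates, justifying differentiation under the sum, and the direct construction $\data+\mathbf{U}_t-\mathbf{D}_t$ from independent inhomogeneous Poisson processes.
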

\begin{proof}
It suffices to show that $(\transp_t)_{t\in [0,\TT]}$ satisfies the Kolmogorov equation
\begin{equation}
\label{eq:kolmogorov_to_prove1}
\begin{split}
&\partial_t\transp_t(\x)=\sum_{\y\in \Z^{\dd}}\transp_t(\y)\vec{\rate}_{\y\x}(t).
\end{split}
\end{equation}
This holds true by \cref{eq:marg_pos_poss}.   
\end{proof}
We can now derive the rates of the time-reversal of $(\vec{\pppa_t})_{t\in [0,\TT]}$.
\begin{corollary}
\label{cor:reverse_time_poss_pos}
With $s=\TT-t$, the time reversal $(\cev{\pppa_s})_{s\in [0,\TT]}$ of $(\vec{\pppa_t})_{t\in [0,\TT]}$ has, for $0<s<\TT$, rate matrices
\begin{equation}
\label{eq:reverse_mat_direct_possposs}
\cev{\rate}_{\x\y}(s)=\begin{cases}
\frac{\ru_{\TT-s}^i}{\Ru_{\TT-s}^i}\denpos^i(\x,\TT-s) \qquad &\textnormal{if}\qquad \y=\x-e_i,\qquad i\in [\dd]\\
\frac{\rd_{\TT-s}^i}{\Rd^i_{\TT-s}} \,\denbin^i(\x,\TT-s)\qquad &\textnormal{if}\qquad \y=\x+e_i,\qquad i\in [\dd]\\
0 \qquad & \textnormal{if} \qquad \y\notin\{\x, \x\pm e_1,\ldots, \x\pm e_{\dd}\}\\
 -\sum_{\y\in \{\x\pm e_1,\ldots,\x\pm e_{\dd}\}}\cev{\rate}_{\x\y}(s)\qquad &\textnormal{if}\qquad \y=\x
\end{cases}.
\end{equation}
\end{corollary}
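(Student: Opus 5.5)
The plan is to mirror the proof of \cref{cor:reverse_time_poss_bin}, using the time-reversal formula for CTMCs together with the discrete Tweedie formulas of \cref{lem:tweedie_poss_poss}. For $0<s<\TT$, set $t=\TT-s\in(0,\TT)$. I first note that $\transp_t(\x)>0$ for every $\x\in\Z^{\dd}$, because $\Rd^i_t,\Ru^i_t>0$ there. Hence the forward chain of \cref{prop:aux_markov_p}, which by that proposition has marginals $\transp_t$, has a time reversal with rates
\[
\cev{\rate}_{\x\y}(s)=\frac{\transp_{\TT-s}(\y)}{\transp_{\TT-s}(\x)}\,\vec{\rate}_{\y\x}(\TT-s),\qquad \y\neq\x .
\]
The diagonal entries are then fixed by requiring that each row sum to zero.

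Next I go through the off-diagonal cases using \eqref{eq:rate_axu_possposs}.

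\emph{Case $\y=\x-e_i$.} Here $\vec{\rate}_{\x-e_i,\x}(t)=\ru^i_t$, since this is an up-jump from $\x-e_i$ to $\x$. By \eqref{eq:tweedie_poss_poss_up},
\[
\cev{\rate}_{\x,\x-e_i}(s)=\frac{\transp_t(\x-e_i)}{\transp_t(\x)}\,\ru^i_t=\frac{\ru^i_t}{\Ru^i_t}\,\denpos^i(\x,t).
\]

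\emph{Case $\y=\x+e_i$.} Here $\vec{\rate}_{\x+e_i,\x}(t)=\rd^i_t$, a down-jump. By \eqref{eq:tweedie_poss_poss_down},
\[
\cev{\rate}_{\x,\x+e_i}(s)=\frac{\rd^i_t}{\Rd^i_t}\,\denbin^i(\x,t).
\]

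\emph{Other $\y\neq\x$.} If $\y\notin\{\x\pm e_1,\ldots,\x\pm e_{\dd}\}$, then $\vec{\rate}_{\y\x}=0$ and so $\cev{\rate}_{\x\y}(s)=0$.

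These three cases give exactly \eqref{eq:reverse_mat_direct_possposs}.

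The main obstacle is justifying the time-reversal formula on the countable space $\Z^{\dd}$. This needs positivity of $\transp_t$, which was checked above, and enough regularity that the reversed process is a well-defined CTMC whose marginals solve the reversed Kolmogorov equation. Rather than rely on a general theorem, I would verify directly that the proposed rates satisfy
\[
\partial_s\transp_{\TT-s}(\x)=\sum_{\y}\transp_{\TT-s}(\y)\,\cev{\rate}_{\y\x}(s).
\]
This is a short computation: substituting the reversal formula turns the right-hand side into $-\sum_{\y}\transp_t(\y)\vec{\rate}_{\y\x}(t)$, which equals $-\partial_t\transp_t(\x)$ by \cref{lem:marg_pos_poss}. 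The exchange of sums there involves only finitely many neighbours, and convergence holds because the Poisson tails are summable.
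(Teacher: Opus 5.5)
Your proof is correct and follows the paper's argument: you apply the time-reversal formula to the forward chain of \cref{prop:aux_markov_p} and replace the ratios $q_t(\x\mp e_i)/q_t(\x)$ using the Tweedie identities \eqref{eq:tweedie_poss_poss_up} and \eqref{eq:tweedie_poss_poss_down}. Checking the reversed Kolmogorov equation is a fine consistency check but does not replace the time-reversal theorem: it says nothing about the path law of the reversed process, and showing that the chain generated by these rates has marginals $q_{\tau-s}$ would still need uniqueness of solutions to the forward equation on $\mathbb{Z}^{D}$ (e.g.\ via non-explosion), which the paper also leaves implicit.
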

\begin{proof}
By the time-reversal formula
\[
\cev{\rate}_{\x\y}(s)=\frac{\transp_{\TT-s}(\y)}{\transp_{\TT-s}(\x)}\,\vec{\rate}_{\y\x}(\TT-s),
\]
so
\begin{align*}
\cev{\rate}_{\x,\x-e_i}(s)=\frac{\transp_{\TT-s}(\x-e_i)} {\transp_{\TT-s}(\x)} \vec{\rate}_{\x-e_i,\x}(\TT-s)\overset{\ref{eq:tweedie_poss_poss_up}}{=}\frac{\ru_{\TT-s}^i}{\Ru_{\TT-s}^i}\denpos^i(\x,\TT-s),
\end{align*}
and
\begin{align*}
\cev{\rate}_{\x,\x+e_i}(s)&=\frac{\transp_{\TT-s}(\x+e_i)} {\transp_{\TT-s}(\x)} \vec{\rate}_{\x+e_i,\x}(\TT-s)\overset{\ref{eq:tweedie_poss_poss_down}}{=}\frac{\rd_{\TT-s}^i}{\Rd^i_{\TT-s}} \,\denbin^i(\x,\TT-s).
\end{align*}
\end{proof}

\subsection{Proof of Theorem~\ref{thm:main_p_intro}}
\label{app:proof_main_p_intro}
The time-reversed chain in \cref{cor:reverse_time_poss_pos}, indexed by sampling time $s$, starts from $\transp_{\TT}$ and has marginals $\transp_{\TT-s}$. Its rates are those in \cref{eq:reverse_mat_direct_thm} under the notation change \cref{eq:change_nota_pospos}.
\section{Jumping only up or only down}
\label{app:only_up_down}
In this section we work out special cases of the Binomial-Poisson process and Poisson-Poisson process where the jumps are uni-directional. The displays below give the off-diagonal rates; the diagonal entries are minus the sum of the off-diagonal entries in the same row. 
\subsection{Jumping up on $\N^{\dd}$ or $\Z^{\dd}$}
\paragraph{Binomial-Poisson.}
\label{app:jump_up}
Take $\Rd^i_t=1$ for all $t\in [0,\TT]$ and $i\in [\dd]$. Then
\[
\pp_t=\data+\possrv_t, \qquad \possrv_t\sim \pos(\Ru_t),
\]
so
\begin{equation}
\label{eq:denoisers_agree}
\den(\x,t)=\EE[\data|\data+\possrv_t=\x]=\dend(\x,t).
\end{equation}
As $\rd^i_t=0$ for all $t\in [0,\TT]$ and $i\in [\dd]$, \cref{eq:chain_intro} reads
\begin{equation}
\label{eq:chain_up}
\rate_{\x\y}(t)=\begin{cases}
\frac{\ru^i_{\TT-t}}{\Ru^i_{\TT-t}}(\x^{\idx}-\dend^i(\x,\TT-t)) \qquad &\textnormal{if}\qquad \y=\x-e_i,\quad i\in [\dd]\\
0\qquad &\textnormal{if}\qquad \y =\x+e_i,\quad i\in [\dd]\\
0 \qquad & \textnormal{if} \qquad \y\notin\{\x, \x\pm e_1,\ldots, \x\pm e_{\dd}\}
\end{cases}.
\end{equation}

\paragraph{Poisson-Poisson.}
Take $\Rd^i_t=0$ for all $t\in [0,\TT]$ and $i\in [\dd]$. Then
\[
\ppp_t=\data+\urv_t,\qquad \urv_t\sim \pos(\Ru_t),
\]
so
\[
\denbin(\x,t)=0,\qquad \denpos(\x,t)=\x-\EE[\data|\data+\urv_t=\x]=\x-\dend(\x,t).
\]
\cref{eq:reverse_mat_direct_thm} is then the same as \cref{eq:chain_up} so the ``only jump up" version of the Binomial-Poisson process and the Poisson-Poisson process is the same.

\subsection{Jumping down on $\N^{\dd}$}
Take $\Ru_t^i=0$ for all $t\in [0,\TT]$ and $i\in [\dd]$. Then 
\[
\pp_t=\binnrv_t,\qquad \binnrv_t\sim\bin(\data,\Rd_t),
\]
so 
\[
\dend(\x,t)=\EE[\data|\binnrv_t=\x],\qquad \den(\x,t)=\x,
\]
and hence \cref{eq:chain_intro} reads 
\begin{equation}
\label{eq:chain_down}
\rate_{\x\y}(t)=\begin{cases}
0 \qquad &\textnormal{if}\qquad \y=\x-e_i,\quad i\in [\dd]\\
-\frac{\rd^i_{\TT-t}}{1-\Rd_{\TT-t}^i}(\dend^i(\x,\TT-t)-\x^i)\qquad &\textnormal{if}\qquad \y =\x+e_i,\quad i\in [\dd]\\
0 \qquad & \textnormal{if} \qquad \y\notin\{\x, \x\pm e_1,\ldots, \x\pm e_{\dd}\}
\end{cases}.
\end{equation}
\cref{eq:chain_down} is exactly the Binomial Flow  of \cite{shenfeld2026binomial}.

\section{EDM scaling}
\label{sec:edm}
In this section we derive centered affine EDM scaling for both the Binomial-Poisson and Poisson-Poisson perturbations. The regression target may itself depend on time: let $A_{\sigma}$ be the target and $Y_{\sigma}$ the observed noisy input, with joint distribution
\[
\pi_{\sigma}(a,y)=\Pr(A_{\sigma}=a,Y_{\sigma}=y).
\]
This includes a fixed data target $\data$ and time-dependent latent targets ($\binnrv_t, \possrv_t, \mathrm L_t$). For the purpose of computing scalar preconditioning coefficients, we assume that the coordinates $i\in \{1,\ldots, \dd\}$ of $A_{\sigma}$ and $Y_{\sigma}$ are i.i.d., with coordinatewise moments
\[
\begin{aligned}
\EE[A_{\sigma}^i]&=\mdata_A(\sigma),&
\var[A_{\sigma}^i]&=\vdata_A(\sigma),\\
\EE[Y_{\sigma}^i]&=\mdata_Y(\sigma),&
\var[Y_{\sigma}^i]&=\vdata_Y(\sigma),&
\cov(A_{\sigma}^i,Y_{\sigma}^i)&=\cdata(\sigma).
\end{aligned}
\]
We parametrize the denoiser as
\begin{equation}
\label{eq:edm_scale_app}
\mathrm m_{\theta}(\y,\sigma)
=\cskip(\sigma)\y+\cbias(\sigma)
+\cout(\sigma)\net_{\theta}(\cin(\sigma)\y+\ssin(\sigma),\sigma),
\end{equation}
where $\net_{\theta}$ is a neural network.

We first normalize the network input. We begin with the variance.
\begin{claim}[$\cin$]
\label{cl:cin}
If 
\begin{equation}
\label{eq:cin}
\cin(\sigma)=\frac{1}{\sqrt{\vdata_Y(\sigma)}},
\end{equation}
then
$\var[\cin(\sigma)Y_{\sigma}^i+\ssin(\sigma)]=1$.
\end{claim}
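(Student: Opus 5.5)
The claim follows from two elementary properties of variance: it is unchanged by adding a deterministic constant, and it scales quadratically under multiplication by a deterministic scalar. The plan is to apply these coordinatewise at a fixed noise level $\sigma$, treating $\cin(\sigma)$ and $\ssin(\sigma)$ as deterministic numbers.

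First, since $\ssin(\sigma)$ does not depend on the random variable $Y_{\sigma}^i$,
\[
\var[\cin(\sigma)Y_{\sigma}^i+\ssin(\sigma)]=\var[\cin(\sigma)Y_{\sigma}^i].
\]
Second, pulling out the scalar gives
\[
\var[\cin(\sigma)Y_{\sigma}^i]=\cin(\sigma)^2\,\var[Y_{\sigma}^i]=\cin(\sigma)^2\,\vdata_Y(\sigma),
\]
where the last equality is the definition of the coordinatewise moment $\vdata_Y(\sigma)$. Because the coordinates are assumed i.i.d., this quantity does not depend on $i$, which is why a single scalar $\cin(\sigma)$ suffices for every coordinate.

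Finally, substituting the choice \eqref{eq:cin}, $\cin(\sigma)^2=1/\vdata_Y(\sigma)$, gives variance $1$.

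The only point needing care is that the expression in \eqref{eq:cin} must be well defined, so we need $\vdata_Y(\sigma)>0$. I would note this explicitly as a standing condition at interior times. It holds there because the Poisson component $\possrv_t$ is independent of the remaining terms and has variance $\Ru_t^i>0$; by the law of total variance, $\vdata_Y(\sigma)\ge \Ru_t^i>0$. Beyond this, the argument is a one-line computation and I expect no real obstacle.
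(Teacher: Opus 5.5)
Your proof is correct and is the same computation the paper has in mind: variance is unchanged by adding the constant $\ssin(\sigma)$ and scales by $\cin(\sigma)^2=1/\vdata_Y(\sigma)$; the paper simply records its proof as ``Immediate.'' Your extra remark that $\vdata_Y(\sigma)>0$ at interior times, because of the independent Poisson component, is a sensible well-definedness check that the paper leaves implicit.
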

\begin{proof}
Immediate.
\end{proof}
Next $\ssin$ is chosen to center the input.
\begin{claim}[$\ssin$]
\label{cl:sin}
If 
\begin{equation}
\label{eq:sin}
\ssin(\sigma)=-\frac{\mdata_Y(\sigma)}{\sqrt{\vdata_Y(\sigma)}},
\end{equation}
and $\cin(\sigma)$ is as in \cref{eq:cin}, then
$\EE[\cin(\sigma)Y_{\sigma}^i+\ssin(\sigma)]=0$.
\end{claim}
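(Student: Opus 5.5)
This follows from linearity of expectation together with the normalization in \cref{cl:cin}. The plan is to compute the mean of the preconditioned input coordinatewise and check that the chosen shift cancels it exactly.

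\emph{Step 1: the mean of the input.} Fix a coordinate $i\in[\dd]$ and a noise level $\sigma$ with $\vdata_Y(\sigma)>0$, so that $\cin(\sigma)$ is finite. Since $\cin(\sigma)$ and $\ssin(\sigma)$ are deterministic functions of $\sigma$, linearity of expectation gives
\[
\EE[\cin(\sigma)Y_{\sigma}^i+\ssin(\sigma)]=\cin(\sigma)\,\EE[Y_{\sigma}^i]+\ssin(\sigma)=\cin(\sigma)\,\mdata_Y(\sigma)+\ssin(\sigma).
\]
The last equality uses the coordinatewise moment assumption $\EE[Y_\sigma^i]=\mdata_Y(\sigma)$.

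\emph{Step 2: substitute the definitions.} Insert \cref{eq:cin} and \cref{eq:sin} into this expression:
\[
\frac{\mdata_Y(\sigma)}{\sqrt{\vdata_Y(\sigma)}}-\frac{\mdata_Y(\sigma)}{\sqrt{\vdata_Y(\sigma)}}=0.
\]
The two terms cancel, which proves the claim. Together with \cref{cl:cin}, this shows the network input has mean zero and variance one in every coordinate, since adding the constant $\ssin(\sigma)$ does not change the variance.

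\emph{The only point to watch.} The argument needs the first moment of $Y_\sigma^i$ to be finite and its variance to be strictly positive. Both hold for the perturbations considered here at interior times. Each $Y_\sigma^i$ is a sum of data, Binomial and Poisson terms, so it has finite mean whenever the data does. Its variance is at least $\Ru^i_t>0$, because the Poisson component $\possrv_t$ is independent of the remaining terms and contributes its full variance $\Ru^i_t$. There is no real obstacle beyond noting these conditions.
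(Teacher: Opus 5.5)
Your proof is correct and matches the paper's, which simply calls the claim immediate: it is the same linearity-of-expectation computation $\cin(\sigma)\mdata_Y(\sigma)+\ssin(\sigma)=0$. Your side remark that $\vdata_Y(\sigma)>0$ is needed for $\cin(\sigma)$ to be defined is a small addition; the paper leaves this implicit. That condition does hold at interior times, because the independent Poisson component contributes variance $\Ru_t^i>0$.
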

\begin{proof}
Immediate.
\end{proof}
While the coefficients $\cin$ and $\ssin$ depend only on the statistics of the noisy observation $Y_{\sigma}$, the coefficients $\cbias, \cskip,\cout$ depend also on the target $A_{\sigma}$. We begin by matching the mean of the regression residual  to the mean of the target using $\cbias$. Note that at this point $\cbias$ is expressed in terms of $\cskip$, which will be determined later. 
\begin{claim}[$\cbias$]
\label{cl:cbias}
If  
\begin{equation}
\label{eq:cbias} 
\cbias(\sigma)=\mdata_A(\sigma)-\cskip(\sigma)\mdata_Y(\sigma),
\end{equation}
then $\EE[\cskip(\sigma)Y_{\sigma}^i+\cbias(\sigma)]=\mdata_A(\sigma)$.
\end{claim}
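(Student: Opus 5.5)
The claim follows from linearity of expectation alone; no distributional facts about $A_\sigma$ or $Y_\sigma$ are needed beyond the coordinatewise first moments. The coefficients $\cskip(\sigma)$ and $\cbias(\sigma)$ are deterministic functions of $\sigma$, so they pass through the expectation. We then substitute the prescribed value of $\cbias$ from \cref{eq:cbias} and simplify.

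Concretely, the first step is to expand
\[
\EE[\cskip(\sigma)Y_{\sigma}^i+\cbias(\sigma)]=\cskip(\sigma)\EE[Y_{\sigma}^i]+\cbias(\sigma)=\cskip(\sigma)\mdata_Y(\sigma)+\cbias(\sigma),
\]
using the notation $\mdata_Y(\sigma)=\EE[Y^i_\sigma]$ fixed at the start of this section. The second step is to insert \cref{eq:cbias}, $\cbias(\sigma)=\mdata_A(\sigma)-\cskip(\sigma)\mdata_Y(\sigma)$. The two terms $\pm\cskip(\sigma)\mdata_Y(\sigma)$ then cancel, leaving $\mdata_A(\sigma)$, which is exactly the assertion.

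There is no real obstacle. The only point to note is that the conclusion holds for \emph{every} choice of $\cskip(\sigma)$; this is why the claim may be stated before $\cskip$ is fixed. It is also worth remarking that the identity needs only $\EE|Y^i_\sigma|<\infty$. This holds for all the perturbations considered, since $Y_\sigma$ is a sum of the data and Binomial/Poisson noise, provided $\target$ has a finite first moment. The i.i.d. coordinate assumption is used only so that $\mdata_Y$ and $\mdata_A$ do not depend on $i$.
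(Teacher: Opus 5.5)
Your proposal is correct and matches the paper. The paper's proof is simply ``Immediate'': linearity of expectation, then substituting the prescribed $\cbias$ so that the $\pm\cskip(\sigma)\mdata_Y(\sigma)$ terms cancel, which is exactly your argument.
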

\begin{proof}
Immediate.
\end{proof}
We now determine the $\cskip$ and $\cout$. With \cref{eq:edm_scale_app}, the mean-square error objective reads
\begin{equation}
\label{eq:target_mse}
\begin{split}
&\int\sum_{a,y}\w^2(\sigma)\,|a-\mathrm m_{\theta}(y,\sigma)|^2\,\pi_{\sigma}(a,y)\diff\sigma\\
&=\int\sum_{a,y}\w^2(\sigma)\cout^2(\sigma)\,
|\net_{\theta}(\cin(\sigma)y+\ssin(\sigma),\sigma)
-\net_{\textnormal{target}}(a,y,\sigma)|^2\,\pi_{\sigma}(a,y)\diff\sigma,
\end{split}
\end{equation}
where the sum is over the support of $\pi_{\sigma}$, and
\begin{equation}
 \label{eq:Ftarget_def} 
 \net_{\textnormal{target}}(a,y,\sigma)
:=\frac{a-\cskip(\sigma)y-\cbias(\sigma)}{\cout(\sigma)}.
\end{equation}
By the choice of $\cbias$ according to \cref{eq:cbias}, each coordinate of this target has zero mean. We choose $\cout$ so that each coordinate also has unit variance.
\begin{claim}[Relation between $\cout$ and $\cskip$]
\label{cl:out_skip_rel}
If
\begin{equation}
\label{eq:out_skip_rel}
\cout^2(\sigma)=\vdata_A(\sigma)+\cskip^2(\sigma)\vdata_Y(\sigma)-2\cskip(\sigma)\cdata(\sigma),
\end{equation}
then 
\[
\var[\net_{\textnormal{target}}^i(A_{\sigma},Y_{\sigma},\sigma)]=1.
\]
\end{claim}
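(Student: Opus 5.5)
The plan is a direct variance computation for the target in \cref{eq:Ftarget_def}, taken coordinatewise. Fix $\sigma$ and a coordinate $i$. By \cref{eq:Ftarget_def},
\[
\net_{\textnormal{target}}^i(A_{\sigma},Y_{\sigma},\sigma)=\frac{A_{\sigma}^i-\cskip(\sigma)Y_{\sigma}^i-\cbias(\sigma)}{\cout(\sigma)}.
\]
The additive constant $\cbias(\sigma)/\cout(\sigma)$ does not affect the variance, so it can be dropped. In particular, the specific choice \cref{eq:cbias} is not needed for this claim; it only matters for the zero-mean property.

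Next, use bilinearity of the covariance:
\[
\var[A_{\sigma}^i-\cskip Y_{\sigma}^i]=\var[A^i_{\sigma}]+\cskip^2\var[Y^i_{\sigma}]-2\cskip\cov(A^i_{\sigma},Y^i_{\sigma}).
\]
In the notation of this section the right-hand side equals $\vdata_A(\sigma)+\cskip^2(\sigma)\vdata_Y(\sigma)-2\cskip(\sigma)\cdata(\sigma)$. Dividing by $\cout^2(\sigma)$ and inserting \cref{eq:out_skip_rel} gives exactly $1$.

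There is no real obstacle. The only point needing a remark is that $\cout(\sigma)$ must be nonzero. This holds whenever the right-hand side of \cref{eq:out_skip_rel} is positive, that is, whenever $A^i_{\sigma}-\cskip Y^i_{\sigma}$ is not almost surely constant, and this is implicitly assumed once $\cout$ is defined by \cref{eq:out_skip_rel}. The i.i.d.\ coordinate assumption only ensures that these moments do not depend on $i$, so the same scalar coefficients work for every coordinate.
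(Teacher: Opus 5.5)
Your proof is correct and follows the same route as the paper: drop the deterministic bias, expand $\operatorname{Var}[A^i_\sigma - c_{\textnormal{skip}}(\sigma) Y^i_\sigma]$ by bilinearity of the covariance, and divide by $c_{\textnormal{out}}^2(\sigma)$. Your remark that $c_{\textnormal{out}}(\sigma)\neq 0$ is needed is a small point the paper leaves implicit.
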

\begin{proof}
Subtracting the deterministic bias does not change the variance, so
\[
\var[\net_{\textnormal{target}}^i(A_{\sigma},Y_{\sigma},\sigma)]
=\frac{\vdata_A(\sigma)+\cskip^2(\sigma)\vdata_Y(\sigma)-2\cskip(\sigma)\cdata(\sigma)}
{\cout^2(\sigma)}.
\]
\end{proof}
To summarize, $\cin, \ssin$ were chosen so that the input to the network has mean zero and unit variance. As for $\cbias, \cout$ , for a given choice of $\cskip$, we choose $\cbias, \cout$  so that the target has mean zero and unit variance. It remains to choose $\cskip$, which we do by minimizing the residual variance.
\begin{claim}[$\cskip$]
\label{cl:opt_cskip}
The minimizer over $a\in\mathbb R$ of
\[
R_{\sigma}(a):=\var[A_{\sigma}^i-aY_{\sigma}^i]
=\vdata_A(\sigma)+a^2\vdata_Y(\sigma)-2a\cdata(\sigma)
\]
is
\begin{equation}
\label{eq:cskip}
\cskip(\sigma):=\frac{\cdata(\sigma)}{\vdata_Y(\sigma)}.
\end{equation}
\end{claim}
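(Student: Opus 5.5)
The claim is a one-variable quadratic minimization, and I will prove it by completing the square (equivalently, by the first- and second-order conditions). First I justify the displayed expansion of $R_{\sigma}(a)$. By bilinearity of covariance,
\[
\var[A_{\sigma}^i-aY_{\sigma}^i]=\var[A_{\sigma}^i]+a^2\var[Y_{\sigma}^i]-2a\cov(A_{\sigma}^i,Y_{\sigma}^i)=\vdata_A(\sigma)+a^2\vdata_Y(\sigma)-2a\cdata(\sigma).
\]
This uses only the coordinatewise moments introduced at the start of \cref{sec:edm}, which are finite under the standing assumptions.

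Next I assume $\vdata_Y(\sigma)>0$. This is already implicit in \cref{cl:cin}, since $\cin$ divides by $\sqrt{\vdata_Y(\sigma)}$; it also holds here because $Y_{\sigma}$ contains a nondegenerate Poisson component at interior times. Under this assumption $R_{\sigma}$ is a strictly convex quadratic in $a$. Completing the square gives
\[
R_{\sigma}(a)=\vdata_Y(\sigma)\Bigl(a-\frac{\cdata(\sigma)}{\vdata_Y(\sigma)}\Bigr)^2+\vdata_A(\sigma)-\frac{\cdata(\sigma)^2}{\vdata_Y(\sigma)}.
\]
The first term is nonnegative and vanishes only at $a=\cdata(\sigma)/\vdata_Y(\sigma)$, so this point is the unique minimizer, which is \cref{eq:cskip}. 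Equivalently, $R_{\sigma}'(a)=2a\vdata_Y(\sigma)-2\cdata(\sigma)=0$ gives the same point, and $R_{\sigma}''=2\vdata_Y(\sigma)>0$ confirms it is a minimum.

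There is no real obstacle in this argument. The only point needing care is the nondegeneracy $\vdata_Y(\sigma)>0$, which I would state explicitly as the hypothesis under which the minimizer exists and is unique. As a by-product, the minimal value $\vdata_A-\cdata^2/\vdata_Y$ is also the value of $\cout^2$ obtained by substituting \cref{eq:cskip} into \cref{eq:out_skip_rel}.
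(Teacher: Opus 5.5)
Your proof is correct and matches the paper's: the paper's proof just sets the derivative $2a\vdata_Y(\sigma)-2\cdata(\sigma)$ to zero. Your completed square goes slightly further. It makes explicit the convexity hypothesis $\vdata_Y(\sigma)>0$ that the paper leaves implicit, and it also yields the minimal value that the paper only obtains in the following claim.
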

\begin{proof}
The quadratic $R_{\sigma}(a)$ has derivative $2a\vdata_Y(\sigma)-2\cdata(\sigma)$, which vanishes at \cref{eq:cskip}.
\end{proof}
\begin{claim}[Resulting $\cout$]
\label{cl:opt_cout}
If $\cskip(\sigma)$ is as in \cref{eq:cskip} then
\begin{equation}
\label{eq:cout}
C(\sigma):=\min_a R_{\sigma}(a)
=\vdata_A(\sigma)-\frac{\cdata^2(\sigma)}{\vdata_Y(\sigma)}\qquad\textnormal{and}
\qquad \cout(\sigma)=\sqrt{C(\sigma)}.
\end{equation}
\end{claim}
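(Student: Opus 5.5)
The claim is a direct consequence of \cref{cl:opt_cskip} and \cref{cl:out_skip_rel}. We proceed in two steps.

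\textbf{Step 1: the value of the minimum.} Substitute the minimizer $a^\star=\cskip(\sigma)=\cdata(\sigma)/\vdata_Y(\sigma)$ from \cref{eq:cskip} into the quadratic $R_{\sigma}(a)=\vdata_A(\sigma)+a^2\vdata_Y(\sigma)-2a\cdata(\sigma)$. This gives
\[
\vdata_A(\sigma)+\frac{\cdata^2(\sigma)}{\vdata_Y(\sigma)}-2\frac{\cdata^2(\sigma)}{\vdata_Y(\sigma)}=\vdata_A(\sigma)-\frac{\cdata^2(\sigma)}{\vdata_Y(\sigma)},
\]
which is the stated formula for $C(\sigma)$. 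Since $R_{\sigma}$ is a convex quadratic (its leading coefficient is $\vdata_Y(\sigma)>0$), the critical point is the global minimizer, so this value is indeed $\min_a R_{\sigma}(a)$.

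\textbf{Step 2: the value of $\cout$.} The right-hand side of \cref{eq:out_skip_rel}, $\vdata_A+\cskip^2\vdata_Y-2\cskip\cdata$, is exactly $R_{\sigma}(\cskip(\sigma))$. With $\cskip$ chosen as in \cref{eq:cskip}, this equals $C(\sigma)$ by Step 1. Hence \cref{eq:out_skip_rel} requires $\cout^2(\sigma)=C(\sigma)$, and taking the nonnegative root gives $\cout(\sigma)=\sqrt{C(\sigma)}$.

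\textbf{Well-definedness.} For the square root to make sense we need $C(\sigma)\ge 0$. This holds because $C(\sigma)=\var[A_{\sigma}^i-\cskip Y_{\sigma}^i]$ is a variance; equivalently, it follows from Cauchy--Schwarz, $\cdata^2\le \vdata_A\vdata_Y$.

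\textbf{Caveat.} There is no real obstacle here. The only subtlety is that $\cout>0$, which is needed because $\cout$ appears in the denominator of \cref{eq:Ftarget_def}, requires the strict inequality $C(\sigma)>0$. That holds unless $A_{\sigma}^i$ is an affine function of $Y_{\sigma}^i$, and this degenerate case should be noted as excluded.
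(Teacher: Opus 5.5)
Your proposal is correct and is essentially the paper's proof. The paper's proof consists only of substituting \cref{eq:cskip} into \cref{eq:out_skip_rel}, and your Steps 1--2 carry out exactly this computation after noting that the right-hand side of \cref{eq:out_skip_rel} is $R_{\sigma}(\cskip(\sigma))$. Your extra remarks are correct and make explicit what the paper leaves implicit: $C(\sigma)\ge 0$ because it is a residual variance, and the degenerate case $C(\sigma)=0$ must be excluded.
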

\begin{proof}
Substitute \cref{eq:cskip} into \cref{eq:out_skip_rel}.
\end{proof}
Finally, we choose the effective squared-error weight $\w^2(\sigma)$ so that the expected loss at initialization $\net_{\theta}=0$ is constant in $\sigma$.
\begin{claim}[$\w$]
\label{cl:w}
If
\begin{equation}
\label{eq:w}
\w^2(\sigma)=\frac{1}{C(\sigma)}=\frac{1}{\cout^2(\sigma)},
\end{equation}
then, for all $\sigma$,
\[
\sum_{a,y}\w^2(\sigma)\,|a-\cskip(\sigma)y-\cbias(\sigma)|^2\,\pi_{\sigma}(a,y)=\dd.
\]
\end{claim}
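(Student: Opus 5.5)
The claim should follow by linearity over the $\dd$ coordinates, together with the definitions of $\cbias$, $\cskip$ and $C(\sigma)$ from \cref{cl:cbias}, \cref{cl:opt_cskip} and \cref{cl:opt_cout}. First I would expand the squared Euclidean norm coordinatewise,
\[
\sum_{a,y}\w^2(\sigma)\,|a-\cskip(\sigma)y-\cbias(\sigma)|^2\,\pi_{\sigma}(a,y)=\w^2(\sigma)\sum_{i=1}^{\dd}\EE\big[(A^i_{\sigma}-\cskip(\sigma)Y^i_{\sigma}-\cbias(\sigma))^2\big].
\]
Here $\cskip$ and $\cbias$ are scalars applied to every coordinate. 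This reduces everything to a one-dimensional second-moment computation for each $i$.

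Next, for each fixed $i$, I would use \cref{cl:cbias}. It gives $\EE[\cskip(\sigma)Y^i_{\sigma}+\cbias(\sigma)]=\mdata_A(\sigma)=\EE[A^i_{\sigma}]$, so the residual $A^i_{\sigma}-\cskip(\sigma)Y^i_{\sigma}-\cbias(\sigma)$ has mean zero. Its second moment is therefore its variance. Since $\cbias$ is deterministic, that variance equals $\var[A^i_{\sigma}-\cskip(\sigma)Y^i_{\sigma}]=R_{\sigma}(\cskip(\sigma))$, with $R_\sigma$ as in \cref{cl:opt_cskip}.

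Then I would substitute $\cskip(\sigma)=\cdata(\sigma)/\vdata_Y(\sigma)$ from \cref{eq:cskip}. By \cref{cl:opt_cout}, this gives $R_{\sigma}(\cskip(\sigma))=C(\sigma)$. The assumed identical coordinatewise moments make this value the same for every $i$, so the sum equals $\dd\, C(\sigma)$. Multiplying by $\w^2(\sigma)=1/C(\sigma)$ from \cref{eq:w} gives $\dd$.

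There is no real obstacle. The only care needed is to note that $C(\sigma)>0$, so that $\w$ is well defined; this holds whenever $A^i_\sigma$ is not almost surely an affine function of $Y^i_\sigma$, and it is implicit in the definition \cref{eq:w}. The second point is that only identical coordinatewise first and second moments are used, not independence across coordinates.
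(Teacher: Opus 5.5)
Your proposal is correct and follows the paper's argument: by the choice of $\cbias$ each residual coordinate has mean zero, and by the choice of $\cskip$ its variance is $C(\sigma)$, so summing over the $\dd$ coordinates and multiplying by $\w^2(\sigma)=1/C(\sigma)$ gives $\dd$. Your side remarks are accurate refinements of the paper's terse proof: $C(\sigma)>0$ is needed, and only identical coordinatewise first and second moments are used, not independence across coordinates.
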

\begin{proof}
Each residual coordinate has zero mean and variance $C(\sigma)$, so its second moment is $C(\sigma)$. Summing over coordinates and multiplying by $1/C(\sigma)$ gives $\dd$.
\end{proof}
We summarize the EDM scalings in \cref{tab:edm_generic}.
\begin{table}[htbp]
    \centering
    \caption{EDM scalings for target $A_{\sigma}$ and input $Y_{\sigma}$.}
    \label{tab:edm_generic}
    \setlength{\tabcolsep}{12pt}
    \renewcommand{\arraystretch}{1.8}
    \begin{tabular}{l|c}
        \toprule
        $\cin(\sigma)$ & $\frac{1}{\sqrt{\vdata_Y(\sigma)}}$ \\
        $\ssin(\sigma)$ & $-\frac{\mdata_Y(\sigma)}{\sqrt{\vdata_Y(\sigma)}}$ \\
        \midrule
        $\cskip(\sigma)$ & $\frac{\cdata(\sigma)}{\vdata_Y(\sigma)}$ \\
        $\cbias(\sigma)$ & $\mdata_A(\sigma)-\cskip(\sigma)\mdata_Y(\sigma)$ \\
        $C(\sigma)$ & $\vdata_A(\sigma)-\frac{\cdata^2(\sigma)}{\vdata_Y(\sigma)}$ \\
        $\cout(\sigma)$ & $\sqrt{C(\sigma)}$ \\
        $\w^2(\sigma)$ & $\frac{1}{C(\sigma)}$ \\
        \bottomrule
    \end{tabular}
\end{table}

\subsection{EDM scaling for \jud}
In this section we instantiate the EDM scaling for \jud, both for the Binomial-Poisson perturbation and for the Poisson-Poisson perturbation. This requires us to compute the statistics for the observations $\pp_t$ and $\ppp_t$, as well as the targets $\data, \binnrv_t, \possrv_t, \mathrm L_t$. The following claim will be used to this end. 
\begin{claim}[Means and variances for the Binomial and Poisson perturbations]
\label{cl:mean_var_BP}
Let $Z_0$ be a random vector with i.i.d. coordinates with mean $\mdata$ and variance $\vdata$. 
\begin{enumerate}
\item
  Let $Z_{\sigma}=\binnrv_{\sigma}$ with $\binnrv_{\sigma}\sim\bin(Z_0,\Rd_{\sigma})$. Then,
\begin{equation}
\label{eq:EDM_bp_latent_app}
\begin{split}
\EE[Z_{\sigma}]=\Rd_{\sigma}\mdata,\quad \var[Z_{\sigma}] =\mdata \Rd_{\sigma}(1-\Rd_{\sigma})+\Rd_{\sigma}^2 \vdata,\quad  \cov(Z_0,Z_{\sigma})=\Rd_{\sigma}\vdata.
\end{split}
\end{equation}
\item Let $Z_{\sigma}=Z_0\pm \possrv_{\sigma}$ with  $\possrv_{\sigma}\sim \pos(\Ru_{\sigma})$ where  $\possrv_{\sigma}$ is independent of $Z_0$. 
Then,
\begin{equation}
\label{eq:EDM_bp_poisson_app}
\begin{split}
\EE[Z_{\sigma}]=\mdata\pm \Ru_{\sigma},\quad \var[Z_{\sigma}] =\vdata+\Ru_{\sigma},\quad  \cov(Z_0,Z_{\sigma})=\vdata.
\end{split}
\end{equation}
\item Let $Z_{\sigma}=\binnrv_{\sigma}+\possrv_{\sigma}$ with $\binnrv_{\sigma}\sim\bin(Z_0,\Rd_{\sigma})$, $\possrv_{\sigma}\sim \pos(\Ru_{\sigma})$ where  $\possrv_{\sigma}$ is independent of $(Z_0,\binnrv_{\sigma})$. Then,
\begin{equation}
\label{eq:EDM_bp_data_app}
\EE[Z_{\sigma}]=\mdata \,\Rd_{\sigma}+\Ru_{\sigma},\quad \var[Z_{\sigma}] =\mdata \Rd_{\sigma}(1-\Rd_{\sigma})+\Rd_{\sigma}^2\vdata+\Ru_{\sigma},\quad 
\cov(Z_0,Z_{\sigma})=\Rd_{\sigma}\vdata.
\end{equation}
\end{enumerate}
\end{claim}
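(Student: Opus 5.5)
We prove each of the three items by conditioning on $Z_0$ and applying the laws of total expectation and total variance. Since all coordinates are handled identically and independently, we treat a single coordinate and suppress the index $i$, writing $\Rd_\sigma$ and $\Ru_\sigma$ for scalars. The only facts needed are the first two moments of the Binomial and Poisson distributions:
\[
\EE[\bin(n,r)]=nr,\quad \var[\bin(n,r)]=nr(1-r),\quad \EE[\pos(\lambda)]=\var[\pos(\lambda)]=\lambda.
\]

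\textbf{Item 1.} We have $\EE[\binnrv_\sigma\mid Z_0]=\Rd_\sigma Z_0$, so $\EE[Z_\sigma]=\Rd_\sigma\mdata$. For the variance, the law of total variance gives
\[
\var[Z_\sigma]=\EE[\var(\binnrv_\sigma\mid Z_0)]+\var(\EE[\binnrv_\sigma\mid Z_0])=\EE[Z_0\Rd_\sigma(1-\Rd_\sigma)]+\Rd_\sigma^2\vdata=\mdata\Rd_\sigma(1-\Rd_\sigma)+\Rd_\sigma^2\vdata.
\]
For the covariance, the tower property gives $\EE[Z_0Z_\sigma]=\EE[Z_0\,\EE[\binnrv_\sigma\mid Z_0]]=\Rd_\sigma\EE[Z_0^2]$. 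Subtracting $\EE[Z_0]\EE[Z_\sigma]=\Rd_\sigma\mdata^2$ yields $\cov(Z_0,Z_\sigma)=\Rd_\sigma\vdata$.

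\textbf{Item 2.} This is immediate from independence. The mean is additive, giving $\mdata\pm\Ru_\sigma$. The variance is additive regardless of the sign, giving $\vdata+\Ru_\sigma$. For the covariance, $\cov(Z_0,Z_0\pm\possrv_\sigma)=\vdata\pm\cov(Z_0,\possrv_\sigma)=\vdata$, since $\possrv_\sigma$ is independent of $Z_0$.

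\textbf{Item 3.} We combine the first two items. Because $\possrv_\sigma$ is independent of $(Z_0,\binnrv_\sigma)$, the mean and variance of $Z_\sigma=\binnrv_\sigma+\possrv_\sigma$ are the sums of those from item 1 and of the Poisson term. Likewise, $\cov(Z_0,\possrv_\sigma)=0$, so $\cov(Z_0,Z_\sigma)=\cov(Z_0,\binnrv_\sigma)=\Rd_\sigma\vdata$.

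The main obstacle is mild. One must use independence of $\possrv_\sigma$ from the \emph{pair} $(Z_0,\binnrv_\sigma)$, which is what makes the cross-covariance vanish. One must also condition correctly in item 1, using $\EE[Z_0\binnrv_\sigma]=\Rd_\sigma\EE[Z_0^2]$. Everything else is routine.
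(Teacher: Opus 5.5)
Your proposal is correct and takes essentially the paper's approach: for item 1, the tower property and the law of total variance conditioned on $Z_0$. The only cosmetic difference is that in items 2 and 3 the paper again conditions on $Z_0$ and applies total variance, whereas you use additivity of means, variances and covariances under independence directly, reusing item 1 for item 3; the two are equivalent.
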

\begin{proof}
$~$
\begin{enumerate}
\item 
Using the mean and variance  of the binomial distribution,
\begin{equation*}
\EE[Z_{\sigma}] = \EE[\EE[Z_{\sigma}|Z_0]]=\EE[Z_0\,\Rd_{\sigma}]= \Rd_{\sigma}\mdata,
\end{equation*}
\begin{align*}
\var[Z_{\sigma}]&=\EE[\var[Z_{\sigma}|Z_0]]+\var(\EE[Z_{\sigma}|Z_0])=\EE[Z_0\,\Rd_{\sigma}(1-\Rd_{\sigma})]+\var(Z_0\,\Rd_{\sigma})\\
&=\mdata \Rd_{\sigma}(1-\Rd_{\sigma})+\Rd_{\sigma}^2 \vdata,
\end{align*}
 and
\begin{align*}
\cov(Z_0,Z_{\sigma})&=\EE[Z_0Z_{\sigma}]-\EE[Z_0]\EE[Z_{\sigma}]=\EE[Z_0\EE[Z_{\sigma}|Z_0]]-\mdata\cdot \Rd_{\sigma}\mdata\\
&=\Rd_{\sigma}\EE[Z_0^2]-\mdata^2\Rd_{\sigma}=\Rd_{\sigma}(\vdata+\mdata^2)-\mdata^2\Rd_{\sigma}=\Rd_{\sigma}\vdata.
\end{align*}
\item Using the mean and variance  of the Poisson distribution, and  the independence of $\possrv_{\sigma}$ from $Z_0$,
\begin{align*}
\EE[Z_{\sigma}] = \EE[\EE[Z_{\sigma}|Z_0]]=\EE[\EE[Z_0\pm \possrv_{\sigma}|Z_0]]=\mdata\pm \Ru_{\sigma},
\end{align*}
\begin{align*}
\var[Z_{\sigma}]&=\EE[\var[Z_{\sigma}|Z_0]]+\var(\EE[Z_{\sigma}|Z_0])=\EE[\var[Z_0\pm \possrv_{\sigma}|Z_0]]+\var(\EE[Z_0\pm \possrv_{\sigma}|Z_0])\\
&=\EE[\var[\pm \possrv_{\sigma}]]+\var(Z_0\pm \Ru_{\sigma})=\Ru_{\sigma}+\vdata,
\end{align*}
and 
\begin{align*}
\cov(Z_0,Z_{\sigma})&=\EE[Z_0Z_{\sigma}]-\EE[Z_0]\EE[Z_{\sigma}]=\EE[Z_0(Z_0\pm \possrv_{\sigma})]-\EE[Z_0]\EE[Z_0\pm \possrv_{\sigma}]\\
&=\EE[Z_0^2]\pm\mdata\Ru_{\sigma}-\mdata(\mdata\pm \Ru_{\sigma})=(\vdata+\mdata^2)-\mdata^2=\vdata.
\end{align*}
\item Using the means and variances  of the Binomial and Poisson distributions, and the independence of $\possrv_{\sigma}$ from $(Z_0,\binnrv_{\sigma})$,
\begin{align*}
\EE[Z_{\sigma}] &= \EE[\EE[Z_{\sigma}|Z_0]]=\EE[\EE[\binnrv_{\sigma}+\possrv_{\sigma}|Z_0]]=\EE[\EE[\binnrv_{\sigma}|Z_0]+\possrv_{\sigma}]\\
&=\EE[Z_0\,\Rd_{\sigma}]+\Ru_{\sigma}=\mdata \,\Rd_{\sigma}+\Ru_{\sigma},
\end{align*}
\begin{align*}
\var[Z_{\sigma}]&=\EE[\var[Z_{\sigma}|Z_0]]+\var(\EE[Z_{\sigma}|Z_0])=\EE[\var[\binnrv_{\sigma}+\possrv_{\sigma}|Z_0]]+\var(\EE[\binnrv_{\sigma}+\possrv_{\sigma}|Z_0])\\
&=\EE[\var[\binnrv_{\sigma}|Z_0]]+\var[\possrv_{\sigma}]+\var(\EE[\binnrv_{\sigma}|Z_0]+\EE[\possrv_{\sigma}])\\
&=\EE[Z_0\,  \Rd_{\sigma}(1-\Rd_{\sigma})]+\Ru_{\sigma}+\var(\EE[\binnrv_{\sigma}|Z_0])=\mdata \Rd_{\sigma}(1-\Rd_{\sigma})+\Ru_{\sigma}+\var(Z_0\, \Rd_{\sigma})\\
&=\mdata \Rd_{\sigma}(1-\Rd_{\sigma})+\Ru_{\sigma}+\Rd_{\sigma}^2\vdata,
\end{align*}
 and
\begin{align*}
\cov(Z_0,Z_{\sigma})&=\EE[Z_0Z_{\sigma}]-\EE[Z_0]\EE[Z_{\sigma}]=\EE[Z_0\EE[\binnrv_{\sigma}+\possrv_{\sigma}|Z_0]]-\mdata \EE[\binnrv_{\sigma}+\possrv_{\sigma}]\\
&=\EE[Z_0(\EE[\binnrv_{\sigma}|Z_0]+\EE[\possrv_{\sigma}])]-\mdata (\mdata\Rd_{\sigma}+\Ru_{\sigma})\\
&=\EE[Z_0(Z_0\Rd_{\sigma}+\Ru_{\sigma})]-\mdata (\mdata\Rd_{\sigma}+\Ru_{\sigma})=\Rd_{\sigma}\EE[Z_0^2]+\mdata \Ru_{\sigma}-\mdata^2\Rd_{\sigma}-\Ru_{\sigma}\mdata\\
&=\Rd_{\sigma}(\vdata+\mdata^2)-\mdata^2\Rd_{\sigma}=\Rd_{\sigma}\vdata.
\end{align*}
\end{enumerate}
\end{proof}

\paragraph{Binomial-Poisson moments.}
Let the clean data coordinates have mean $\mdata$ and variance $\vdata$, and write
\[
\pp_t=\binnrv_t+\possrv_t,\qquad
\binnrv_t\mid\data\sim\bin(\data,\Rd_t),\qquad
\possrv_t\sim\pos(\Ru_t),\qquad
\mathrm L_t:=\data-\binnrv_t,
\]
where $\possrv_t$ is independent of $(\data,\binnrv_t)$. Here and below the schedules are scalar, or the formulas are applied coordinatewise. Define
\[
V_B(t):=\mdata\Rd_t(1-\Rd_t)+\Rd_t^2\vdata,\qquad
V_L(t):=\mdata\Rd_t(1-\Rd_t)+(1-\Rd_t)^2\vdata.
\]
We apply the generic EDM formulas with $\sigma=t$ and $Y_{\sigma}=\pp_t$. By \cref{cl:mean_var_BP}(3), the observation moments are
\begin{equation}
\label{eq:edm_bp_input_moments}
\mdata_Y(t)=\Rd_t\mdata+\Ru_t,\qquad
\vdata_Y(t)=V_B(t)+\Ru_t.
\end{equation}
The target-specific moments are listed in \cref{tab:edm_denoisers},
and follow from \cref{cl:mean_var_BP}, together with the identities
\[
\cov(\mathrm L_t^i,\pp_t^i)
=\cov(\data^i-\binnrv_t^i,\binnrv_t^i+\possrv_t^i)
=\Rd_t\vdata-V_B(t)
=\Rd_t(1-\Rd_t)(\vdata-\mdata),
\]
and
\[
\EE[\possrv_t\mid\pp_t]=\pp_t-\EE[\binnrv_t\mid\pp_t],\qquad
\EE[\mathrm L_t\mid\pp_t]=\EE[\data\mid\pp_t]-\EE[\binnrv_t\mid\pp_t].
\]
\begin{table}[htbp]
    \centering
    \caption{Moments of the regression targets for the Binomial-Poisson perturbation $\pp_t=\binnrv_t+\possrv_t$.}
    \label{tab:edm_denoisers}
    \renewcommand{\arraystretch}{1.5}
    \begin{tabular}{lccc}
        \toprule
        Target $A_t$ & $\mdata_A(t)$ & $\vdata_A(t)$ & $\cdata(t)=\cov(A_t^i,Y_t^i)$ \\
        \midrule
        $\binnrv_t$  & $\Rd_t\mdata$ & $V_B(t)$ & $V_B(t)$ \\
        $\data$  & $\mdata$ & $\vdata$ & $\Rd_t\vdata$ \\
        $\possrv_t$  & $\Ru_t$ & $\Ru_t$ & $\Ru_t$ \\
        $\mathrm L_t$  & $(1-\Rd_t)\mdata$ & $V_L(t)$ & $\Rd_t(1-\Rd_t)(\vdata-\mdata)$ \\
        \bottomrule
    \end{tabular}
\end{table}
\paragraph{Poisson-Poisson moments.}
Let the clean data coordinates have mean $\mdata$ and variance $\vdata$, and write
\[
\ppp_t=\data-\drv_t+\urv_t,\qquad
\drv_t\sim\pos(\Rd_t),\qquad
\urv_t\sim\pos(\Ru_t),
\]
with $\data,\drv_t,\urv_t$ independent. By \cref{cl:mean_var_BP} the observation moments are
\begin{equation}
\label{eq:edm_pp_input_moments}
\mdata_Y(t)=\mdata-\Rd_t+\Ru_t,\qquad
\vdata_Y(t)=\vdata+\Rd_t+\Ru_t.
\end{equation}
The moments of the targets follow from \cref{cl:mean_var_BP} and can be found in \cref{tab:edm_pp_moments}.

\begin{table}[htbp]
    \centering
    \caption{Moments of the regression targets for the Poisson-Poisson perturbation $\ppp_t=\data-\drv_t+\urv_t$.}
    \label{tab:edm_pp_moments}
    \renewcommand{\arraystretch}{1.5}
    \begin{tabular}{lccc}
        \toprule
        Target $A_t$ & $\mdata_A(t)$ & $\vdata_A(t)$ & $\cdata(t)=\cov(A_t^i,Y_t^i)$ \\
        \midrule
        $\urv_t$  & $\Ru_t$ & $\Ru_t$ & $\Ru_t$ \\
        $\drv_t$  & $\Rd_t$ & $\Rd_t$ & $-\Rd_t$ \\
        \bottomrule
    \end{tabular}
\end{table}

\section{Synthetic implementation details} \label{app:synthetic}

\subsection{Data}
\label{app:subsec_data}
We draw $N=20{,}000$ i.i.d.\ samples %
from each of the following one-dimensional target distributions for the synthetic datasets with support $S$: 
\begin{enumerate}
    \item Poisson ($\lambda=5$, $S=40$): $p(x)\propto \lambda^x e^{-\lambda}/x!$
    \item Poisson mixture ($S=140$): $p(x)\propto \sum_i w_i\lambda_i^x e^{-\lambda_i}/x!$, $w=(0.1,0.9)$, $\lambda=(1,100)$
    \item Zero-inflated Poisson ($S=50$): $p(0)=\pi_0+(1-\pi_0)e^{-\lambda}$, $p(x)=(1-\pi_0)\lambda^xe^{-\lambda}/x!$ for $x>0$; $\pi_0=0.7,\lambda=5$
    \item Negative binomial mixture ($S=150$): $p(x)\propto\sum_i w_i\binom{x+r_i-1}{x}p_i^{r_i}(1-p_i)^x$, $w=(0.8,0.2)$, $r=(1,10)$, $p=(0.9,0.1)$
    \item Beta–negative-binomial ($S=100$): $p\sim\mathrm{Beta}(1.5,1.5)$, $X|p\sim\mathrm{NegBinom}(r{=}5,p)$, marginalized by quadrature
    \item Zipf ($S=50$): $p(x)\propto x^{-\alpha}$, $\alpha=1.7$ %
    \item Yule–Simon ($S=50$): $p(x)\propto \rho B(x,\rho+1)$, $\rho=2$ %
    \item Correlated Beta-Binomial (dim 1, $S=65$): $p\sim\mathrm{Beta}(2,5)$, $X|p\sim\mathrm{Binomial}(64,p)$, giving the Beta-Binomial marginal $\binom{n}{x}B(x+\alpha,n-x+\beta)/B(\alpha,\beta)$
\end{enumerate}

\subsection{Training} 
\label{app:train_subsec_syn}
\paragraph{Model.} We evaluate all models using a common backbone model based on a residual MLP. A scalar input $x\in\mathbb{R}$ is first rescaled using EDM preconditioning by $z = c_{\mathrm{in}}(t)x + s_{\mathrm{in}}(t)$ and linearly projected to a hidden width $d_{\mathrm{model}}=128$. The diffusion time $t\in[T_0,T]$ is embedded via a sinusoidal timestep embedding with base frequency $10^4$, and time-scale factor $100$ to an embedding dimension $d_{\mathrm{temb}}=128$. This is passed through a two-layer MLP (ReLU, hidden width 256) to produce a $4d_{\mathrm{temb}}$-dimensional conditioning vector. The main layers consist of $3$ residual blocks %
with $256$ hidden units followed by FiLM modulation. %
A method-specific head is applied to the final hidden state $h\in\mathbb{R}^{128}$: a regression head for denoisers or a soft-plus for logit predictions. For the models that require two latent denoisers or probabilities to predict birth and death rates, we use a model with two heads that share the same hidden structure.

\paragraph{Training.} We use the Adam optimizer with the learning rate $2\times10^{-4}$, no weight decay, batch size $256$, gradient-norm clipped to $1.0$, EMA decay $0.999$ and train for $300$ epochs. We compute the mean-squared error loss for the Binomial-Poisson model between the model's prediction and the appropriate conditional-mean target, averaged over a batch of times $t\sim\mathrm{Unif}(T_0,T)$ and one forward-noised sample per example. For the count-fm and the denoising score-entropy loss, the loss is based on the Kullback-Leibler divergence. 

\subsection{Sampling} 
\label{app:subsetc_sample_syn}
To evaluate the model, we draw $1000$ samples per method using $2000$-step reverse-time Euler discretization between $T = 1$ and $T_0 = 10^{-4}$. We observe that similar results were obtained with tau-leaping. Each process is initialized from its respective reference distribution that is described in Table~\ref{tab:reference_distributions}. The accuracy is reported as the Wasserstein-1 distance to a fresh batch of $1000$ held-out samples from the true distribution. We averaged the result across $5$ trials: we report the mean and standard deviation of the results.

\begin{table}[t]
\centering
\caption{Reference distributions used by different methods.}
\label{tab:reference_distributions}
\begin{tabular}{l l}
\hline
\textbf{Method} & \textbf{Reference distribution} \\
\hline
\Def{Binomial-Poisson} & $\pos(20)$ \\
\Def{Binomial}~\citep{shenfeld2026binomial}
    & $\delta_0$ \\
\Def{Poisson} & $\pos(196)$ \\
\textbf{Count Bridge}~\citep{fishman2026count}
    & $\ske(20,20)$ \\
\textbf{Count Flow}~\citep{wei2026flow}
    & $\operatorname{Unif}\{0,\ldots,149\}$ \\
\textbf{SEDD}~\citep{lou2024discrete}
    & $\operatorname{Unif}\{0,\ldots,149\}$ \\
\hline
\end{tabular}
\end{table}

\subsection{Details on \cref{fig:toy_paths}} \label{app:details_Fig1}
We use the Negative Binomial mixture (\cref{app:subsec_data}) as the same target distribution for all methods. The network we use across models is the same as in \cref{app:train_subsec_syn}. The Binomial-Poisson model is trained using \cref{alg:bin_poss_train} and the Poisson model trains the data denoiser (\cref{app:only_up_down}). The rest of the methods are trained per the published implementations. Sampling is done as in \cref{app:subsetc_sample_syn}. The reference distributions are described in \cref{tab:reference_distributions}.

\section{CIFAR-10 implementation details}
\label{app:cifar10_details}

\subsection{Training}
\label{app:cifar10_training}

\paragraph{Architecture and optimization.}
\jud{} and our EDM reproduction use DDPM++ with base width $128$, channel multipliers $(2,2,2)$, and dropout $0.13$. Both use Adam with batch size $512$ and learning rate $0.001$, linearly warmed up over $10$ million images and then held constant. The training budget is $200$ million images. EMA uses a target half-life of $500{,}000$ images with ramp-up.

\paragraph{Data and corruption.}
We use the 50,000 CIFAR-10 training images without class labels. Image augmentation is applied before clipping and rounding the pixel values to integers in $[0,255]$. For each image, we draw one perturbation time $t\in[\varepsilon,1-\varepsilon]$, with $\varepsilon=10^{-5}$, and independently sample the coordinatewise perturbations
\[
\lrv_t^i\mid\data\sim\bin(\data^i,t),\quad
\possrv_t^i\sim\pos(\mdata t),\quad
\pp_t:=\data-\lrv_t+\possrv_t.
\]
The time distribution is specified below. The preconditioning uses the fixed scalar data moments $\mdata=120.7$ and $\vdata=4115.2$.

\paragraph{Denoisers and objective.}
The shared network has three input channels and six output channels, split into the  predictions $(\denbin,\denpos)$. We use the centered affine preconditioning in \cref{tab:edm_generic}, with the target moments in \cref{tab:edm_denoisers}, and provide $-\log(1-t+\varepsilon)$ as the network's time input. Writing $C_N(t)$ and $C_L(t)$ for the two residual variances in \cref{eq:cout}, the per-image loss is
\begin{equation}
\label{eq:cifar10_training_loss}
\ell_t=\frac{1}{2\dd}\left(
\frac{\|\denpos(\pp_t,t)-\possrv_t\|_2^2}{C_N(t)+10^{-8}}
+\frac{\|\denbin(\pp_t,t)-\mathrm L_t\|_2^2}{C_L(t)+10^{-8}}
\right),\qquad \dd=3\cdot32^2.
\end{equation}

\paragraph{Training-time distribution.}
As in EDM, we allocate more training examples to noise levels at which the neural denoiser captures structure beyond the affine predictor. We first train a pilot for 5 million images with $t$ uniform on $[\varepsilon,1-\varepsilon]$. We then evaluate its EMA denoisers on 4,096  training images, at 81 times $t_0<\cdots<t_{80}$ uniformly spaced in $\log(t/(1-t))$ between the two endpoints.

The affine baseline is obtained by setting the learned network residual to zero in \cref{eq:edm_scale_app}. Let $\widehat\ell_{\mathrm{aff}}(t_j)$ and $\widehat\ell_{\mathrm{pilot}}(t_j)$ denote its empirical loss and that of the pilot, respectively, both evaluated using \cref{eq:cifar10_training_loss}. Define the positive improvement
\[
g_j=\max\!\left\{\widehat\ell_{\mathrm{aff}}(t_j)-\widehat\ell_{\mathrm{pilot}}(t_j),\,0\right\},\qquad
g(t)=\frac{g_j+g_{j+1}}{2}\quad\text{for }t\in[t_j,t_{j+1}).
\]
For the remaining training, we sample times from a fixed mixture: $10\%$ from a uniform proposal on $[\varepsilon,1-\varepsilon]$ and $90\%$ from a targeted proposal proportional to $g(t)$. The targeted proposal favors noise levels where the pilot improves more over the affine baseline.

\subsection{Sampling}
\label{app:cifar10_sampling}

The CIFAR-10 model uses $\TT=1$ and the scalar schedules, for each $i\in [\dd=3\cdot32^2]$
\[
\Rdi_t^i=t,\qquad \Ru_t^i=\mdata t,\qquad \mdata=120.7.
\]
Sampling starts at $t=1$ (pure noise) and decreases to $t=\varepsilon$, with $\varepsilon=10^{-5}$.

At state $\x$ and time $t$, a single network evaluation produces the preconditioned denoiser estimates $\denbin(\x,t)\approx\EE[\mathrm L_t\mid\pp_t=\x]$ and $\denpos(\x,t)\approx\EE[\possrv_t\mid\pp_t=\x]$. The network's time input is transformed as $-\log(1-t+\varepsilon)$, an adaptation of EDM's time embedding. 

Sampling uses either tau-leaping or Euler as in \Cref{alg:tau} and \Cref{alg:euler}, respectively.

\subsection{Sampling ablations}
\label{app:cifar10_sampling_ablations}

\paragraph{Marginal-preserving rates.}
We augment tau-leaping with the forward-plus-reverse generator correction of \citet{DBLP:journals/corr/abs-2205-14987}, with unit strength. These additional rates preserve the instantaneous marginal distribution when the conditional means are exact. In \cref{alg:tau}, replace the rates at $0<t<1$ by
\begin{equation}
\label{eq:cifar10_campbell_rates}
\lambda_{+,\mathrm C}^i=\mdata+\frac{2\,\denbin^i(\x,t)}{t},\qquad
\lambda_{-,\mathrm C}^i=\frac{\x^i-\denpos^i(\x,t)}{1-t}+\frac{2\,\denpos^i(\x,t)}{t}.
\end{equation}
At $t=1$, the implementation replaces $\frac{\x^i-\denpos^i(\x,t)}{1-t}$ by $\x^i$. The Poisson draws and state update are unchanged. The modified rates reuse the same denoiser predictions, so there is no additional network evaluation.

\paragraph{Results.}
\Cref{tab:cifar10_sampling_ablations} varies the sampler and number of steps while keeping the trained model fixed. At 256 steps, tau-leaping obtains a substantially lower FID than Euler. At 1024 and 2048 steps, tau-leaping again obtains a slightly lower FID. Adding marginal-preserving rates worsens FID at every evaluated step count.

\begin{table}[htbp]
\centering
\caption{CIFAR-10 FID at different numbers of sampling steps (NFE).}
\label{tab:cifar10_sampling_ablations}
\small
\renewcommand{\arraystretch}{1.15}
\begin{tabular}{lrrrr}
\toprule
Sampler & 256 & 512 & 1024 & 2048 \\
\midrule
Euler & 7.07 & 2.22 & 2.15 & 2.09 \\
tau-leaping & 3.27 & 2.29 & 2.11 & \textbf{2.07} \\
\midrule
tau-leaping + marginal-preserving rates & 5.46 & 2.95 & 2.34 & 2.20 \\
\bottomrule
\end{tabular}
\end{table}

\section{Sudoku implementation details}
\label{app:sudoku_details}

\subsection{Training configuration}
\label{app:sudoku_training}
All four variants use an eight-block DiT with width $512$, MLP width $2048$, eight attention heads, and dropout $0.1$. We use AdamW, batch size $256$, peak learning rate $3\times10^{-4}$ with warmup, gradient-norm clipping at $1$, and EMA. MSE and CE share the following settings within each digit encoding.
\begin{center}
\begin{tabular}{lrll}
\toprule
Setting & Updates & Learning-rate schedule & Weight decay \\
\midrule
Original digits & $10{,}000$ & Warmup, then cosine decay & $0.05$ \\
Shuffled digits & $20{,}000$ & Warmup, then constant & $0$ \\
\bottomrule
\end{tabular}
\end{center}
Training times are sampled uniformly from $[0.001,0.999]$. Digit shuffling draws a fresh independent permutation for each board presentation, applied jointly to the puzzle and solution.

\subsection{Posterior parametrization}
\label{app:sudoku_posterior}

Let $\mathcal C$ denote the puzzle context. For each cell $i$, the cross-entropy network outputs nine logits whose softmax approximates the clean-digit posterior,
\begin{equation}
\label{eq:sudoku_clean_posterior}
p_\theta^i(k\mid\x,t,\mathcal C)
\approx \PP(\data^i=k\mid\pp_t=\x,\mathcal C),
\qquad k\in\{1,\ldots,9\}.
\end{equation}
The conditioning includes the entire perturbed solution and puzzle. The EDM-MSE output scalings are not applied to the logits.

We use the Binomial-Poisson perturbation, $\pp_t^i=\data^i-\lrv_t^i+\possrv_t^i$, with $\lrv_t^i
|\data^i\sim\bin(\data^i,\Rdi_t^i)$ and independent $\possrv_t^i\sim\pos(\Ru_t^i)$, with $\Rdi_t^i=t$ and $\Ru_t^i=5t$ for all $i$. 

\paragraph{Posterior averaging.}

As shown in \cref{lem:cor_cond_exp_binpos_posterior}, we have 
\begin{equation}
\label{eq:latent_Den_beta}
\den^i(\x,t)=\sum_{k=1}^9\trans_{\data^i|\pp_t=\x}(k)\beta_t(\x^i,k)
\end{equation}
where
\begin{equation}
\label{eq:def_beta}
\beta_t(\x^i,k):=\sum_{b=0}^{\min(\x^i,k)}b\, \frac{\binn_{k,\Rd_t^i}(b)\poss_{\Ru_t^i}(\x^i-b)}{(\binn_{k,\Rd_t^i}\ast \poss_{\Ru_t^i})(\x^i)}.
\end{equation}
Hence, replacing $\trans_{\data^i|\pp_t=\x}(k)$ by the approximation $p_\theta^i(k\mid\x,t,\mathcal C)$, and using the relations \cref{eq:latents_data_latents}-\cref{eq:latents_data_latents2}, we have
\begin{equation}
\label{eq:sudoku_posterior_nl_means}
\begin{aligned}
\denpos^i(\x,t;\mathcal C)
&\approx\sum_{k=1}^{9}p_\theta^i(k\mid\x,t,\mathcal C)
\bigl[\x^i-\beta_t(\x^i,k)\bigr],\\
\denbin^i(\x,t;\mathcal C)
&\approx\sum_{k=1}^{9}p_\theta^i(k\mid\x,t,\mathcal C)
\bigl[k-\beta_t(\x^i,k)\bigr].
\end{aligned}
\end{equation}

\paragraph{Sampling.}
The reverse-time jump rates evaluated at forward noise level $0<t\le1$ are
\begin{equation}
\label{eq:sudoku_posterior_rates}
\lambda_+^i(\x,t,\mathcal C)=\frac{\denbin^i(\x,t;\mathcal C)}{t},
\qquad
\lambda_-^i(\x,t,\mathcal C)=\frac{\denpos^i(\x,t;\mathcal C)}{t}.
\end{equation}
These rates enter the tau-leaping update in \cref{alg:tau}.

\section{Related works}
\label{sec:related}

\paragraph{Diffusion models: Continuous data}

Diffusion models in discrete time were first introduced in \citet{sohl2015deep} with an evidence lower bound (ELBO) training objective. The method was then further developed in \citet{ho2020denoising, song2021denoising}, where the connection to denoising score matching was made, with additionally better parameterizations and sampling, which ultimately led to a state-of-the-art (at the time) FID score of 3.17 on CIFAR-10. The  progression of diffusion models to continuous-time as reverse-time stochastic differential equations was developed in \citet{song2021scorebased}, which also unified diffusion models with the score matching with Langevin dynamics framework \citep{song2019generative}. \citet{song2021scorebased} wrote the training objective as denoising score matching, and lowered the FID score on CIFAR-10 to 2.20. %
In \cite{karras2022elucidating}, the design space for training the denoiser was explored systematically, which further brought down the FID to  1.97 on CIFAR-10. The EDM framework developed in \cite{karras2022elucidating} is regularly used in state-of-the-art results for image generation. 
\paragraph{Discrete diffusion models: Categorical data} Repeating history, the first diffusion models for categorical data were developed in discrete time \citep{hoogeboom2021argmax,austin2021structured}, while the   continuous time formulation was later developed in \cite{DBLP:journals/corr/abs-2205-14987}. As for continuous data, the early versions of discrete diffusion models were trained using ELBO-based objectives, specifically training the posterior distribution (not the denoiser). Score-based objectives were developed in \cite{meng2022concrete, lou2024discrete}, and rate-based training was developed in \cite{holderrieth2025generator}. 

\paragraph{Discrete diffusion models: Ordinal data}
Already at the origin of diffusion models, ordinal data appeared in the form of \emph{binary} data \citep{sohl2015deep}, with later developments in \cite{bach2025sampling, benedict2023recfusion, pham2025discrete}. In some biological applications, e.g., gene expression counts in single-cell RNA sequencing, recent works   \citep{zhang2026lingshu, wang2026scdiva,bhattacharya2026discrete,wang2026x} argued that it is beneficial to develop diffusion models that act directly on the discrete data, but these works neglect the ordinal structure and use categorical discrete diffusion models. 

The theory and practice for the  ordinal data considered in this work, namely (subsets of) $\N^{\dd}$ or $\Z^{\dd}$, has only recently started to evolve. Let us begin with data on $\N^{\dd}$.  One approach, developed in \cite{chen2023learning, montanari2023sampling, DBLP:journals/corr/abs-2505-05082}, learns both a denoiser and a rate via a discrete Tweedie's formula, but using perturbation of the form $\data\mapsto \pp_t\sim \pos(t\,\data)$. This type of perturbation can only recover the clean data in the limit $\TT\to\infty$, in contrast to our approach which recovers $\data$ at a final time. In addition, this perturbation can easily overshoot the effective support of the data. Alternative approaches \citep{santos2023blackout, santos2025discrete, shenfeld2026binomial,cheng2026variable}
 use only Binomial noise, which suffers from being a uni-directional perturbation. Another uni-directional perturbation was developed in \citet{campbell2023transdimensional}. In \citet{winkler2024bridging, wei2026flow}, an up-down perturbation based on Binomial noise is developed. However, this formulation does not allow for a direct training of a denoiser.  In \citet{fishman2026count}, the Poisson-Poisson perturbation \eqref{eq:perturb_pp_intro} is used, but the denoiser formulation of \cref{thm:main_p_intro} is lacking, which leads \citet{fishman2026count} to a \emph{distributional} training objective. Similarly, the concurrent work~\citet{wei2026stochastic} considers a Binomial-Poisson perturbation~\eqref{eq:perturb_intro_thm_main}, but is also lacking a denoiser representation and trains using the generator matching framework~\citep{holderrieth2025generator}. To summarize, our work is the first to train a diffusion model for discrete ordinal data which allows for up-down perturbations, while training only denoisers. 
\end{document}

%% file: math_commands.tex
\usepackage{amsmath,amsfonts,bm}

\def\eqref#1{equation~\ref{#1}}

\def\1{\bm{1}}

\def\rd{{\textnormal{d}}}

\def\ru{{\textnormal{u}}}

\DeclareMathAlphabet{\mathsfit}{\encodingdefault}{\sfdefault}{m}{sl}
\SetMathAlphabet{\mathsfit}{bold}{\encodingdefault}{\sfdefault}{bx}{n}

\newcommand{\R}{\mathbb{R}}